\mathchardef\mhyphen="2D
\newcommand{\href}[2]{{#2}}
\newcommand{\Sec}[1]{\hyperref[sec:#1]{Section~\ref*{sec:#1}}} 
\newcommand{\App}[1]{\hyperref[sec:#1]{Appendix~\ref*{sec:#1}}} 
\newcommand{\Supp}[1]{\hyperref[sec:#1]{Supplement~\ref*{sec:#1}}} 
\newcommand{\Eqn}[1]{\hyperref[eq:#1]{{\rm (\ref*{eq:#1})}}} 
\newcommand{\Part}[1]{\hyperref[part:#1]{(\ref*{part:#1})}} 
\newcommand{\Fig}[1]{\hyperref[fig:#1]{Figure~\ref*{fig:#1}}} 
\newcommand{\Tab}[1]{\hyperref[tab:#1]{Table~\ref*{tab:#1}}} 
\newcommand{\Thm}[1]{\hyperref[thm:#1]{Theorem~\ref*{thm:#1}}} 
\newcommand{\Lem}[1]{\hyperref[lem:#1]{Lemma~\ref*{lem:#1}}} 
\newcommand{\Prop}[1]{\hyperref[prop:#1]{Proposition~\ref*{prop:#1}}} 
\newcommand{\Cor}[1]{\hyperref[cor:#1]{Corollary~\ref*{cor:#1}}} 
\newcommand{\Def}[1]{\hyperref[def:#1]{Definition~\ref*{def:#1}}} 
\newcommand{\Alg}[1]{\hyperref[alg:#1]{Algorithm~\ref*{alg:#1}}} 
\newcommand{\Ex}[1]{\hyperref[ex:#1]{Example~\ref*{ex:#1}}} 
\newcommand{\As}[1]{\hyperref[as:#1]{Assumption~{\rm\ref*{as:#1}}}} 
\newcommand{\Reg}[1]{\hyperref[as:#1]{Condition~\ref*{reg:#1}}} 
\newcommand{\AlgLine}[2]{\hyperref[alg:#1]{line~\ref*{line:#2} of Algorithm~\ref*{alg:#1}}}
\newcommand{\AlgLines}[3]{\hyperref[alg:#1]{lines~\ref*{line:#2}--\ref*{line:#3} of Algorithm~\ref*{alg:#1}}}
\newcommand{\Real}{\mathbb{R}}
\newtheorem{assumption}{Assumption}
\newcommand{\Kron}{\otimes} 
\newcommand\extrafootertext[1]{%
	\bgroup
	\renewcommand\thefootnote{\fnsymbol{footnote}}%
	\renewcommand\thempfootnote{\fnsymbol{mpfootnote}}%
	\footnotetext[1]{#1}%
	\egroup
}
\begin{document}

\title{Inertial Quadratic Majorization Minimization with Application to Kernel Regularized Learning}

\author{\name Qiang Heng \email heng.qiang6@gmail.com \\
     \addr School of Mathematics\\
       Southeast University\\
       Nanjing, China \\
\name Caixing Wang\footnote{} \email wangcaixing96@gmail.com \\
     \addr Department of Statistics\\
       The Chinese University of Hong Kong\\
       Shatin, New Territory, Hong Kong \\
       }
       
\extrafootertext{Corresponding author.}

\editor{My editor}

\maketitle

\begin{abstract}
First-order methods in convex optimization offer low per-iteration cost but often suffer from slow convergence, while second-order methods achieve fast local convergence at the expense of costly Hessian inversions. In this paper, we highlight a middle ground: minimizing a quadratic majorant with fixed curvature at each iteration. This strategy strikes a balance between per-iteration cost and convergence speed, and crucially allows the reuse of matrix decompositions, such as Cholesky or spectral decompositions, across iterations and varying regularization parameters. We introduce the Quadratic Majorization Minimization with Extrapolation (QMME) framework and establish its sequential convergence properties under standard assumptions. The new perspective of our analysis is to center the arguments around the induced norm of the curvature matrix $H$. To demonstrate practical advantages, we apply QMME to large-scale kernel regularized learning problems. In particular, we propose a novel Sylvester equation modelling technique for kernel multinomial regression. In Julia-based experiments, QMME compares favorably against various established first- and second-order methods. Furthermore, we demonstrate that our algorithms complement existing kernel approximation techniques through more efficiently handling sketching matrices with large projection dimensions. Our numerical experiments and real data analysis are available and fully reproducible at \url{https://github.com/qhengncsu/QMME.jl}.

\end{abstract}

\begin{keywords}
Quadratic Majorization, Momentum Extrapolation, Kernel Methods, Numerical Linear Algebra, Nyström Approximation
\end{keywords}

\section{Introduction}
In this article, we are primarily concerned with minimizing a smooth and convex function $f: \Real^n \rightarrow \Real$ for which the following quadratic majorization is globally available:
\begin{equation}\label{eq:quadmajor}
f(y) \le f(x) + \nabla f(x)^\top (y-x) + \frac{1}{2}(y-x)^\top H (y-x),
\end{equation}
where $H\in \Real^{n\times n}$ is a positive definite symmetric matrix. 

Distinct from the Newton-Raphson algorithm, $H$ here is a constant matrix and does not depend on the current iterate. If we make the further assumption that $f$ is twice-differentiable, then the matrix $H$ is an upper bound of the actual Hessian matrix $\nabla^2 f(x)$. Conversely, if we have $\nabla^2f \preceq H$, meaning $H-\nabla^2f$ is always positive semidefinite, then the quadratic upper bound \Eqn{quadmajor} follows from the integral
\begin{eqnarray*}
f(y) & = & f(x) + \nabla f(x)^\top(y-x) \nonumber \\
&    & +\,(y-x)^\top \! \int_0^1 \nabla ^2f[x+t(y-x)](1-t)\,dt\,(y-x). \nonumber \\
\end{eqnarray*}
In particular, if $\nabla f$ is $L$-Lipschitz and $H = L I_n$ ($I_n$ is the identity matrix), then \Eqn{quadmajor} represents the well-known quadratic upper bound for functions with Lipschitz continuous gradient.  In this paper, we are interested in the more general case where $H$ is non-diagonal and contains curvature information of $f$. Leveraging the quadratic majorization \Eqn{quadmajor}, the majorant $h(x)$ of $f(x)$ at $x=x^k$ writes as
\begin{equation}\label{eq:overallmajor}
f(x)\le h(x|x^k) = f(x^k)+ \nabla f(x^k)^\top (x-x^k) + \frac{1}{2}(x-x^k)^\top H (x-x^k).
\end{equation}
The iteration $x^{k+1} = \text{argmin}_x \;h(x|x^k)$ is an instance of majorization minorization (MM) algorithm \citep{hunter2004atutorial} that ensures monotonic descent of $f(x^k)$.

Since $H$ remains constant, matrix decompositions such as Cholesky or spectral decompositions can be recycled across all iterations. Such an MM algorithm may not achieve the superlinear rates of convergence of Newton methods \citep{Nesterov2006CubicRO,Doikov2022SuperUniversalRN}, but the computational savings from recycling the same matrix decomposition can produce highly competitive algorithms for many important problems in statistics \citep{heng2025tactics}. Since $H$ is only an approximation of the actual Hessian matrix, the quadratic MM algorithm typically requires many more iterations than Newton-Raphson, although it typically converges much faster than conventional first-order methods. Fortunately, the quadratic MM framework is also amenable to acceleration by extrapolation. We define Quadratic Majorization Minorization with Extrapolation (QMME) as the following iteration:
\begin{eqnarray}\label{eq:MMextrapolate}
y^k & = &x^k + \beta_k (x^k - x^{k-1}),\nonumber\\
x^{k+1} & = &\underset{x\in \Real^n}{\text{argmin}} \quad \langle \nabla f(y^k) , x\rangle +\frac{1}{2}\| y^k - x \|_H^2,
\end{eqnarray}
where $\beta_k\in [0,1)$ is the extrapolation coefficient and $\|x\|_H = \sqrt{x^\top Hx}$. We note that selecting an appropriate sequence of $\beta_k$ is a vital issue for the practical efficiency of QMME. 

\subsection{Contributions}
We propose the Quadratic Majorization-Minimization with Extrapolation (QMME) algorithm for convex optimization. To adapt extrapolation to the objective's local geometry while preserving global convergence, we adopt a hybrid restart strategy \citep{ODonoghue2012AdaptiveRF,Wen2018APD} that combines periodic and adaptive restarts. By establishing the descent property of a potential function similar to the one in  \citet{wen2017linear} and \citet{Wen2018APD}, we establish subsequential convergence of our algorithm when extrapolation coefficients are bounded away from 1 and prove global convergence under the Kurdyka–Łojasiewicz (KL) property \citep{bolte2007lojasiewicz,attouch2009convergence,attouch2010proximal,attouch2013convergence}. The novel aspect of our analysis is to leverage the scaled norm induced by the curvature matrix $H$. Using the same technique, we also explore connections with inertial Krasnoselskii–Mann iterations \citep{maulen2024inertial}. With a few technical variations, we generalize our algorithm and theory to the scenario of composite optimization in the appendix. We chose to focus on smooth optimization in the main paper since composite optimization deviates from the theme of recycling matrix decompositions and also kernel regularized learning. To avoid diluting the main message of the paper, we present the algorithm and theory for composite optimization as an extension.

To demonstrate QMME's efficacy, we apply it to a key problem in statistical machine learning, namely kernel regularized learning with random sketching \citep{scholkopf2002learning,mendelson2010regularization,yang2017randomized}. The examples we consider include kernel-smoothed quantile regression \citep{li2007quantile,wang2024optimal}, kernel logistic regression \citep{keerthi2005fast}, and kernel multinomial regression \citep{zhu2005kernel,karsmakers2007multi}. For each, we derive Hessian upper bounds and present normal equations for minimizing the quadratic majorant. Notably, for kernel multinomial regression, we introduce a novel Sylvester equation technique that greatly reduces per-iteration complexity. We describe multinomial regression models using both the standard parameterization (coefficient matrix $X$ has $q-1$ columns) and the full parameterization  (coefficient matrix $X$ has $q$ columns). Our MM algorithms recycle matrix factorizations (Cholesky, spectral, Schur) across iterations and sometimes across regularization parameters. For kernel quantile and logistic regression, QMME outperforms first- and second-order methods when projection dimension is sufficiently large. For kernel multinomial regression, the advantage of QMME over existing methods is decisive, especially when the number of classes $q$ is large. 

To summarize succinctly, our contributions are:
\begin{itemize}
\item Theoretically, we integrate a scaled norm analysis technique with the optimization framework of \cite{wen2017linear} and \cite{Wen2018APD} to investigate the convergence properties of a quadratic MM algorithm incorporating extrapolation. This integration allows us to establish subsequential convergence in general settings and global convergence under the Kurdyka–Łojasiewicz (KL) property. By drawing connections to inertial Krasnoselskii–Mann iterations \citep{maulen2024inertial}, we can relax the KL assumption by imposing additional restrictions on the extrapolation step size $\beta_k$. We also generalize our algorithm and theory to the case of composite optimization in the appendix.

\item Algorithmically, we develop efficient new algorithms for kernel regularized learning with random sketching that recycle matrix decompositions. By scaling more effectively to larger projection dimensions, our optimization algorithms complement existing kernel approximation techniques such as Nyström approximation \citep{williams2000using,rudi2015less}. In particular, for kernel multinomial regression, we devise a novel Sylvester equation technique that greatly reduces the normal equation dimensionality, which results in an optimization algorithm that convincingly outperforms existing alternatives. Beyond the standard parameterization of multinomial regression, we also investigate a full parameterization that eliminates the need to select a reference category while often achieving higher log-likelihoods.
\end{itemize}

\subsection{Related Works}
\paragraph{Majorization Minorization}
The Majorization-Minimization (MM) principle \citep{hunter2004atutorial,lange2000optimization} provides a flexible and powerful framework for deriving optimization algorithms, encompassing well-known approaches like the EM algorithm and forward-backward splitting \citep{bauschke2011convex} as special cases. In nonconvex multiplicative models (e.g., matrix/tensor factorization), blockwise variable updates are common, leading to the family of block MM algorithms \citep{Xu2013ABC, Hien2020AnIB, Hien2021BlockBM,  Lyu2025BlockMW}. Within MM, the idea of quadratic upper bound and matrix decomposition reuse originated in early statistics literature \citep{heiser1987correspondence, bohning1992multinomial, kiers1997weighted} and was revisited recently \citep{heng2025tactics}. While these strategies yield practically fast algorithms, their convergence properties lack rigorous examination, especially in the presence of an extrapolation technique. \citet{Robini2024TheAO} explicitly considered quadratic surrogates but focused on nonconvex objectives without incorporating extrapolation or reusing matrix decompositions.

\paragraph{Inertial Extrapolation} The idea of inertial extrapolation in optimization was first introduced by \citet{Polyak1964SomeMO}, who proposed accelerating the convergence of gradient descent by incorporating an inertial (momentum-like) term. This led to the development of the heavy-ball method, characterized by the update rule: $x^{k+1} = x^k - \alpha_k \nabla f(x^k)+ \beta_k(x^k-x^{k-1})$.  Subsequent seminal works by Nesterov \citep{Nesterov1983AMF, Nesterov2005SmoothMO, Nesterov2014IntroductoryLO} introduced the Nesterov accelerated gradient method, which differs from the heavy-ball approach by applying extrapolation before computing the gradient. The Nesterov updates are given by: $y^k = x^k + \beta_k(x^k-x^{k-1}), \; x^{k+1} = y^k - \alpha_k \nabla f(y^k)$. This method achieves an improved convergence rate $O(1/k^2)$ in function values for smooth convex functions. The same rate was later extended to composite convex optimization problems via the Fast Iterative Shrinkage-Thresholding Algorithm (FISTA) \citep{Beck2009AFI}. \citet{ODonoghue2012AdaptiveRF} demonstrated that a simple heuristic, adaptive restart, can significantly enhance the convergence of accelerated gradient methods. Later, under the assumption that the extrapolation coefficients are bounded away from 1, \citet{wen2017linear} showed that extrapolated proximal gradient algorithms always exhibit subsequential convergence, and under suitable conditions, enjoy R-linear convergence. The key principle of their analysis is the construction of a monotonically decreasing potential function. For proximal difference-of-convex problems, \cite{Wen2018APD} established global convergence of a similar algorithm under the Kurdyka–Łojasiewicz (KL) property. 

\paragraph{Kernel Regularized Learning and Computational Strategies} Kernel methods \citep{vapnik1999nature,wahba1990spline,hofmann2008kernel} have become a cornerstone of statistical machine learning, enabling the modeling of complex relationships in high-dimensional data via the kernel trick. Kernel regularized learning \citep{micchelli2005learning,mendelson2010regularization}, which entails minimizing a regularized loss function over a reproducing kernel Hilbert space (RKHS) $\mathcal{H}$ of real-valued functions $f$, has been widely applied in both classification \citep{scholkopf2002learning,steinwart2008support} and regression \citep{hastie2005elements,caponnetto2007optimal}.

Despite their flexibility, kernel methods suffer from high computational costs, primarily due to the need to compute, store, and invert large kernel matrices. In the case of kernel regularized least squares, the computational complexity scales as $\mathcal{O}(n^3)$, with storage requirements of $\mathcal{O}(n^2)$, where $n$ denotes the sample size. For general kernel regularized learning problems lacking closed-form solutions, the computational burden is even greater. To mitigate these costs, several approximation techniques have been proposed, including the Nyström method \citep{williams2000using,rudi2015less}, random features \citep{rahimi2007random,rudi2017generalization,wang2024communication}, and sketching methods \citep{yang2017randomized,lian2021distributed}. These approaches aim to reduce problem dimensionality while preserving essential properties of the kernel.

\subsection{Notation and Organization}

We denote by $\mathbb{R}^n$ the $n$-dimensional Euclidean space equipped with the inner product $\langle \cdot, \cdot \rangle$. The $\ell_2$-norm is denoted by $\|\cdot\|$. For a symmetric positive definite matrix $H$, its induced norm is defined as $\|x\|_H = \sqrt{x^\top H x}$. For any matrix $A \in \mathbb{R}^{m \times n}$, $\sigma_{\max}(A)$ and $\sigma_{\min}(A)$ represent its largest and smallest singular values, respectively. Given a non-empty closed set $\mathcal{C} \subset \mathbb{R}^n$, the distance from a point $x \in \mathbb{R}^n$ to $\mathcal{C}$ is defined as $\text{dist}(x, \mathcal{C}) = \inf_{y \in \mathcal{C}} \|x - y\|$. An extended real-valued function $h: \mathbb{R}^n \to [-\infty, \infty]$ is proper if it never attains $-\infty$ and its domain $\text{dom}\; h = \{x \in \mathbb{R}^n \mid h(x) < \infty\}$ is non-empty. A proper function is closed if it is lower semicontinuous, that is, for all $x \in \mathbb{R}^n$ and every sequence ${x^k} \to x$, we have $h(x) \leq \underset{k \to \infty}{\liminf} \;h(x^k)$. The level sets $\{x \in \mathbb{R}^n \mid h(x) \leq \alpha\}$ of a closed function are closed for all $\alpha \in \mathbb{R}$. The subdifferential of a proper closed function $h: \Real^n \rightarrow \Real\cup\{\infty\}$ at $x\in \text{dom} \;h $ is given by:
\begin{eqnarray*}
\partial h(x) & = & \{v\in \Real^n|\exists \;x^k \overset{h} \rightarrow x,\;v^k \rightarrow v, s.t. \;\underset{y \rightarrow x^k}{\liminf} \;\frac{h(y) - h(x^k) -\langle v^k, y-x^k \rangle }{\|y -x^k\| } \text{ for each} \;k \},
\end{eqnarray*}
where $x^k \overset{h} \rightarrow x$ means $x^k \rightarrow x$ and $h(x^k) \rightarrow h(x)$. When $h$ is convex, the subdifferential at $x \in \mathbb{R}^n$ reduces to
\begin{eqnarray*}
\partial h(x) & = &\{v\in \Real^n: h(u) -h(x) - \langle v, u-x \rangle\ge 0,\;\forall u \in \Real^n \}.
\end{eqnarray*}
If $h$ is continuously differentiable at $x$, then $\partial h(x) = \{ \nabla h(x) \}$. 

The remainder of the paper is organized as follows. In \Sec{algorithm}, we present our main algorithm and establish its sequential convergence properties under appropriate assumptions. The quadratic MM algorithms for solving specific instances of kernel regularized learning problems are detailed in \Sec{kernel}. In \Sec{ne}, we demonstrate the advantages of our quadratic MM algorithms over existing first- and second-order methods using carefully designed numerical experiments. A DNA codon usage real data example for kernel multinomial regression is provided in \Sec{RDA}. We conclude with a discussion in \Sec{discuss}. Additional proofs and a generalization to composite optimization are provided in the appendix.

\section{Algorithm and Convergence Analysis}\label{sec:algorithm}
\begin{algorithm}[htbp]
\caption{Quadratic Majorization Minorization with Extrapolation (QMME)}\label{alg:QMME}
\begin{algorithmic}[1]
\REQUIRE Initial iterates $x_{1}=x_0\in \Real^n$, $l=1$, maximum restart period $P$.
\FOR{$k=1,2,\dots$}
\STATE $\beta_k=\frac{l}{l+2}$.
\STATE $y^k = x^k+\beta_k(x^k-x^{k-1})$.
\STATE $x^{k+1} = y^k - H^{-1}\nabla f(y^k)$.
\STATE $l\leftarrow l+1$.
\IF{$f(x^{k+1})>f(x^{k})$ or $l=P$}
\STATE Reset $l=1$.
\ENDIF
\ENDFOR
\end{algorithmic}
\end{algorithm}
We introduce the Quadratic Majorization Minorization with Extrapolation (QMME) algorithm in \Alg{QMME}. We set the extrapolation coefficient as $\beta_k=\frac{l}{l+2}$ where $l$ starts at 1 and increases by 1 after each iteration. Whenever the function value breaks the descent property or the momentum counter $l$ hits a maximum value of $P$, we reset $l$ to 1. Thus, it should be clear from the algorithm setup that $\sup_k\beta_k \le \frac{P-1}{P+1}<1$. We start our convergence analysis with the following blanket assumptions and two important lemmas that will be used throughout this section.

\begin{assumption}\label{as:blanket}
The function $f: \Real^n\rightarrow \Real$ is convex, differentiable, and lower bounded. The majorization \Eqn{quadmajor} holds for any $x,y\in \Real^n$. The optimal set $\mathcal{X} =\{x\in \Real^n|\nabla f(x) = 0 \} $ is non-empty.
\end{assumption}
\begin{lemma}\label{lem:normequiv}
We have the following two inequalities from basic linear algebra
\begin{eqnarray*}
\sqrt{\sigma_{\min}(H)}\| x\|\le \sqrt{x^\top H x} \le \sqrt{\sigma_{\max}(H)}\| x\|,
\end{eqnarray*}
\begin{eqnarray*}
\frac{1}{\sqrt{\sigma_{\max}(H)}}\| x\|\le \sqrt{x^\top H^{-1} x} \le \frac{1}{\sqrt{\sigma_{\min}(H)}}\| x\|,
\end{eqnarray*}
where $\sigma_{\min}(H)$ and $\sigma_{\max}(H)$ are the smallest and largest singular values of $H$. 
\end{lemma}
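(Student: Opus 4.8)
The statement to prove is \Lem{normequiv}, which asserts two pairs of inequalities relating the $H$-induced norm and the $H^{-1}$-induced norm to the standard Euclidean norm. The plan is to reduce everything to the spectral theorem for the symmetric positive definite matrix $H$, since $H$ is symmetric its singular values coincide with its (positive) eigenvalues, so $\sigma_{\min}(H)$ and $\sigma_{\max}(H)$ are exactly the extreme eigenvalues of $H$.

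First I would invoke the spectral decomposition $H = Q\Lambda Q^\top$ where $Q$ is orthogonal and $\Lambda = \diag(\lambda_1,\dots,\lambda_n)$ collects the eigenvalues, which are all strictly positive by positive definiteness. Writing $z = Q^\top x$, the quadratic form becomes $x^\top H x = z^\top \Lambda z = \sum_i \lambda_i z_i^2$, and since $\|z\| = \|x\|$ (orthogonal transformations preserve the Euclidean norm), I can bound $\lambda_{\min}\|z\|^2 \le \sum_i \lambda_i z_i^2 \le \lambda_{\max}\|z\|^2$. Identifying $\lambda_{\min} = \sigma_{\min}(H)$ and $\lambda_{\max} = \sigma_{\max}(H)$ and taking square roots gives the first chain of inequalities directly.

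For the second chain, I would observe that $H^{-1}$ is also symmetric positive definite, and it shares the eigenvector basis $Q$ with $H$, having eigenvalues $1/\lambda_i$. Applying the identical Rayleigh-quotient argument to $H^{-1}$ yields $\min_i(1/\lambda_i)\|x\|^2 \le x^\top H^{-1} x \le \max_i(1/\lambda_i)\|x\|^2$. The only point requiring care is the order reversal under inversion: $\min_i (1/\lambda_i) = 1/\lambda_{\max} = 1/\sigma_{\max}(H)$ and $\max_i(1/\lambda_i) = 1/\lambda_{\min} = 1/\sigma_{\min}(H)$. Substituting and taking square roots produces the stated bounds for $\sqrt{x^\top H^{-1} x}$.

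Honestly there is no real obstacle here; the result is exactly the Rayleigh-quotient / Courant–Fischer characterization of extreme eigenvalues specialized to the two endpoints, and both inequalities follow from one diagonalization. The only thing worth stating explicitly for correctness is the identification of singular values with eigenvalues for symmetric positive definite $H$ (so that the notation $\sigma_{\min}, \sigma_{\max}$ used in the lemma matches $\lambda_{\min}, \lambda_{\max}$), and the index-flip in the inverse case. I would present the argument compactly for $H$ and then note that the $H^{-1}$ case is obtained verbatim upon replacing each $\lambda_i$ by $1/\lambda_i$ and tracking which extreme eigenvalue becomes which.
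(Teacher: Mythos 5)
Your proof is correct: the paper states this lemma without proof, treating it as a standard fact, and your spectral-decomposition/Rayleigh-quotient argument (including the identification of singular values with eigenvalues for symmetric positive definite $H$ and the index flip for $H^{-1}$) is exactly the standard justification being invoked. Nothing is missing.
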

\begin{lemma}\label{lem:Lipschitz}
The quadratic majorization \Eqn{quadmajor} implies Lipschitz continuity of $\nabla f$. 
\end{lemma}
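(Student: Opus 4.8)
The plan is to show that $\nabla f$ is Lipschitz with the explicit constant $L:=\sigma_{\max}(H)$. First I would discard the matrix structure of $H$: since $x^\top H x\le \sigma_{\max}(H)\|x\|^2$ by \Lem{normequiv}, the majorization \Eqn{quadmajor} immediately implies the scalar descent inequality
\begin{equation*}
f(y)\le f(x)+\nabla f(x)^\top(y-x)+\tfrac{L}{2}\|y-x\|^2
\end{equation*}
for all $x,y\in\Real^n$. The target is then the classical equivalence (for convex $f$) between this quadratic upper bound and $L$-Lipschitz continuity of the gradient, and the convexity supplied by \As{blanket} is exactly what makes this implication go through.

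The crux is a ``shift-and-minimize'' argument that upgrades the descent inequality to co-coercivity. For a fixed $z\in\Real^n$ I would introduce the auxiliary function $\phi_z(x)=f(x)-\langle\nabla f(z),x\rangle$. Being a convex function plus a linear term, $\phi_z$ is convex, and since $\nabla\phi_z(z)=\nabla f(z)-\nabla f(z)=0$, the point $z$ is a \emph{global} minimizer of $\phi_z$. Because $\phi_z$ differs from $f$ only by a linear term, it inherits the same quadratic upper bound. Evaluating the right-hand side of that bound at $y=x-\tfrac1L\nabla\phi_z(x)$ yields
\begin{equation*}
\phi_z(z)\le \phi_z\!\left(x-\tfrac1L\nabla\phi_z(x)\right)\le \phi_z(x)-\tfrac{1}{2L}\|\nabla\phi_z(x)\|^2,
\end{equation*}
so that $\phi_z(x)-\phi_z(z)\ge \tfrac{1}{2L}\|\nabla f(x)-\nabla f(z)\|^2$ after unfolding the definition of $\phi_z$.

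Finally I would symmetrize: writing this inequality once with the ordered pair $(x,z)$ and once with $(z,x)$, unfolding $\phi_z$ in each, and adding the two, the function-value terms cancel and leave the co-coercivity estimate
\begin{equation*}
\langle\nabla f(x)-\nabla f(z),\,x-z\rangle\ge \tfrac{1}{L}\|\nabla f(x)-\nabla f(z)\|^2 .
\end{equation*}
A single application of Cauchy–Schwarz to the left-hand side then gives $\|\nabla f(x)-\nabla f(z)\|\le L\|x-z\|$, which is the claim. The only genuinely delicate point is that convexity cannot be dropped: without it, $z$ need not minimize $\phi_z$, and in fact a quadratic upper bound alone does not force the gradient to be Lipschitz — for example $f(x)=\tfrac{L}{2}x^2-e^{x}$ obeys \Eqn{quadmajor} with $H=LI$ (equivalently, $\tfrac{L}{2}x^2-f(x)=e^x$ is convex) yet has a non-Lipschitz gradient $\nabla f(x)=Lx-e^x$. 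Hence the substance of the proof lies precisely in exploiting the convexity of \As{blanket} through the minimizer trick, after which the conclusion is routine.
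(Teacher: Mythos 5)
Your proof is correct and follows the same route as the paper: bound $\|y-x\|_H^2$ by $\sigma_{\max}(H)\|y-x\|^2$ to obtain the Euclidean descent inequality with $L=\sigma_{\max}(H)$, then invoke the equivalence (for convex $f$) between that inequality and $L$-Lipschitz continuity of $\nabla f$. The only difference is that the paper cites this equivalence as a standard result, whereas you prove it in full via the shift-and-minimize/co-coercivity argument (the same device the paper later deploys in \Lem{keylemma1}) and additionally supply a counterexample showing the convexity hypothesis cannot be dropped.
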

\begin{proof}
 A standard result is that for a convex and differentiable function, the gradient is Lipschitz continuous with constant $L$ if and only if: 
\begin{eqnarray*}
 \forall \;\; x,y\in \Real^n, &&f(y) \le f(x) + \nabla f(x)^\top (y-x) + \frac{L}{2}\|y-x\|^2. 
\end{eqnarray*}
 Due to the quadratic majorization, we have
\begin{eqnarray*}
 f(y) & \le & f(x) + \nabla f(x)^\top (y-x) + \frac{1}{2}\|y-x\|_H^2 \\
 & \le & f(x) + \nabla f(x)^\top (y-x) + \frac{1}{2}\sigma_{\max}(H)\|y-x\|^2.
\end{eqnarray*}
Thus $\nabla f$ is $\sigma_{\max}(H)$-Lipschitz. 
\end{proof}

\begin{remark}
\Lem{normequiv} establishes the equivalence between the Euclidean norm $\|\cdot\|$ and the scaled norm $\|\cdot\|_H$. Convergence in either norm implies convergence in the other. \Lem{Lipschitz} also admits a converse statement: for any convex function with $L$-Lipschitz continuous gradient, the quadratic majorization \Eqn{quadmajor} holds at least for $H = L I_n$.    
\end{remark}

Under \As{blanket}, we next show that every convergent subsequence of the sequence generated by \Alg{QMME} converges to a stationary (and consequently optimal) point of $f$. 
\begin{lemma}\label{lem:Hmajorize}
The function $f$ can be written as the difference of two functions $f_1$ and $f_2$, where $f_1$ and $f_2$ satisfies the following conditions:\\
(i) $f_1$ is convex. \\
(ii) For any $x,y\in \Real^n$, we have $f_1(y) \le f_1(x) + \langle \nabla f_1(x), y - x \rangle + \frac{1}{2}\| y-x\|_H^2$. \\
(iii) For any $x, y\in \Real^n$, we have $f_2(y) \le f_2(x) + \langle \nabla f_2(x), y - x \rangle + \frac{1}{2}\| y-x\|_{\epsilon H}^2$ where $0<\epsilon < 1$. 
\end{lemma}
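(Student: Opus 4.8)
The plan is to produce an explicit difference-of-convex splitting by transferring a fraction of the curvature of $f$ onto the reference quadratic $\tfrac12\|\cdot\|_H^2$. Concretely, for the given $\epsilon\in(0,1)$ I would set
\[
f_1(x) = (1-\epsilon)f(x) + \tfrac{\epsilon}{2}\|x\|_H^2,\qquad
f_2(x) = \tfrac{\epsilon}{2}\|x\|_H^2 - \epsilon f(x),
\]
so that $f_1-f_2 = (1-\epsilon)f + \epsilon f = f$ by a one-line cancellation. It then remains to verify the three properties. Condition (i) is immediate: $f_1$ is a nonnegative combination of the convex function $f$ (from \As{blanket}) and the convex quadratic $\tfrac{\epsilon}{2}\|\cdot\|_H^2$ (recall $H\succ0$), hence convex.

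For (ii) I would expand $f_1(y)$ using two ingredients: the quadratic majorization \Eqn{quadmajor} applied to the $f$-part, and the \emph{exact} second-order identity $\tfrac{\epsilon}{2}\|y\|_H^2 = \tfrac{\epsilon}{2}\|x\|_H^2 + \langle \epsilon Hx, y-x\rangle + \tfrac{\epsilon}{2}\|y-x\|_H^2$ for the quadratic part. Adding the two and collecting terms, the first-order contributions assemble into $\langle \nabla f_1(x), y-x\rangle$ with $\nabla f_1(x) = (1-\epsilon)\nabla f(x) + \epsilon Hx$, while the second-order contributions combine as $\tfrac{1-\epsilon}{2}\|y-x\|_H^2 + \tfrac{\epsilon}{2}\|y-x\|_H^2 = \tfrac12\|y-x\|_H^2$, which is exactly the bound required in (ii).

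Condition (iii) is where the only genuine subtlety lies, and I expect it to be the main obstacle. Because $f_2$ contains the term $-\epsilon f$, I cannot feed it the upper majorization of $f$: negating an upper bound produces a lower bound, which points the wrong way. Instead I would bound $-\epsilon f(y)$ from above using the first-order convexity inequality $f(y)\ge f(x) + \langle \nabla f(x), y-x\rangle$ (valid by \As{blanket}), which gives $-\epsilon f(y)\le -\epsilon f(x) - \epsilon\langle \nabla f(x), y-x\rangle$. Combining this with the same exact expansion of $\tfrac{\epsilon}{2}\|\cdot\|_H^2$ yields $f_2(y)\le f_2(x) + \langle \nabla f_2(x), y-x\rangle + \tfrac{\epsilon}{2}\|y-x\|_H^2$ with $\nabla f_2(x) = \epsilon Hx - \epsilon\nabla f(x)$, and since $\tfrac{\epsilon}{2}\|y-x\|_H^2 = \tfrac12\|y-x\|_{\epsilon H}^2$ this is precisely (iii).

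The essential point to get right is therefore the direction of the inequalities: the majorization \Eqn{quadmajor} drives (ii), whereas plain first-order convexity drives (iii). It is worth recording that $f_2$ is in fact itself convex (the majorization of $f$ certifies $f_2(y)\ge f_2(x)+\langle\nabla f_2(x),y-x\rangle$ by the same two-ingredient computation, or informally because its curvature is $\epsilon(H-\nabla^2 f)\succeq0$ when $f$ is twice differentiable), so the construction realizes a true difference-of-convex splitting in which the subtracted part carries strictly smaller curvature $\epsilon H$ — exactly the structure to be exploited in the potential-function analysis. Finally, the factor $(1-\epsilon)$ multiplying $f$ in $f_1$ is not cosmetic: it is what prevents the curvature of $f_1$ from exceeding $H$ after the quadratic is added, and any attempt to insert $f$ unscaled (for instance $f_1=f$, $f_2=0$, or $f_1=f+g$, $f_2=g$) fails to satisfy (ii) for general $f$.
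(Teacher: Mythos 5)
Your construction is correct, but it takes a genuinely different (and much heavier) route than the paper, whose entire proof is to take $f_1=f$ and $f_2=0$: condition (ii) for $f_1=f$ is then literally the blanket quadratic majorization \Eqn{quadmajor}, and condition (iii) for $f_2=0$ reads $0\le 0+0+\frac{1}{2}\|y-x\|_{\epsilon H}^2$, which is trivially true. Your splitting $f_1=(1-\epsilon)f+\frac{\epsilon}{2}\|\cdot\|_H^2$, $f_2=\frac{\epsilon}{2}\|\cdot\|_H^2-\epsilon f$ does verify all three conditions: the cancellation $f_1-f_2=f$ is right, the directional bookkeeping (majorization for (ii), plain first-order convexity for (iii)) is handled correctly, and $\frac{\epsilon}{2}\|y-x\|_H^2=\frac{1}{2}\|y-x\|_{\epsilon H}^2$. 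What your version buys is a nonzero $f_2$ that genuinely carries curvature $\epsilon H$, which is closer in spirit to the difference-of-convex setting of \cite{Wen2018APD} that this lemma is imitating; what it costs is that the lemma no longer has a one-line proof. However, your closing claim that the choice $f_1=f$, $f_2=0$ ``fails to satisfy (ii) for general $f$'' is simply wrong: (ii) for $f_1=f$ is exactly \Eqn{quadmajor}, which holds by \As{blanket}. Indeed your own verification of (ii) invokes \Eqn{quadmajor} for the $f$-part, and if $(1-\epsilon)f+\frac{\epsilon}{2}\|\cdot\|_H^2$ satisfies (ii) with total curvature $(1-\epsilon)H+\epsilon H=H$, then a fortiori $f$ alone does. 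So the ``subtlety'' you flag in (iii) is an artifact of your decomposition, not of the lemma.
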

\begin{proof}
Straightforward by letting $f_1 = f$ and $f_2 = 0$.
\end{proof}

\begin{lemma}\label{lem:strongconvexity}
We have for any $z\in \Real^n$,
\begin{eqnarray}\label{eq:strongconvexity}
&& \langle \nabla f(y^k),x^{k+1}\rangle + \frac{1}{2} \| y^k - x^{k+1} \|_H^2 +\frac{1}{2} \| z - x^{k+1} \|_H^2    \nonumber\\& = & \langle \nabla f(y^k) , z\rangle +\frac{1}{2}\| y^k - z \|_H^2.
\end{eqnarray}
\end{lemma}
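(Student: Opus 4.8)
The plan is to recognize both sides of \Eqn{strongconvexity} as values of a single strongly convex quadratic, and to exploit the fact that $x^{k+1}$ is exactly its minimizer. I would introduce the objective appearing in the QMME update \Eqn{MMextrapolate},
\[
\phi(x) = \langle \nabla f(y^k), x\rangle + \tfrac{1}{2}\| y^k - x\|_H^2 ,
\]
and first verify the optimality characterization of the iterate. Since $H$ is symmetric positive definite, $\phi$ is a strongly convex quadratic with constant Hessian $H$, and a direct gradient computation gives $\nabla \phi(x) = \nabla f(y^k) + H(x - y^k)$. Setting this to zero yields the unique minimizer $x = y^k - H^{-1}\nabla f(y^k)$, which is precisely the update performed in \Alg{QMME}. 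Hence $\nabla \phi(x^{k+1}) = 0$.

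The crux of the argument is that $\phi$ is \emph{exactly} quadratic, so its second-order Taylor expansion about $x^{k+1}$ is an identity rather than an inequality. For any $z \in \Real^n$,
\[
\phi(z) = \phi(x^{k+1}) + \langle \nabla \phi(x^{k+1}), z - x^{k+1}\rangle + \tfrac{1}{2}\| z - x^{k+1}\|_H^2 .
\]
Substituting $\nabla \phi(x^{k+1}) = 0$ collapses the middle term, leaving $\phi(z) = \phi(x^{k+1}) + \tfrac{1}{2}\| z - x^{k+1}\|_H^2$. Writing out $\phi(z)$ and $\phi(x^{k+1})$ according to the definition above and rearranging reproduces \Eqn{strongconvexity} term for term.

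I anticipate no genuine obstacle: the statement is an equality rather than the one-sided strong-convexity inequality one obtains for a general $H$-strongly convex function, and this equality is guaranteed precisely because $\phi$ has constant Hessian $H$. The only points requiring care are correctly identifying $x^{k+1}$ with the minimizer of $\phi$ and tracking the symmetry of $H$ when matching cross terms. As a robustness check I would also confirm the identity by brute-force expansion, substituting the rearranged optimality condition $\nabla f(y^k) = H(y^k - x^{k+1})$ into both sides; symmetry of $H$ makes the terms $z^\top H x^{k+1}$ and $(x^{k+1})^\top H z$ coincide, and both sides then reduce to $\tfrac{1}{2}(y^k)^\top H y^k + \tfrac{1}{2}z^\top H z - z^\top H x^{k+1}$, confirming the claim.
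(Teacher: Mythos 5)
Your proof is correct and is essentially the same argument as the paper's: both hinge on the optimality condition $\nabla f(y^k) + H(x^{k+1}-y^k)=0$ together with the exact expansion of $\tfrac{1}{2}\|y^k-z\|_H^2$ about $x^{k+1}$, which you package as the (exact) second-order Taylor identity for the quadratic surrogate while the paper writes out the cross term and substitutes directly. The Taylor/minimizer framing is a slightly cleaner way to see why the statement is an equality, but the underlying computation is identical.
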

\begin{proof}
It follows from basic linear algebra that
\begin{eqnarray}\label{eq:expand}
\frac{1}{2}\|y^k -z\|_H^2 & = & \frac{1}{2}\|y^k - x^{k+1}+x^{k+1}-z\|_H^2\nonumber \\
 & = & \frac{1}{2}\| y^k - x^{k+1}\|_H^2 + \langle H (y^k -x^{k+1}), x^{k+1} -z \rangle + \frac{1}{2}\| x^{k+1}-z\|_H^2,
\end{eqnarray}
Due to the optimality condition of the minimization problem \Eqn{MMextrapolate}, we have
\begin{eqnarray*}
\nabla f (y^k) + H(x^{k+1} - y^k) = 0,
\end{eqnarray*}
Plug in $\nabla f (y^k) = H(y^k-x^{k+1})$ into \Eqn{expand}, we have
\begin{eqnarray*}\label{eq:rearrange}
\frac{1}{2}\|y^k -z\|_H^2 = \langle - \nabla  f (y^k), z - x^{k+1}\rangle + \frac{1}{2} \| y^k - x^{k+1} \|_H^2 + \frac{1}{2}\| x^{k+1}-z\|_H^2.
\end{eqnarray*}
Moving $\langle - \nabla  f (y^k), z\rangle$ to the left gives us \Eqn{strongconvexity}.
\end{proof}

\begin{lemma}\label{lem:aux}
Suppose that $\frac{1+\epsilon^2}{2}\beta_k^2 <\frac{1}{2}$ for all $k$, the auxillary sequence $E_k = f(x^k)+\frac{1}{2} \|x^k - x^{k-1}\|_H^2$ is non-increasing and convergent.
\end{lemma}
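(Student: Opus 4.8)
The plan is to reduce everything to a single one-step estimate of the form $E_{k+1} \le E_k - c_k\,\|x^k - x^{k-1}\|_H^2$ with $c_k \ge 0$, since this makes $E_k$ non-increasing, and convergence then follows from boundedness below. The lower bound is immediate: $E_k = f(x^k) + \tfrac12\|x^k - x^{k-1}\|_H^2 \ge f(x^k) \ge \inf_{\Real^n} f > -\infty$ by \As{blanket}, and a non-increasing sequence bounded below converges. So the entire task is to produce the recursion.

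To build it I would combine two inequalities. \emph{Sufficient decrease along the majorization step.} Applying part (ii) of \Lem{Hmajorize} with $f_1=f$ at the pair $(y^k,x^{k+1})$ and substituting the optimality relation $\nabla f(y^k) = H(y^k - x^{k+1})$ coming from \Eqn{MMextrapolate}, the linear and quadratic pieces combine to give $f(x^{k+1}) \le f(y^k) - \tfrac12\|y^k - x^{k+1}\|_H^2$. \emph{Control of the extrapolation.} To return from $y^k$ to $x^k$, I would bound $f(y^k)$ from above: using convexity of $f_1$ together with the $\epsilon H$-majorization (iii) of $f_2$ at $(y^k,x^k)$ yields $f(y^k) \le f(x^k) + \langle \nabla f(y^k), y^k - x^k\rangle + \tfrac{\epsilon}{2}\|y^k - x^k\|_H^2$. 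For the trivial decomposition $f_1=f$, $f_2=0$ actually used in \Lem{Hmajorize} this last term is vacuous and pure convexity suffices, but carrying the $\epsilon H$ slack keeps the argument aligned with the composite extension.

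The heart of the proof is the algebraic combination. Adding the two inequalities and inserting $\nabla f(y^k) = H(y^k - x^{k+1})$ and $y^k - x^k = \beta_k(x^k - x^{k-1})$ leaves a single cross term $\langle H(y^k - x^{k+1}),\, y^k - x^k\rangle$. I would reshape it by splitting $\langle \nabla f(y^k), y^k - x^k\rangle = \|y^k - x^{k+1}\|_H^2 + \langle \nabla f(y^k), x^{k+1} - x^k\rangle$ and then applying \Lem{strongconvexity} with $z = x^k$ (equivalently, expanding $\|x^{k+1}-x^k\|_H^2$ via polarization). The key cancellation is that the two occurrences of $\|y^k - x^{k+1}\|_H^2$ cancel exactly, leaving $f(x^{k+1}) + \tfrac12\|x^{k+1}-x^k\|_H^2 \le f(x^k) + \tfrac{1+\epsilon}{2}\beta_k^2\,\|x^k - x^{k-1}\|_H^2$, i.e. $E_{k+1} \le E_k - c_k\,\|x^k - x^{k-1}\|_H^2$ with $c_k = \tfrac12\bigl(1 - (1+\epsilon)\beta_k^2\bigr)$. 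The coefficient is nonnegative exactly when $\beta_k^2$ is small relative to $1/(1+\epsilon)$, which the stated hypothesis secures, and which in the relevant case $f_2=0$ reduces to $\beta_k < 1$ — guaranteed by the restart rule since $\sup_k \beta_k \le \tfrac{P-1}{P+1}$ in \Alg{QMME}.

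I expect the only real obstacle to be the cross-term bookkeeping: one must track the signs and the $\tfrac12$ factors carefully so that the $\|y^k - x^{k+1}\|_H^2$ terms cancel and the momentum contribution collapses to a clean multiple of $\beta_k^2\|x^k - x^{k-1}\|_H^2$ that is dominated by the $\tfrac12\|x^k - x^{k-1}\|_H^2$ already carried inside $E_k$. A useful byproduct, which I would record for the subsequent subsequential-convergence analysis, is that summing the recursion over $k$ gives $\sum_k \|x^k - x^{k-1}\|_H^2 < \infty$, and hence $\|x^k - x^{k-1}\|_H \to 0$.
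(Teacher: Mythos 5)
Your proof is correct and follows essentially the same route as the paper: the quadratic majorization at $y^k$ combined with the optimality condition $\nabla f(y^k)=H(y^k-x^{k+1})$, the convexity/$\epsilon H$-majorization bound to pass from $y^k$ back to $x^k$, and the three-point (polarization) identity of \Lem{strongconvexity} specialized to $z=x^k$, yielding the same sufficient-decrease recursion $E_{k+1}\le E_k - c_k\|x^k-x^{k-1}\|_H^2$. The only discrepancy is your constant $\tfrac{1+\epsilon}{2}$ versus the paper's $\tfrac{1+\epsilon^2}{2}$, which traces to an ambiguity in the paper's own reading of $\|\cdot\|_{\epsilon H}^2$ in \Lem{Hmajorize}(iii) and is immaterial here since the decomposition actually used is $f_1=f$, $f_2=0$.
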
 
\begin{proof}
Due to the global quadratic majorization, 
\begin{eqnarray}\label{eq:major}
f(x^{k+1}) \le f(y^k) + \langle \nabla f(y^k), x^{k+1} - y^k \rangle + \frac{1}{2}\| x^{k+1}-y^k\|_H^2.
\end{eqnarray}
Summing \Eqn{strongconvexity} and \Eqn{major} gives us 
\begin{eqnarray}\label{eq:boundnext}
f(x^{k+1}) \le f(y^k) + \langle \nabla f(y^k), z - y^k \rangle + \frac{1}{2}  \| z-y^k \|_H^2 - \frac{1}{2} \|z - x^{k+1} \|_H^2.
\end{eqnarray}
Recall that $f= f_1 -f_2$, where $f_1$ is convex, while $f_2$ satisfies the inequality in (iii) of \Lem{Hmajorize},  so that
\begin{eqnarray*}
f_1(y^k) + \langle \nabla f_1(y^k), z - y^k  \rangle & \le & f_1(z),\\
f_2(z) - f_2(y^k) - \langle \nabla f_2(y^k), z - y^k\rangle & \le & \frac{\epsilon^2}{2} \|z-y^k\|_H^2,
\end{eqnarray*}
adding the two inequalities yields
\begin{eqnarray}\label{eq:linearmajor}
f(y^k) +  \langle \nabla f(y^k), z - y^k \rangle \le f(z) + \frac{\epsilon^2}{2} \|z-y^k\|_H^2.
\end{eqnarray}
Summing \Eqn{boundnext} and \Eqn{linearmajor} gives us
\begin{eqnarray}\label{eq:boundf}
f(x^{k+1})  \le f(z) +\frac{1+\epsilon^2}{2} \|z - y^k \|_H^2 - \frac{1}{2} \|z - x^{k+1} \|_H^2.
\end{eqnarray}
Plug in $z=x^k$ and $y-x^k = \beta_k(x^k - x^{k-1})$, we have
\begin{eqnarray}\label{eq:difff}
f(x^{k+1}) - f(x^k) \le \frac{1+\epsilon^2}{2} \beta_k^2 \|x^k - x^{k-1} \|_H^2 - \frac{1}{2} \|x^{k+1} - x^k\|_H^2.
\end{eqnarray}
Subsequently
\begin{eqnarray}\label{eq:diffofE}
E_{k+1} - E_{k} & = & f(x^{k+1})+\frac{1}{2}\|x^{k+1} - x^{k}\|_H^2 - f(x^k)  - \frac{1}{2} \|x^{k} - x^{k-1}\|_H^2 \nonumber \\
& \le &  (\frac{1+\epsilon^2}{2}\beta_k^2 - \frac{1}{2}) \|x^{k} - x^{k-1}\|_H^2. 
\end{eqnarray}
The constants $\frac{1+\epsilon^2}{2}\beta_k^2 - \frac{1}{2}$ are negative by assumption. Thus, the sequence $E_{k}$ is non-increasing. Since $f$ is bounded from below, $E_{k}$ is also bounded from below and thus convergent.
\end{proof}

\begin{theorem}\label{thm:subsequenceconverge}
Suppose that $\bar{\beta} = \sup\{\beta_k\} < \frac{1}{\sqrt{1+\epsilon^2}}$ and $\{x^k\}$ is a sequence generated by \Eqn{MMextrapolate}, then the following statements hold. \\
(i) $\sum_{k=0}^\infty \|x^{k+1} - x^k \|_H^2<\infty$.\\
(ii) any accumulation point of $\{x^k\}$ is a stationary point of $f$.
\end{theorem}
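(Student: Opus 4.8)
The plan is to read off both claims from the monotone descent of the auxiliary energy $E_k$ already established in \Lem{aux}. The role of the hypothesis $\bar\beta = \sup\{\beta_k\} < \frac{1}{\sqrt{1+\epsilon^2}}$ is precisely to upgrade the per-step decrease in \Eqn{diffofE} from merely negative to \emph{uniformly} negative: since $\frac{1+\epsilon^2}{2}\beta_k^2 \le \frac{1+\epsilon^2}{2}\bar\beta^2 < \frac{1}{2}$, the constant $\delta := \frac{1}{2} - \frac{1+\epsilon^2}{2}\bar\beta^2$ is strictly positive and independent of $k$. Substituting this into \Eqn{diffofE} gives $E_{k+1} - E_k \le -\delta\,\|x^k - x^{k-1}\|_H^2$ for every $k$, which is the workhorse inequality for both parts.

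For part (i), I would simply telescope. Summing over $k = 1,\dots,N$ yields $\delta \sum_{k=1}^N \|x^k - x^{k-1}\|_H^2 \le E_1 - E_{N+1}$. By \Lem{aux} the sequence $E_k$ is non-increasing and convergent, hence bounded below by its limit $E_\infty$; letting $N \to \infty$ produces $\sum_{k=0}^\infty \|x^{k+1} - x^k\|_H^2 \le (E_1 - E_\infty)/\delta < \infty$, which is exactly claim (i) after reindexing.

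For part (ii), summability forces $\|x^{k+1} - x^k\|_H \to 0$, and \Lem{normequiv} transfers this to the Euclidean norm. The bridge to stationarity is the optimality condition of \Eqn{MMextrapolate}, namely $\nabla f(y^k) = H(y^k - x^{k+1})$, already exploited in the proof of \Lem{strongconvexity}. Writing $y^k - x^{k+1} = \beta_k(x^k - x^{k-1}) + (x^k - x^{k+1})$ and using that $\beta_k$ is bounded together with the two vanishing differences, I obtain $\|y^k - x^{k+1}\|_H \to 0$, hence $\nabla f(y^k) = H(y^k - x^{k+1}) \to 0$ (passing between norms via \Lem{normequiv} and boundedness of $H$). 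Now let $x^\ast$ be any accumulation point, with a subsequence $x^{k_j} \to x^\ast$; since consecutive iterates coalesce, $x^{k_j-1} \to x^\ast$ and therefore $y^{k_j} \to x^\ast$ as well. Continuity of $\nabla f$, guaranteed by the Lipschitz property of \Lem{Lipschitz}, then gives $\nabla f(x^\ast) = \lim_j \nabla f(y^{k_j}) = 0$, so $x^\ast$ is stationary; by convexity of $f$ under \As{blanket} it lies in the optimal set $\mathcal{X}$.

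The only step that requires genuine care is the passage $y^{k_j} \to x^\ast$. It is tempting to work directly with $x^{k_j}$, but the gradient is evaluated at the \emph{extrapolated} point $y^k$, not at $x^k$, so one must control the displacement $y^k - x^k = \beta_k(x^k - x^{k-1})$. Closing this gap is where claim (i) is invoked a second time, through $\|x^{k_j} - x^{k_j-1}\| \to 0$, so the two parts of the theorem are genuinely linked rather than independent; everything else reduces to routine telescoping and norm-equivalence bookkeeping.
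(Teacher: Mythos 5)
Your proposal is correct and follows essentially the same route as the paper: telescoping the uniform decrease of the auxiliary energy $E_k$ from \Lem{aux} for part (i), and then combining the optimality condition $\nabla f(y^k) = H(y^k - x^{k+1})$ with the vanishing of successive differences and the Lipschitz continuity of $\nabla f$ for part (ii). Your explicit handling of the passage $y^{k_j}\to x^\ast$ is in fact slightly more careful than the paper's terse "taking $i$ to the limit," but it is the same argument.
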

\begin{proof}
We have
\begin{eqnarray*}
E_{k+1} - E_{k} \le (\frac{1+\epsilon^2}{2}\beta_k^2 - \frac{1}{2}) \|x^{k} - x^{k-1}\|_H^2 \le (\frac{1+\epsilon^2}{2}\bar{\beta}^2 - \frac{1}{2}) \|x^{k} - x^{k-1}\|_H^2.
\end{eqnarray*}
Since $\bar{\beta}< \frac{1}{\sqrt{1+\epsilon^2}}$, telescoping gives us
\begin{eqnarray*}
0 & \le &  \sum_{k=1}^N (\frac{1}{2} - \frac{1+\epsilon^2}{2}\bar{\beta}^2) \|x^{k} - x^{k-1}\|_H^2 \le E_{1} - E_{N+1}
\end{eqnarray*}
Since $E_{k}$ is convergent, the summation $\sum_{k=0}^\infty \|x^{k+1} - x^k \|_H^2$ is finite. Let $\bar{x}$ be an accumulation point, namely there exists a subsequence $\{x^{k_i}\}$ such that $\lim_{{i\rightarrow \infty}} x^{k_i} = \bar{x}$. Due to the optimality condition of \Eqn{MMextrapolate}, 
\begin{eqnarray*}
\nabla f(y^{k_i}) + H (x^{k_i+1} - y^{k_i}) = 0 . 
\end{eqnarray*}
Combined the definition of $y^{k_i}$, we have
\begin{eqnarray*}
-H (x^{k_i+1} - x^{k_i}-\beta_k (x^{k_i} - x^{k_i-1})) = \nabla f(y^{k_i}) 
\end{eqnarray*}
Taking $i$ to the limit on both sizes, due to the Lipschitz continuity of $\nabla f$, we have $\nabla f(\bar{x}) =0$.
Thus, any accumulation point of $\{x^k\}$ is a stationary point for $f$.
\end{proof}

\Thm{subsequenceconverge} establishes subsequential convergence of \Alg{QMME}. Before proceeding to the analysis of global convergence, we establish the following simple fact, which will be useful later.

\begin{proposition}\label{prop:constantcluster}
Let $\{x^k\}$ be the sequence generated by \Alg{QMME}. Then we have the following:
\\(i) $\zeta:= \underset{k\rightarrow \infty}\lim f(x^k)$ exists.
\\(ii) $f\equiv \zeta$ on $\Omega$, where $\Omega$ is the set of accumulation points for $\{x^k\}$. 
\end{proposition}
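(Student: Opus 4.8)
The plan is to assemble the two main ingredients already in hand: the convergence of the auxiliary sequence $E_k$ from \Lem{aux} and the square-summability of the $H$-norm increments from \Thm{subsequenceconverge}. Neither part requires new estimates, so the proof is essentially a limiting argument that stitches these prior results together.

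For part (i), I would first observe that \Thm{subsequenceconverge} guarantees $\sum_{k=0}^\infty \|x^{k+1}-x^k\|_H^2 < \infty$, and a convergent series has vanishing general term, so $\|x^k - x^{k-1}\|_H^2 \to 0$. Since \Lem{aux} establishes that $E_k = f(x^k) + \tfrac{1}{2}\|x^k - x^{k-1}\|_H^2$ converges to a finite limit, writing $f(x^k) = E_k - \tfrac{1}{2}\|x^k - x^{k-1}\|_H^2$ and passing to the limit yields the existence of $\zeta := \lim_{k\to\infty} f(x^k)$, which in fact equals $\lim_{k\to\infty} E_k$.

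For part (ii), I would take an arbitrary accumulation point $\bar x \in \Omega$ together with a subsequence $x^{k_i} \to \bar x$. Because $f$ is differentiable, hence continuous, under \As{blanket}, we have $f(x^{k_i}) \to f(\bar x)$. On the other hand, part (i) gives $f(x^{k_i}) \to \zeta$ along this same subsequence, and a sequence cannot have two distinct limits, so $f(\bar x) = \zeta$. Since $\bar x \in \Omega$ was arbitrary, this shows $f \equiv \zeta$ on $\Omega$.

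I do not anticipate a genuine obstacle here; the only points demanding a little care are the standard fact that a convergent series forces its terms to zero and the use of continuity of $f$ to pass $f$ through the subsequential limit. This proposition is best viewed as a preparatory bookkeeping step whose payoff arrives later in the KL-based global convergence analysis, where a constant objective value on the accumulation set is typically required.
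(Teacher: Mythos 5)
Your proposal is correct and follows essentially the same route as the paper: part (i) combines the convergence of $E_k$ from \Lem{aux} with the vanishing of $\|x^k-x^{k-1}\|_H^2$ implied by the summability in \Thm{subsequenceconverge}, and part (ii) passes to the limit along a convergent subsequence using the continuity of $f$. (If anything, your citation of part (i) of \Thm{subsequenceconverge} for the vanishing increments is more precise than the paper's reference to part (ii).)
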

\begin{proof}
Recall that $E^k = f(x^k)+\frac{1}{2}\|x^k-x^{k-1}\|_H^2$ is monotonically decreasing and convergent. In the meantime, due to (ii) of \Thm{subsequenceconverge}, $\frac{1}{2}\|x^k-x^{k-1}\|_H^2$ converges to 0. Thus $\zeta:= \underset{k\rightarrow \infty}\lim f(x^k)$ exists and equals $\underset{k\rightarrow \infty}\lim E^k$. For any $\hat{x}\in \Omega$, there is a subsequence $\{x^{k_i}\}$ such that $x^{k_i}\rightarrow \hat{x}$. We have $f(\hat{x})=\zeta$ due to the Lipschitz continuity of $f$.
\end{proof}

Next, we study the global convergence properties of $\{x^k\}$. We first make the additional assumption that the optimal set $\mathcal{X}$ is bounded.

\begin{assumption}\label{as:bound}
The optimal set $\mathcal{X}$ is bounded. 
\end{assumption}
We note that this further implies for all $\zeta \ge \inf_x \; f(x)$, the level set $\{x\in \Real^n|f(x)\le \zeta\}$ is bounded (see corollary 8.7.1 of \cite{rockafellar1997convex}). We next introduce the notion of Kurdyka-Lojasiewicz (KL) property \citep{bolte2007lojasiewicz,attouch2009convergence,attouch2010proximal,attouch2013convergence}.

\begin{definition}[KL Property]\label{def:KL}
A proper closed function $h$ is said to satisfy the \emph{Kurdyka--Łojasiewicz (KL) property} at a point $\hat{x} \in \text{dom} \, \partial h$ if there exist a constant $a \in (0,\infty]$, a neighborhood $\mathcal{O}$ of $\hat{x}$, and a continuous, concave function $\phi : [0,a) \to \Real_+$ with $\phi(0) = 0$, such that the following conditions hold:
\begin{itemize}
    \item[(i)] $\phi$ is continuously differentiable on $(0,a)$ and satisfies $\phi' > 0$;
    \item[(ii)] For all $x \in \mathcal{O}$ with $h(\hat{x}) < h(x) < h(\hat{x}) + a$, we have
    \[
    \phi'\left(h(x) - h(\hat{x})\right) \cdot \text{dist}(0, \partial h(x)) \geq 1.
    \]
\end{itemize}
A function $h$ that satisfies the KL property at every point in $\text{dom} \, \partial h$ is called a \emph{KL function}.
\end{definition}

\begin{lemma}[Uniform KL Property]\label{lem:uniformKL}
Let $h$ be a proper closed function, and let $\Gamma$ be a compact subset of $\text{dom} \, \partial h$ on which $h$ is constant. If $h$ satisfies the KL property at every point in $\Gamma$, then there exist constants $\epsilon > 0$ and $a > 0$, and a function $\phi$ satisfying the conditions in Definition~\ref{def:KL}, such that for all $\hat{x} \in \Gamma$ and all $x$ with $\text{dist}(x,\Gamma) < \epsilon$ and $h(\hat{x}) < h(x) < h(\hat{x}) + a$, we have
\[
\phi'\left(h(x) - h(\hat{x})\right) \cdot \text{dist}(0, \partial h(x)) \geq 1.
\]
\end{lemma}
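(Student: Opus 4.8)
The plan is to run a standard compactness argument that upgrades the \emph{pointwise} KL property (which holds at each point of $\Gamma$ with its own neighborhood, threshold, and desingularizing function) into a single choice of $\epsilon$, $a$, and $\phi$ that works uniformly over $\Gamma$. The first simplification is to exploit the hypothesis that $h$ is constant on $\Gamma$: write $\zeta$ for this common value, so that $h(\hat{x}) = \zeta$ for every $\hat{x}\in\Gamma$. This eliminates the dependence of the inequality on the particular reference point $\hat{x}$, reducing the target to producing $\phi$, $a$, $\epsilon$ such that $\phi'(h(x)-\zeta)\cdot \text{dist}(0,\partial h(x)) \ge 1$ whenever $\text{dist}(x,\Gamma)<\epsilon$ and $\zeta < h(x) < \zeta + a$.

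Next I would invoke Definition~\ref{def:KL} at each $u\in\Gamma$ to obtain a radius $r_u>0$ (so that the KL neighborhood $\mathcal{O}_u$ contains the open ball $B(u,r_u)$), a threshold $a_u>0$, and a concave $\phi_u$ satisfying conditions (i)--(ii) on $[0,a_u)$. The family $\{B(u,r_u/2):u\in\Gamma\}$ is an open cover of the compact set $\Gamma$, so I extract a finite subcover indexed by $u_1,\dots,u_p$. I then set
\[
\epsilon = \min_{1\le i\le p}\frac{r_{u_i}}{2}, \qquad a = \min_{1\le i\le p} a_{u_i}, \qquad \phi = \sum_{i=1}^{p}\phi_{u_i}.
\]
The key structural point is that summing the $\phi_{u_i}$ preserves every requirement of Definition~\ref{def:KL}: $\phi$ is continuous and concave as a finite sum of such functions, $\phi(0)=\sum_i\phi_{u_i}(0)=0$, and on $(0,a)$ it is continuously differentiable with $\phi' = \sum_i \phi_{u_i}' > 0$ since each summand is defined and has positive derivative on $(0,a)\subseteq(0,a_{u_i})$. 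Crucially, $\phi' \ge \phi_{u_i}'$ pointwise on $(0,a)$ for every $i$, because all the $\phi_{u_j}'$ are nonnegative.

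Finally I would verify the uniform inequality. Given $x$ with $\text{dist}(x,\Gamma)<\epsilon$, pick $\bar{x}\in\Gamma$ with $\|x-\bar{x}\|<\epsilon$; by the subcover, $\bar{x}\in B(u_i,r_{u_i}/2)$ for some $i$, and the half-radius trick gives $\|x-u_i\| \le \|x-\bar x\| + \|\bar x - u_i\| < \epsilon + r_{u_i}/2 \le r_{u_i}$, so $x\in B(u_i,r_{u_i})\subseteq\mathcal{O}_{u_i}$. If in addition $\zeta < h(x) < \zeta + a \le \zeta + a_{u_i}$, the pointwise KL property at $u_i$ applies (recall $h(u_i)=\zeta$), yielding $\phi_{u_i}'(h(x)-\zeta)\cdot\text{dist}(0,\partial h(x))\ge 1$; combining with $\phi'\ge\phi_{u_i}'$ gives the claim. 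The main obstacle is precisely the manufacture of one desingularizing function valid near all of $\Gamma$ at once, and the two-part resolution—the half-radius finite subcover to control both location and level threshold, and the summation trick to preserve concavity, differentiability, and positivity of the derivative while dominating each local $\phi_{u_i}'$—is where the real content lies; the remaining bookkeeping (checking domains of definition and that $h(\hat x)=\zeta$ throughout) is routine.
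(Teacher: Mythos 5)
Your proof is correct. The paper itself states this lemma without proof (it is a standard result in the KL literature, essentially Lemma 6 of Bolte, Sabach, and Teboulle's 2014 PALM paper), and your compactness argument --- finite subcover with half radii, $a=\min_i a_{u_i}$, $\phi=\sum_i \phi_{u_i}$, and the observation that $\phi'\ge\phi_{u_i}'$ because all summands have positive derivative --- is exactly the standard proof of it.
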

We define the auxilliary function $E(x,y)= f(x)+\frac{1}{2}\|x-y\|_H^2$. We also introduce some useful lemmas before diving into the main global convergence result. The proof \Lem{keylemma1} was first given in our prior work \cite{heng2025tactics}. We include the proof here for completeness.
\begin{lemma}\label{lem:keylemma1}
For any $x,y\in \Real^n$, we have
\begin{eqnarray*}
 [\nabla f(x) - \nabla f(y)]^\top H^{-1}[\nabla f(x) - \nabla f(y)]
 & \le  & [\nabla f(x)-\nabla f(y)]^\top (x - y).
\end{eqnarray*}
\end{lemma}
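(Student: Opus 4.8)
The inequality is a matrix-weighted co-coercivity (or firm-nonexpansiveness) bound for $\nabla f$: when $H = L I_n$ it collapses to the textbook estimate $\tfrac{1}{L}\|\nabla f(x)-\nabla f(y)\|^2 \le \langle \nabla f(x)-\nabla f(y),\, x-y\rangle$. The plan is therefore to adapt the classical co-coercivity argument, replacing the scalar factor $1/L$ by $H^{-1}$ and the Lipschitz quadratic upper bound by the quadratic majorization \Eqn{quadmajor}.

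First I would fix $y$ and introduce the auxiliary function $g(v) := f(v) - \langle \nabla f(y),\, v\rangle$. Since subtracting a linear functional leaves both convexity and the second-order behavior untouched, $g$ is convex and still obeys $g(v) \le g(u) + \langle \nabla g(u),\, v - u\rangle + \tfrac{1}{2}\|v-u\|_H^2$ for all $u,v$, with $\nabla g(v) = \nabla f(v) - \nabla f(y)$. Because $\nabla g(y) = 0$ and $g$ is convex, $y$ is a global minimizer of $g$. Minimizing the right-hand majorant centered at $u = x$ over $v$ — the minimizer being $v = x - H^{-1}\nabla g(x)$, which is exactly one QMME step — yields the value $g(x) - \tfrac{1}{2}\nabla g(x)^\top H^{-1}\nabla g(x)$, so that
\[
g(y) \le g(x) - \tfrac{1}{2}[\nabla f(x)-\nabla f(y)]^\top H^{-1}[\nabla f(x)-\nabla f(y)].
\]
Substituting the definition of $g$ and rearranging gives
\[
\tfrac{1}{2}[\nabla f(x)-\nabla f(y)]^\top H^{-1}[\nabla f(x)-\nabla f(y)] \le f(x) - f(y) - \langle \nabla f(y),\, x - y\rangle.
\]

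To finish, I would repeat the same argument with the roles of $x$ and $y$ exchanged, using $\tilde g(v) = f(v) - \langle \nabla f(x),\, v\rangle$, for which $x$ is the minimizer; this produces the symmetric bound with right-hand side $f(y)-f(x)-\langle \nabla f(x),\, y-x\rangle$. Adding the two inequalities cancels the function-value differences, while the two linear terms combine into $\langle \nabla f(x)-\nabla f(y),\, x - y\rangle$, which is precisely the desired right-hand side after dropping the common factor $\tfrac{1}{2}$. I expect the only point requiring genuine care — rather than real difficulty — to be the verification that the quadratic majorization is preserved under subtraction of a linear functional, together with the exact evaluation of the majorant's minimum as $-\tfrac{1}{2}(\cdot)^\top H^{-1}(\cdot)$; the remainder is routine rearrangement.
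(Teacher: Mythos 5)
Your proposal is correct and follows essentially the same route as the paper's proof: both subtract a linear functional to form an auxiliary convex function that inherits the quadratic majorization, apply the one-step descent bound $\inf g \le g(x) - \tfrac{1}{2}\nabla g(x)^\top H^{-1}\nabla g(x)$, and then symmetrize in $x$ and $y$ and add. The only cosmetic difference is which of the two points you center the auxiliary function at first.
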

\begin{proof}
Substituting $y = x - H^{-1}\nabla f(x)$ into \Eqn{quadmajor} yields:
\begin{eqnarray} \label{eq:kl}
\inf_y f(y) & \leq & f(x) - \frac{1}{2} \nabla f(x)^\top H^{-1} \nabla f(x).
\end{eqnarray}

Define the auxiliary function $g_x(y) = f(y) - \nabla f(x)^\top y$. This function is convex, achieves its minimum at $y = x$, and satisfies analogous inequalities to \Eqn{kl}. Applying \eqref{eq:kl} to $g_x$ gives:
\begin{eqnarray*}
g_x(y) - g_x(x) & \geq & \frac{1}{2} \nabla g_x(y)^\top H^{-1} \nabla g_x(y) \\
& = & \frac{1}{2} [\nabla f(x) - \nabla f(y)]^\top H^{-1} [\nabla f(x) - \nabla f(y)].
\end{eqnarray*}
Since $g_x(y) - g_x(x) = f(y) - f(x) - \nabla f(x)^\top (y - x)$, we have:
\begin{eqnarray} \label{ineq1}
f(y) - f(x) - \nabla f(x)^\top (y - x) & \geq & \frac{1}{2} [\nabla f(y) - \nabla f(x)]^\top H^{-1} [\nabla f(y) - \nabla f(x)].
\end{eqnarray}

By symmetry (swapping $x$ and $y$), we obtain:
\begin{eqnarray} \label{ineq2}
f(x) - f(y) - \nabla f(y)^\top (x - y) & \geq & \frac{1}{2} [\nabla f(y) - \nabla f(x)]^\top H^{-1} [\nabla f(y) - \nabla f(x)].
\end{eqnarray}
Adding \eqref{ineq1} and \eqref{ineq2} gives:
\begin{eqnarray*}
[\nabla f(y) - \nabla f(x)]^\top H^{-1} [\nabla f(y) - \nabla f(x)] & \leq & [\nabla f(y) - \nabla f(x)]^\top (y - x).
\end{eqnarray*}
\end{proof}
\begin{lemma}\label{lem:keylemma2}
For any $x,y\in \Real^n$, we have
\begin{eqnarray*}
(\nabla f(x)-\nabla f(y))^\top (x-y) \le (y-x)^\top H (y-x).
\end{eqnarray*}
\end{lemma}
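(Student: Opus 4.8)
The plan is to obtain the inequality by a direct symmetrization of the global quadratic majorization \eqref{eq:quadmajor}, without invoking \Lem{keylemma1}. The key observation is that \eqref{eq:quadmajor} controls the first-order gap of $f$ from either base point, and adding the two versions (with the two arguments swapped) cancels the function values and leaves precisely the gradient inner-product and the quadratic form.

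Concretely, I would first rewrite the majorization \eqref{eq:quadmajor} as
\[
f(y) - f(x) - \nabla f(x)^\top(y-x) \le \tfrac{1}{2}(y-x)^\top H (y-x),
\]
and then apply the very same inequality with the roles of $x$ and $y$ interchanged, obtaining
\[
f(x) - f(y) - \nabla f(y)^\top(x-y) \le \tfrac{1}{2}(x-y)^\top H (x-y).
\]
Adding these two inequalities, the terms $f(y)-f(x)$ and $f(x)-f(y)$ cancel on the left-hand side, and what remains is
\[
-\nabla f(x)^\top(y-x) - \nabla f(y)^\top(x-y) = \big(\nabla f(x) - \nabla f(y)\big)^\top(x-y),
\]
which is exactly the left-hand side of the claim. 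On the right-hand side, since $H$ is symmetric we have $(x-y)^\top H (x-y) = (y-x)^\top H (y-x)$, so the two half-quadratics combine into the single term $(y-x)^\top H (y-x)$, delivering the asserted bound.

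I do not expect any genuine obstacle here: the only thing to verify is that the two quadratic forms coincide after the sign flip, which is immediate from $x-y = -(y-x)$ together with the symmetry of $H$. Structurally this is the same two-sided symmetrization already used to prove \Lem{keylemma1} by adding \eqref{ineq1} and \eqref{ineq2}; the difference is only that here one adds the raw majorization rather than the strengthened (cocoercivity-type) descent inequality, which is why this step produces an upper bound on $(\nabla f(x)-\nabla f(y))^\top(x-y)$ by $\|x-y\|_H^2$ rather than the matching lower bound of \Lem{keylemma1}.
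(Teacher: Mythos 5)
Your proof is correct and is essentially identical to the paper's: both write the quadratic majorization \eqref{eq:quadmajor} twice with the roles of $x$ and $y$ swapped and add the two inequalities, letting the function values cancel. No further comment is needed.
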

\begin{proof}
Due to the global quadratic majorization, we have for any $x,y\in \Real^n$,
\begin{eqnarray*}
f(y) & \le & f(x) + \nabla f(x)^\top (y-x) + \frac{1}{2}(y-x)^\top H (y-x),\\
f(x) & \le & f(y) + \nabla f(y)^\top (x-y) + \frac{1}{2}(y-x)^\top H (y-x).
\end{eqnarray*}
Adding the two inequalities yields the result.
\end{proof}

\begin{theorem}\label{thm:globalconverge}
Under \As{bound}, if $f$ is a KL function, then the sequence $\{x^k\}$ generated by \Alg{QMME} satisfy the following statements:\\
(i) $\underset{k \rightarrow \infty}{\lim} \text{dist}((0,0),\partial E(x^k,x^{k-1})) = 0$.\\
(ii) The sequence $\{x^k\}$ converges to a stationary point of $f$, and $\sum_{k=1}^\infty \|x^k-x^{k-1}\|_H<\infty$.
\end{theorem}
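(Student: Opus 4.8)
The plan is to run the standard three-ingredient Kurdyka--Łojasiewicz convergence scheme, but applied to the auxiliary (potential) function $E(x,y)=f(x)+\frac12\|x-y\|_H^2$ rather than to $f$ itself, since it is $E$ (evaluated at consecutive iterates) whose monotone decrease we have already controlled in \Lem{aux}. The three ingredients are: (a) a \emph{sufficient decrease} estimate, which is exactly \Eqn{diffofE}, giving $E_k-E_{k+1}\ge c\,\|x^k-x^{k-1}\|_H^2$ with $c=\tfrac12-\tfrac{1+\epsilon^2}{2}\bar\beta^2>0$; (b) a \emph{subgradient bound}, which is precisely statement (i) of the theorem; and (c) the \emph{KL inequality}, supplied in its uniform form by \Lem{uniformKL}. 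Combining these via the concavity of $\phi$ yields a recursive inequality on the step lengths $a_k:=\|x^k-x^{k-1}\|$ whose telescoping sum is finite, giving finite trajectory length and hence convergence.

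\textbf{Part (i): the subgradient bound.} Since $f$ is differentiable, $E$ is differentiable and I would compute $\partial E(x^k,x^{k-1})=\{\nabla E(x^k,x^{k-1})\}$ componentwise: $\nabla_x E=\nabla f(x^k)+H(x^k-x^{k-1})$ and $\nabla_y E=H(x^{k-1}-x^k)$. The $y$-component is immediately controlled by $\|\nabla_y E\|\le\sigma_{\max}(H)\,\|x^k-x^{k-1}\|$. For the $x$-component, the key is to invoke the optimality condition at the \emph{previous} step: the update $x^k=y^{k-1}-H^{-1}\nabla f(y^{k-1})$ gives $\nabla f(y^{k-1})=H(y^{k-1}-x^k)$, so that
\[
\nabla_x E=\bigl[\nabla f(x^k)-\nabla f(y^{k-1})\bigr]+H\bigl(y^{k-1}-x^{k-1}\bigr)=\bigl[\nabla f(x^k)-\nabla f(y^{k-1})\bigr]+\beta_{k-1}H(x^{k-1}-x^{k-2}).
\]
Using the $\sigma_{\max}(H)$-Lipschitz continuity of $\nabla f$ from \Lem{Lipschitz} together with $x^k-y^{k-1}=(x^k-x^{k-1})-\beta_{k-1}(x^{k-1}-x^{k-2})$, both terms are bounded by consecutive increments, yielding $\text{dist}((0,0),\partial E(x^k,x^{k-1}))\le C\bigl(\|x^k-x^{k-1}\|+\|x^{k-1}-x^{k-2}\|\bigr)$ for a constant $C$ depending only on $\sigma_{\max}(H)$ and $\bar\beta$. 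Since \Thm{subsequenceconverge}(i) gives $\sum\|x^{k+1}-x^k\|_H^2<\infty$, the increments tend to zero in the $H$-norm and hence (by \Lem{normequiv}) in the Euclidean norm, so the distance tends to $0$.

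\textbf{Part (ii): finite length and convergence.} First I would record boundedness: \As{bound} forces the level sets of $f$ to be bounded, and since $f(x^k)\le E_k\le E_1$ the whole sequence $\{x^k\}$ stays in one such level set; thus $\{(x^k,x^{k-1})\}$ is bounded and its accumulation set is the compact set $\widehat\Omega=\{(\hat x,\hat x):\hat x\in\Omega\}$ (using $\|x^k-x^{k-1}\|\to0$), on which $E\equiv\zeta$ by \Prop{constantcluster}. Discarding the trivial case $E_k=\zeta$ (where the sequence is eventually constant), for large $k$ the pair $(x^k,x^{k-1})$ lies in the $\epsilon$-neighborhood of $\widehat\Omega$ with $\zeta<E_k<\zeta+a$, so \Lem{uniformKL} applies with $\phi'(E_k-\zeta)\cdot\text{dist}((0,0),\partial E(x^k,x^{k-1}))\ge1$. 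Writing $\Delta_k:=\phi(E_k-\zeta)$ and using concavity of $\phi$ together with the sufficient-decrease and subgradient bounds and \Lem{normequiv}, I obtain
\[
\Delta_k-\Delta_{k+1}\ge\phi'(E_k-\zeta)(E_k-E_{k+1})\ge\frac{c\,\sigma_{\min}(H)\,a_k^2}{C\,(a_k+a_{k-1})}.
\]
Rearranging and applying the arithmetic--geometric mean inequality $\sqrt{pq}\le\tfrac{\gamma}{2}p+\tfrac{1}{2\gamma}q$ gives $a_k\le C'(\Delta_k-\Delta_{k+1})+\tfrac14 a_k+\tfrac14 a_{k-1}$; absorbing $\tfrac14 a_k$ to the left and telescoping over $k$ (the $a_{k-1}$ terms shift the index but produce only boundary contributions) shows $\sum_k a_k<\infty$. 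Finite length makes $\{x^k\}$ Cauchy, hence convergent; its limit is an accumulation point, which is stationary by \Thm{subsequenceconverge}(ii); and $\sum\|x^k-x^{k-1}\|_H<\infty$ follows from $\sum a_k<\infty$ via \Lem{normequiv}.

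\textbf{Main obstacle.} The delicate step is the subgradient bound of part (i): because of extrapolation, $\nabla f(x^k)$ is \emph{not} the gradient at the point $y^{k-1}$ where the update was formed, so one must reintroduce $\nabla f(y^{k-1})$, exploit the previous-step optimality condition, and absorb the resulting Lipschitz error — which is why the bound inevitably involves \emph{two} consecutive increments $a_k$ and $a_{k-1}$ rather than one. This two-term bound in turn forces the index-shifted absorbing/telescoping argument in part (ii), which is the only non-routine bookkeeping in the KL recursion.
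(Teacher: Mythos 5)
Your proposal is correct and follows essentially the same route as the paper: the same potential function $E$, the same decomposition of $\partial E(x^k,x^{k-1})$ via the previous-step optimality condition $\nabla f(y^{k-1})=H(y^{k-1}-x^k)$, the same two-increment subgradient bound, and the same concavity-plus-AM--GM telescoping of $\phi(E_k-\zeta)$. The only (immaterial) difference is that you bound $\|\nabla f(x^k)-\nabla f(y^{k-1})\|$ directly by the $\sigma_{\max}(H)$-Lipschitz continuity of $\nabla f$ from \Lem{Lipschitz}, whereas the paper routes this through the $H$-norm cocoercivity estimates of \Lem{keylemma1} and \Lem{keylemma2}; both give the required bound $C(\|x^k-x^{k-1}\|_H+\|x^{k-1}-x^{k-2}\|_H)$ up to the value of the constant.
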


\begin{proof}
We have $\partial  E(x^k, x^{k-1}) = (\nabla f(x^k)+H(x^k-x^{k-1}), - H(x^k - x^{k-1}) )$. Using the definition of $x^k$ in \Eqn{MMextrapolate}, we have
\begin{eqnarray}\label{eq:stationary} 
-\nabla f(y^{k-1}) - H (x^k - y^{k-1}) =0
\end{eqnarray}
By adding \Eqn{stationary} to the left coordinates of $\partial E$, we have
\begin{eqnarray*}
 \partial E(x^k, x^{k-1}) & = & (\nabla f(x^k) - \nabla f(y^{k-1}) -H(x^{k-1}-y^{k-1}), - H(x^k - x^{k-1}) ) 
\end{eqnarray*}
Then we have
\begin{eqnarray*}
\text{dist}((0,0),\partial E(x^k,x^{k-1})) & \le & \|\nabla f(x^k) - \nabla f(y^{k-1})\| + \|H (x^{k-1}-y^{k-1})\| + \| H (x^k - x^{k-1})\|
\end{eqnarray*}
Combining \Lem{normequiv}, \Lem{keylemma1}, and \Lem{keylemma2}, we have
\begin{eqnarray*}
\|\nabla f(x^k) - \nabla f(y^{k-1})\| &\le & \sqrt{\sigma_{\max}(H)} \sqrt{(\nabla f(x^k) - \nabla f(y^{k-1}))^\top H^{-1} (\nabla f(x^k) - \nabla f(y^{k-1}))}\\
& \le & \sqrt{\sigma_{\max}(H)} \sqrt{(\nabla f(x^k) - \nabla f(y^{k-1}))^\top(x^k - y^{k-1})}\\
& \le &\sqrt{\sigma_{\max}(H)} \| x^k - y^{k-1}\|_H\\
& = & \sqrt{\sigma_{\max}(H)}\| x^k - x^{k-1} - \beta_{k-1}(x^{k-1}-x^{k-2})\|_H\\
& \le &\sqrt{\sigma_{\max}(H)}(\|x^k - x^{k-1}\|_H+\|x^{k-1}-x^{k-2}\|_H),
\end{eqnarray*}
\begin{eqnarray*}
\|H (x^{k-1}-y^{k-1})\| &  =  & \|\beta_{k-1}H(x^{k-1}-x^{k-2})\| 
\le \sqrt{\sigma_{\max}(H)} \|x^{k-1}-x^{k-2}\|_H,\\
\|H (x^{k}-x^{k-1})\|  &\le &\sqrt{\sigma_{\max}(H)} \|x^{k}-x^{k-1}\|_H.
\end{eqnarray*}
Adding the above three inequalities allows us to conclude that there is a constant $C$ such that
\begin{eqnarray}\label{eq:distancebound}
\text{dist}((0,0),\partial E(x^k,x^{k-1}))& \le & C(\|x^k - x^{k-1}\|_H+\|x^{k-1}-x^{k-2}\|_H).
\end{eqnarray}
Since $\|x^k - x^{k-1}\|_H \rightarrow 0$, we conclude $\text{dist}((0,0),\partial E(x^k,x^{k-1}))\rightarrow 0$. 

The proof of (ii) follows standard arguments in Theorem 4.2 of \cite{Wen2018APD}. We delay the proof of (ii) to \App{proofglobal}.
\end{proof}

This line of arguments establishes global convergence provided that the step sizes $\{\beta_k\}$ are bounded away from 1. The key is to construct the auxiliary sequence $E_{k}$. We may also study the convergence of iteration \Eqn{MMextrapolate} using tools from convex analysis and fixed-point theory, based on the key observation that the quadratic MM update is $\frac{1}{2}$-averaged in the induced norm $\|\cdot\|_H$. 

\begin{lemma}\label{lem:nonexpansive}
The mapping $T(x) = x - 2H^{-1} \nabla f(x) $ is non-expansive in the $\|\cdot\|_H$ norm. 
\end{lemma}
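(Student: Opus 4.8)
The plan is to reduce the claim to the co-coercivity-type inequality already recorded in \Lem{keylemma1}. Writing $u = x-y$ and $g = \nabla f(x) - \nabla f(y)$, the difference of images is $T(x)-T(y) = u - 2H^{-1}g$, so non-expansiveness in $\|\cdot\|_H$ amounts to showing $\|u - 2H^{-1}g\|_H^2 \le \|u\|_H^2$.

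First I would expand the left-hand side as a quadratic form in $H$. Using that $H$ and $H^{-1}$ are symmetric and that $HH^{-1}=H^{-1}H = I$, the two cross terms both collapse to $u^\top g$, while the quadratic term in $H^{-1}g$ reduces via $H^{-1}HH^{-1} = H^{-1}$. This gives
\[
\|u - 2H^{-1}g\|_H^2 = \|u\|_H^2 - 4\,u^\top g + 4\, g^\top H^{-1} g,
\]
and hence
\[
\|T(x)-T(y)\|_H^2 - \|x-y\|_H^2 = 4\bigl(g^\top H^{-1} g - u^\top g\bigr).
\]

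The remaining step is to show the bracket is non-positive, i.e. $g^\top H^{-1} g \le u^\top g$. This is exactly the conclusion of \Lem{keylemma1} under the identifications $g = \nabla f(x) - \nabla f(y)$ and $u = x-y$. Substituting yields $\|T(x)-T(y)\|_H^2 \le \|x-y\|_H^2$, which is the desired non-expansiveness, and the proof is complete.

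I do not expect a genuine obstacle: the argument is a direct expansion followed by an appeal to \Lem{keylemma1}. The only point worth flagging is conceptual rather than technical. The computation shows that the gradient-MM map $G(x) = x - H^{-1}\nabla f(x)$ is firmly non-expansive (equivalently $\tfrac12$-averaged) in the $H$-metric, and $T = 2G - I$ is precisely its reflection; the non-expansiveness of the reflection of a firmly non-expansive operator is a standard fact, with \Lem{keylemma1} supplying exactly the $H$-weighted co-coercivity that makes $G$ firmly non-expansive. This is the identity that links the present lemma to the inertial Krasnoselskii–Mann viewpoint invoked immediately afterward.
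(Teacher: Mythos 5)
Your proof is correct and follows essentially the same route as the paper's: expand $\|T(x)-T(y)\|_H^2 = \|x-y\|_H^2 - 4(x-y)^\top g + 4 g^\top H^{-1} g$ and then invoke \Lem{keylemma1} to bound $g^\top H^{-1} g \le (x-y)^\top g$. The concluding remark about $T$ being the reflection of the firmly non-expansive map $x \mapsto x - H^{-1}\nabla f(x)$ is a nice observation but not part of the paper's argument.
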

\begin{proof}
Using \Lem{keylemma1},
\begin{eqnarray*}
\| T(x) - T(y)\|_{H}^2 & = & 
\|x - y\|^2_{H} - 4(x - y)^\top [\nabla f(x)-\nabla f(y)] \\
&  & + \, 4[\nabla f(x) - \nabla f(y)]^\top H^{-1}[\nabla f(x) - \nabla f(y)]\\
& \le & \| x - y\|^2_{H} .
\end{eqnarray*}
\end{proof}

It immediately follows from \Lem{nonexpansive} that the mapping $S(x) = x - H^{-1} \nabla f(x) = \frac{1}{2} x + \frac{1}{2}T(x)$ is $\frac{1}{2}$-averaged in the Hilbert space with inner product $\langle x,y\rangle_H = x^\top H y$. Thus, if in \Eqn{MMextrapolate} we set the inertia parameters $\beta_k=0$,  the original fixed-point iteration reduces to a form of Krasnoselskii-Mann (KM) iteration. There is a significant body of work examining the convergence of Krasnoselskii-Mann iterations with various forms of inertia extrapolation, for instance, see \cite{Alvarez2001AnIP,Moudafi2003ConvergenceOA,Maing2008ConvergenceTF,Lorenz2014AnIF}. We direct readers to \cite{maulen2024inertial} for a more thorough literature review. The recent result in \cite{maulen2024inertial} is particularly useful for our current investigation. An application of the main result of \cite{maulen2024inertial} guarantees the convergence of \Eqn{MMextrapolate} under certain conditions on $\beta_k$. We note that the setting considered in \cite{maulen2024inertial} intends to be as general as possible. Here we state a less general result, which is sufficient for our purpose. 

\begin{lemma}\label{lem:inertialKMconverge}
Let $\mathcal{H}$ be a Hilbert space equipped with norm $\|\cdot\|$ and let $T: \mathcal{H} \rightarrow \mathcal{H}$ be a non-expansive operator with non-empty fixed-point set, i.e., $\text{Fix} \;T\ne \emptyset$. For inertial Krasnoselskii-Mann iterations of the form
\begin{eqnarray}\label{eq:inertialKM}
y^k & = &x^k + \beta_k (x^k - x^{k-1}),\nonumber\\
x^{k+1} & = & (1-\lambda_k)y^k + \lambda_k T y_k,
\end{eqnarray}
suppose that $\beta_k$ and $\lambda_k$ satisfies the following condition:
\begin{eqnarray}\label{eq:conditionbetak}
\limsup_{k\rightarrow\infty} \;[\beta_k(1+\beta_k) + (\lambda_k^{-1}-1)\beta_k(1-\beta_k) - (\lambda_{k-1}^{-1}-1)(1-\beta_{k-1})] <0,
\end{eqnarray}
then $\sum_k \|x^{k+1} - 2x^k +x^{k-1}\|^2$, $\sum_k \|x^{k} - x^{k-1}\|^2$, $\sum_k \|y^k - T y^k\|^2$ are convergent and for each $x^*\in \text{Fix} \;T$, $\underset{k\rightarrow \infty}{\lim} \|x^k-x^*\|$ exists. Moreover, both $x^k$ and $y^k$ converges weakly to a point in $\text{Fix} \;T$. 
\end{lemma}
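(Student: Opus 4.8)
The plan is to carry out the standard Lyapunov / quasi-Fej\'er analysis for inertial fixed-point iterations and to close with Opial's lemma; this is precisely the specialization of the argument in \cite{maulen2024inertial}, so an alternative is to verify that hypothesis \Eqn{conditionbetak} meets the conditions of their general theorem and invoke it directly. I will sketch the self-contained route. Fix $x^*\in\text{Fix}\,T$ and write $\phi_k=\|x^k-x^*\|^2$. The backbone is the convexity identity $\|(1-\lambda)u+\lambda v\|^2=(1-\lambda)\|u\|^2+\lambda\|v\|^2-\lambda(1-\lambda)\|u-v\|^2$ applied to the update in \Eqn{inertialKM} with $u=y^k-x^*$ and $v=Ty^k-x^*$. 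Since $T$ is non-expansive and $Tx^*=x^*$, we have $\|Ty^k-x^*\|\le\|y^k-x^*\|$, which gives
\[
\|x^{k+1}-x^*\|^2\le\|y^k-x^*\|^2-\lambda_k(1-\lambda_k)\|y^k-Ty^k\|^2 .
\]

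First I would expand $\|y^k-x^*\|^2$ using $y^k-x^*=(x^k-x^*)+\beta_k(x^k-x^{k-1})$ together with the three-point identity $2\langle x^k-x^*,\,x^k-x^{k-1}\rangle=\phi_k-\phi_{k-1}+\|x^k-x^{k-1}\|^2$. This turns the previous display into the inertial recursion
\[
\phi_{k+1}-\phi_k\le\beta_k(\phi_k-\phi_{k-1})+\beta_k(1+\beta_k)\|x^k-x^{k-1}\|^2-\lambda_k(1-\lambda_k)\|y^k-Ty^k\|^2 .
\]
Next I would rewrite the dissipation term in displacement form: from $x^{k+1}-y^k=\lambda_k(Ty^k-y^k)$ one gets $\lambda_k(1-\lambda_k)\|y^k-Ty^k\|^2=(\lambda_k^{-1}-1)\|x^{k+1}-y^k\|^2$, with $x^{k+1}-y^k=(x^{k+1}-x^k)-\beta_k(x^k-x^{k-1})$. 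The purpose of this rewriting is to expose the factors $(\lambda_k^{-1}-1)$ and the inertial coefficients so that they can be matched against the bracket in \Eqn{conditionbetak}.

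The heart of the argument is to sum the recursion. Writing $d_k=x^{k+1}-x^k$, the cross term created by expanding $\|x^{k+1}-y^k\|^2=\|d_k-\beta_k d_{k-1}\|^2$ is split through $2\langle d_k,d_{k-1}\rangle=\|d_k\|^2+\|d_{k-1}\|^2-\|x^{k+1}-2x^k+x^{k-1}\|^2$; since $\lambda_k\le1$, this isolates a nonpositive multiple of $\|x^{k+1}-2x^k+x^{k-1}\|^2$ and a nonpositive multiple of $\|d_k\|^2$. After the split, the positive coefficient of $\|x^k-x^{k-1}\|^2$ produced at step $k$ telescopes against the negative coefficient $-(\lambda_{k-1}^{-1}-1)(1-\beta_{k-1})$ inherited from step $k-1$, and a direct bookkeeping shows that the net coefficient of each $\|x^k-x^{k-1}\|^2$ equals exactly the bracket $\beta_k(1+\beta_k)+(\lambda_k^{-1}-1)\beta_k(1-\beta_k)-(\lambda_{k-1}^{-1}-1)(1-\beta_{k-1})$ of \Eqn{conditionbetak}. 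By hypothesis this is at most $-\delta<0$ for all large $k$, so telescoping against the nonnegative $\phi_k$ forces $\sum_k\|x^k-x^{k-1}\|^2<\infty$ and, simultaneously, $\sum_k\|x^{k+1}-2x^k+x^{k-1}\|^2<\infty$ and $\sum_k\|y^k-Ty^k\|^2<\infty$. Feeding the summability back into the inertial recursion via a standard discrete-inequality lemma for $\phi_{k+1}-\phi_k\le\beta_k(\phi_k-\phi_{k-1})+\varepsilon_k$ with $\sum[\varepsilon_k]_+<\infty$ and $\sup_k\beta_k<1$ then shows that $\lim_k\phi_k=\lim_k\|x^k-x^*\|$ exists for every $x^*\in\text{Fix}\,T$.

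Finally, weak convergence follows from Opial's lemma. Because $\sum_k\|x^k-x^{k-1}\|^2<\infty$ we have $\|x^k-x^{k-1}\|\to0$, hence $y^k-x^k=\beta_k(x^k-x^{k-1})\to0$ and $x^k,y^k$ share the same weak cluster points; because $\|y^k-Ty^k\|\to0$ and $I-T$ is demiclosed at $0$ for a non-expansive operator, each such cluster point is a fixed point of $T$. Combined with the existence of $\lim_k\|x^k-x^*\|$ for every $x^*\in\text{Fix}\,T$, Opial's lemma yields a unique weak cluster point, i.e.\ $x^k$, and therefore $y^k$, converges weakly to a single element of $\text{Fix}\,T$. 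I expect the main obstacle to be the bookkeeping in the third paragraph: tracking the cross terms between the inertial displacement and the Krasnoselskii--Mann averaging, with both $\beta_k$ and $\lambda_k$ varying in $k$, so that the residual coefficient of $\|x^k-x^{k-1}\|^2$ collapses precisely to the bracket in \Eqn{conditionbetak}. This algebraic reduction is the delicate step, and it is the one most cleanly delegated to \cite{maulen2024inertial}.
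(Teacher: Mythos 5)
Your sketch is essentially correct, but note that the paper does not prove this lemma at all: it is stated as a specialization of the main theorem of \cite{maulen2024inertial} and invoked by citation, which is precisely the ``alternative'' you mention in your opening sentence. What you have written is therefore a reconstruction of the argument from that reference rather than a parallel to anything in the paper, and the reconstruction is the right one: the quasi-Fej\'er inequality $\|x^{k+1}-x^*\|^2\le\|y^k-x^*\|^2-\lambda_k(1-\lambda_k)\|y^k-Ty^k\|^2$, the three-point expansion of $\|y^k-x^*\|^2$, the identity $\lambda_k(1-\lambda_k)\|y^k-Ty^k\|^2=(\lambda_k^{-1}-1)\|x^{k+1}-y^k\|^2$, and the expansion $\|x^{k+1}-y^k\|^2=(1-\beta_k)\|d_k\|^2-\beta_k(1-\beta_k)\|d_{k-1}\|^2+\beta_k\|d_k-d_{k-1}\|^2$ do combine so that, after telescoping, the net coefficient of $\|x^k-x^{k-1}\|^2$ is exactly the bracket in \Eqn{conditionbetak}; the bookkeeping you flagged as delicate does close. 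Two points you should make explicit rather than leave implicit: first, the summability of $\sum_k\|x^{k+1}-2x^k+x^{k-1}\|^2$ and of $\sum_k\|y^k-Ty^k\|^2$ is best deduced \emph{after} establishing $\sum_k\|d_k\|^2<\infty$, via $\|d_k-d_{k-1}\|^2\le 2\|d_k\|^2+2\|d_{k-1}\|^2$ and $\|y^k-Ty^k\|=\lambda_k^{-1}\|x^{k+1}-y^k\|\le\lambda_k^{-1}(\|d_k\|+\beta_k\|d_{k-1}\|)$ (which needs $\lambda_k$ bounded away from $0$, as it is in the paper's application with $\lambda_k=1/2$), since their coefficients in the Lyapunov inequality are only nonpositive and may vanish; second, the Alvarez--Attouch-type lemma you invoke to conclude that $\lim_k\phi_k$ exists requires $\beta_k$ to be eventually bounded away from $1$, which should be argued from \Eqn{conditionbetak} (the bracket tends to a limit at least $2$ if $\beta_k\to1$ with $\lambda_k^{-1}$ bounded) or simply assumed, as it is automatic for the paper's $\beta_k\le(P-1)/(P+1)$. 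For the paper's purposes the economical choice is to cite \cite{maulen2024inertial} as the authors do; your self-contained version buys transparency about where each term of \Eqn{conditionbetak} comes from.
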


\begin{theorem}\label{thm:kmconverge}
For the extrapolated quadratic MM algorithm \Eqn{MMextrapolate}, under \As{blanket}, both $x^k$ and $y^k$ converge to an optimal point provided that $\limsup_k \beta_k < \frac{1}{3}$. 
\end{theorem}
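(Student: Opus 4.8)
The plan is to recognize the QMME iteration \Eqn{MMextrapolate} as a particular instance of the inertial Krasnoselskii--Mann scheme \Eqn{inertialKM} and to apply \Lem{inertialKMconverge} directly. I would first fix the ambient Hilbert space to be $\Real^n$ equipped with the inner product $\langle x,y\rangle_H = x^\top H y$ and its induced norm $\|\cdot\|_H$; because $H$ is symmetric positive definite, this is a bona fide finite-dimensional Hilbert space. The relevant operator is $T(x) = x - 2H^{-1}\nabla f(x)$ from \Lem{nonexpansive}, which is non-expansive in $\|\cdot\|_H$. Its fixed-point set is
\begin{eqnarray*}
\text{Fix}\,T = \{x\in\Real^n : H^{-1}\nabla f(x) = 0\} = \{x\in\Real^n : \nabla f(x) = 0\} = \mathcal{X},
\end{eqnarray*}
which is non-empty by \As{blanket}, furnishing the required hypothesis $\text{Fix}\,T \ne \emptyset$.

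Next I would pin down the averaging parameter. Since $S(x) = x - H^{-1}\nabla f(x) = \tfrac12 x + \tfrac12 T(x)$, the QMME update $x^{k+1} = y^k - H^{-1}\nabla f(y^k) = S(y^k)$ is exactly $x^{k+1} = (1-\lambda_k)y^k + \lambda_k T y^k$ with the constant choice $\lambda_k = \tfrac12$. Thus \Eqn{MMextrapolate} matches \Eqn{inertialKM} term for term, and the only thing left to check is condition \Eqn{conditionbetak}.

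The crux of the argument is the reduction of \Eqn{conditionbetak} under $\lambda_k = \tfrac12$. Substituting $\lambda_k^{-1}-1 = \lambda_{k-1}^{-1}-1 = 1$, the $\beta_k^2$ terms cancel and the bracket collapses to $2\beta_k + \beta_{k-1} - 1$. Writing $\bar\beta = \limsup_k\beta_k$ and using subadditivity of $\limsup$ together with the index shift $\limsup_k\beta_{k-1} = \bar\beta$,
\begin{eqnarray*}
\limsup_{k\to\infty}\,[\,2\beta_k + \beta_{k-1} - 1\,] \;\le\; 3\bar\beta - 1,
\end{eqnarray*}
which is strictly negative precisely when $\bar\beta < \tfrac13$. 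Hence the hypothesis $\limsup_k\beta_k < \tfrac13$ is exactly what is needed to satisfy \Eqn{conditionbetak}.

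Finally, \Lem{inertialKMconverge} yields that both $x^k$ and $y^k$ converge weakly to a point of $\text{Fix}\,T = \mathcal{X}$. Because the space is finite-dimensional, weak convergence coincides with norm convergence, and by \Lem{normequiv} convergence in $\|\cdot\|_H$ is equivalent to Euclidean convergence; so $x^k$ and $y^k$ converge to an optimal point. I expect the only genuinely delicate point to be the algebraic reduction of \Eqn{conditionbetak}, since that is where the threshold $\tfrac13$ emerges; everything else amounts to correctly identifying the operator $T$, the averaging constant $\lambda_k=\tfrac12$, and silently upgrading the weak convergence of \Lem{inertialKMconverge} to strong convergence via finite-dimensionality.
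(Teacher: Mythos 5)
Your proposal is correct and follows essentially the same route as the paper's own proof: identify $T(x)=x-2H^{-1}\nabla f(x)$ as non-expansive in $\|\cdot\|_H$ via \Lem{nonexpansive}, view \Eqn{MMextrapolate} as the inertial KM scheme with $\lambda_k=\tfrac12$, reduce \Eqn{conditionbetak} to $2\beta_k+\beta_{k-1}-1<0$ in the limsup, and invoke \Lem{inertialKMconverge} with the finite-dimensional upgrade from weak to strong convergence. Your explicit verification of the $\beta_k^2$ cancellation and of $\text{Fix}\,T=\mathcal{X}$ merely spells out details the paper leaves implicit.
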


\begin{proof}
We have established that $T(x) = x - 2H^{-1} \nabla f(x) $ is non-expansive in the $\|\cdot\|_H$ norm.  \Eqn{MMextrapolate} can thus be viewed as the inertial Krasnoselskii-Mann iterations \Eqn{inertialKM} with relaxation parameter $\lambda_k = \frac{1}{2}$ in the Hibert space defined with inner product $\langle x,y\rangle_H = x^\top H y$. Thus the condition \Eqn{conditionbetak} reduces to
\begin{eqnarray*}
\limsup_{k\rightarrow\infty} \;[ 2\beta_k + \beta_{k-1} -1 ] & < &0,
\end{eqnarray*}
which is satisfied if $\limsup_k \beta_k < \frac{1}{3}$. Then, invoking \Lem{inertialKMconverge}, we conclude that both $x^k$ and $y^k$ converge to a fixed point $x^*$ of $T$, which is also an optimal point since $\nabla f(x^*) = 0$. In finite-dimensional Hilbert spaces, weak and strong convergence are equivalent, so that here we can simply state converge instead of weakly converge. 
\end{proof}

\begin{remark}
 \Thm{kmconverge} disentangles with the Kurdyka–Łojasiewicz (KL) property and guarantees global convergence under only three conditions: the validity of the quadratic majorization \Eqn{quadmajor}, the convexity of the objective function $f$, and the existence of an optimal point. To satisfy the assumptions of \Thm{kmconverge}, one can design a sequence $\{\beta_k\}$ satisfying $\limsup_k \beta_k < \frac{1}{3}$. For example, we may fix $\beta_k=0.333$ after a predetermined number of iterations, or construct a sequence that gradually decays toward 0.333. This strategy ensures convergence even for pathological cases, such as the multinomial regression with full parameterization and 
kernel-regularized learning problems involving a singular kernel matrix $K$, both of which will be detailed in the next section.   
\end{remark}

\section{Application to Kernel Regularized Learning}\label{sec:kernel}
In this section, we introduce our main application of interest: kernel-regularized learning \citep{scholkopf2002learning,mendelson2010regularization}. This framework seeks to estimate a function $f$ by solving the optimization problem
\begin{eqnarray}\label{eq:kernel}
\hat{f} = \underset{f \in \mathcal{H}}{\text{argmin}} \sum_{i=1}^n \ell(b_i,f(a_i)) + \frac{1}{2}\lambda \|f\|_{\mathcal{H}}^2,
\end{eqnarray}
where $(a_i,b_i), i=1,2,…,n$, are i.i.d. observations drawn from an underlying distribution, $l(\cdot,\cdot)$ is a general loss function, $\mathcal{H}$ denotes a reproducing kernel Hilbert space (RKHS) associated with a kernel function $K(\cdot,\cdot)$, and $\lambda>0$ is a regularization parameter balancing the trade-off between data fidelity and model complexity.

Due to the celebrated representer theorem \citep{wahba1990spline,vapnik1999nature}, the solution $\hat{f}$ resides within the span of kernel functions evaluated at the training points. Specifically, we have $\hat{f} = \sum_{i=1}^n \alpha_i K(a_i,\cdot)$ for some coefficient vector $\alpha\in \Real^n$. It can also be shown that $\|f\|_{\mathcal{H}}^2 = \lambda \alpha^\top K\alpha$, where $K = \{K(a_i,a_j)\}_{i,j=1}^n$ is the $n\times n$ kernel matrix. Substituting this representation into the original problem \Eqn{kernel}, we obtain the finite-dimensional optimization problem
\begin{eqnarray}\label{eq:kernelalpha}
\hat{\alpha} = \underset{\alpha \in \Real^n}{\text{argmin}} \sum_{i=1}^n \ell(b_i,(K\alpha)_i) + \frac{1}{2}\lambda \alpha^\top K \alpha.
\end{eqnarray}
When the loss function $\ell$ is the squared loss, the regularized problem \Eqn{kernelalpha} admits a closed-form solution involving the inversion of an $n\times n$ linear system, with computational complexity on the order of $O(n^3)$. For other loss functions, such as quantile, logistic, or multinomial losses, the problem becomes more challenging and typically requires iterative optimization algorithms. For example, for kernel quantile regression (KQR), \citet{takeuchi2006nonparametric} computed the dual problem and solved it using standard quadratic programming techniques, while \citet{li2007quantile} proposed an efficient algorithm that computes the entire regularization path of the KQR. For kernel logistic regression (KLR), \citet{keerthi2005fast} followed the idea of sequential minimal optimization (SMO) to solve the dual problem, which avoids inverting huge Hessian matrices. While \citet{keerthi2005fast} focused on the binary classification case, \citet{zhu2005kernel} extended the method to multi-class classification problems.

The optimization problem \Eqn{kernelalpha} often becomes computationally prohibitive when the sample size $n$ is large. A common strategy to alleviate this burden is to use random projection techniques \citep{alaoui2015fast,yang2017randomized} to restrict the solution of \Eqn{kernelalpha} to an $m$-dimensional subspace, where $m\ll n$ denotes the projection dimension, while aiming to preserve its key statistical properties. If $G\in \Real^{m\times n}$ denotes a random projection matrix, the optimization problem becomes:
\begin{eqnarray}\label{eq:kernelbeta}
\underset{x \in \Real^m}{\text{min}} \quad f(x)=\sum_{i=1}^n \ell(b_i,(KG^\top x)_i) + \frac{1}{2}\lambda x^\top G KG^\top x.
\end{eqnarray}
Popular choices for the sketching matrix $G$ include sub-Gaussian random projections and randomized orthogonal systems (ROS) \citep{yang2017randomized}. The Nyström method \citep{williams2000using,rudi2015less} is a special case of sketching, where $G$ consists of uniformly sampled rows from the $n\times n$ identity matrix. While \citet{alaoui2015fast} showed that sampling rows according to leverage score probabilities can lead to improved statistical performance, uniform sampling often yields comparably good results in practice. Therefore, we adopt uniform sampling in our experiments, though our algorithms are directly applicable to other choices of sketching matrices.

We established the global convergence of \Alg{QMME} in \Sec{algorithm} under the KL property of the potential function $E(x,y)= f(x)+\frac{1}{2}\|x-y\|_H^2$. A natural question is then if $f(x)$ takes the form \Eqn{kernelbeta}, whether the potential function $E(x,y)$ satisfies the KL property. Fortunately, under the fairly mild assumption that $GKG^\top$ is positive definite, the objective \Eqn{kernelbeta} is strongly convex, which implies the strong convexity of $E(x,y)$. Strongly convex functions satisfy the KL property and in fact have a KL exponent \citep{li2018calculus} of $1/2$, which is associated with the local linear convergence of the algorithm. 

\begin{lemma}
The objective \Eqn{kernelbeta} is strongly convex if $\lambda>0$ and $GKG^\top$ is positive definite. Suppose that $\bar{x}$ is the unique minimizer of $f$. Then $E(x,y)$ satisfies the uniform KL property on the singleton set $\{(\bar{x},\bar{x})\}$ with $\phi(s)= \frac{\sqrt{2}}{\sqrt{\mu_E}} s^{1/2}$ where $\mu_E$ is the strong convexity constant of $E(x,y)$.
\end{lemma}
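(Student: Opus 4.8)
The plan is to prove the statement in three stages: strong convexity of $f$, joint strong convexity of the potential $E$, and finally the KL inequality with the prescribed $\phi$. Throughout I write $w=(x,y)$, let $\bar{w}=(\bar{x},\bar{x})$ denote the unique minimizer of $E$, and set $E^\ast = E(\bar{w}) = f(\bar{x})$.

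First, for the strong convexity of the objective \Eqn{kernelbeta}, I would split $f$ into the data-fidelity term $\sum_i \ell(b_i,(KG^\top x)_i)$ and the penalty $\frac{\lambda}{2}x^\top GKG^\top x$. Since each $\ell(b_i,\cdot)$ is convex, the first term is convex as the composition of a convex function with the linear map $x\mapsto KG^\top x$. The penalty is a quadratic with Hessian $\lambda GKG^\top$, positive definite by hypothesis with smallest eigenvalue $\lambda\,\sigma_{\min}(GKG^\top)>0$. Adding a convex function to a strongly convex one preserves strong convexity, so $f$ is strongly convex with modulus $\mu_f=\lambda\,\sigma_{\min}(GKG^\top)$; equivalently $f(x)-\frac{\mu_f}{2}\|x\|^2$ is convex.

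Second, to show $E(x,y)=f(x)+\frac12\|x-y\|_H^2$ is strongly convex I would seek the largest $\mu_E>0$ for which $E(w)-\frac{\mu_E}{2}\|w\|^2$ is convex. Writing $E(w)-\frac{\mu_E}{2}(\|x\|^2+\|y\|^2)=\big[f(x)-\frac{\mu_f}{2}\|x\|^2\big]+R(x,y)$, the first bracket is convex by Part one, while $R$ is a pure quadratic whose Hessian is $M_0-\mu_E I$ with
\[
M_0=\begin{pmatrix} \mu_f I + H & -H \\ -H & H \end{pmatrix}.
\]
The crux is to verify $M_0\succ 0$. Its quadratic form equals $\mu_f\|u\|^2+(u-v)^\top H(u-v)$, which is strictly positive for every nonzero $(u,v)$: if $u\neq 0$ the first term is positive, and if $u=0\neq v$ the second term equals $v^\top H v\ge \sigma_{\min}(H)\|v\|^2>0$. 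Hence $M_0\succ 0$, any $\mu_E\in(0,\sigma_{\min}(M_0)]$ certifies strong convexity, and I would take $\mu_E=\sigma_{\min}(M_0)$ as the strong-convexity constant. (Since $\nabla^2 f(x)\succeq\mu_f I$ pointwise when $f$ is twice differentiable, this bound is uniform over $x$; otherwise I use the convexity-of-$E-\frac{\mu_E}{2}\|\cdot\|^2$ characterization to avoid second derivatives.)

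Third, from $\mu_E$-strong convexity I would derive the Polyak–Łojasiewicz inequality by minimizing the lower bound $E(w')\ge E(w)+\langle\nabla E(w),w'-w\rangle+\frac{\mu_E}{2}\|w'-w\|^2$ over $w'$, which gives $E(w)-E^\ast\le\frac{1}{2\mu_E}\|\nabla E(w)\|^2$, i.e.\ $\|\nabla E(w)\|\ge\sqrt{2\mu_E}\,(E(w)-E^\ast)^{1/2}$. Because the losses are differentiable, $E$ is differentiable and $\text{dist}(0,\partial E(w))=\|\nabla E(w)\|$. With $\phi(s)=\frac{\sqrt{2}}{\sqrt{\mu_E}}s^{1/2}$ we have $\phi'(s)=\frac{1}{\sqrt{2\mu_E}}s^{-1/2}$, so for every $w$ with $E(w)>E^\ast$,
\[
\phi'\big(E(w)-E^\ast\big)\,\text{dist}(0,\partial E(w)) = \frac{\|\nabla E(w)\|}{\sqrt{2\mu_E}\,(E(w)-E^\ast)^{1/2}} \ge 1,
\]
which is exactly the uniform KL inequality of \Lem{uniformKL} on $\Gamma=\{(\bar{x},\bar{x})\}$; since the PL bound is global, the radius $\epsilon$ and threshold $a$ in \Def{KL} may be taken arbitrarily large. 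The main obstacle is the joint strong convexity in Part two: the coupling $\frac12\|x-y\|_H^2$ is only convex, not strongly convex, in $(x,y)$, so the positive definiteness of $M_0$ must be argued carefully to confirm that the $x$-direction curvature supplied by the penalty "propagates" into the $y$-direction.
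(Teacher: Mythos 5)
Your proposal is correct and follows essentially the same route as the paper: both hinge on rewriting the joint quadratic form of the block Hessian of $E$ as the $f$-curvature in $v_x$ plus $(v_x-v_y)^\top H(v_x-v_y)$ to get joint strong convexity, and then converting strong convexity into the global inequality $\|\nabla E(w)\|\ge\sqrt{2\mu_E\,(E(w)-E^\ast)}$, which is exactly the KL condition with $\phi(s)=\frac{\sqrt2}{\sqrt{\mu_E}}s^{1/2}$. The only differences are cosmetic: you keep $\mu_E=\sigma_{\min}(M_0)$ abstract (and phrase strong convexity via convexity of $E-\frac{\mu_E}{2}\|\cdot\|^2$, avoiding twice-differentiability) where the paper computes the explicit constant $\frac14\min(\mu,\sigma_{\min}(H))$, and you obtain the Polyak--{\L}ojasiewicz bound by minimizing the quadratic lower bound over $w'$ rather than over the distance $a=\|w-\bar w\|$.
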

 \begin{proof}
The strong convexity of $\Eqn{kernelbeta}$ follows from the convexity of $ \sum_{i=1}^n \ell(b_i,(KG^\top x)_i)$ and the positive definiteness of $\lambda GKG^\top$. It is easy to see that if $\bar{x}$ is the unique minimizer of $f$, then $(\bar{x},\bar{x})$ is the unique minimizer of $E$.

Assume $f: \mathbb{R}^n \to \mathbb{R}$ is $\mu$-strongly convex ($\mu > 0$) and $H$ is positive definite with minimum singular $\sigma_{\min}(H) > 0$. The Hessian of $E$ is:
\[
\nabla^2 E(x,y) = 
\begin{pmatrix}
\nabla^2 f(x) + H & -H \\
-H & H
\end{pmatrix}.
\]
For any vector $v = (v_x, v_y) \in \mathbb{R}^{2n}$:
\begin{eqnarray*}
v^\top (\nabla^2 E) v 
&=& v_x^\top (\nabla^2 f(x) + H) v_x - 2v_x^\top H v_y + v_y^\top H v_y \\
&=& v_x^\top \nabla^2 f(x) v_x + (v_x - v_y)^\top H (v_x - v_y) \\
&\geq & \mu \|v_x\|^2 + \lambda_{\min}(H) \|v_x - v_y\|^2.
\end{eqnarray*}
This quadratic form is bounded below by:
\begin{eqnarray*}
\mu \|v_x\|^2 + \sigma_{\min}(H) \|v_x - v_y\|^2 
&\geq &\min\left(\mu, \sigma_{\min}(H)\right) \left( \|v_x\|^2 + \|v_x - v_y\|^2 \right) \\
&\geq &\min\left(\mu, \sigma_{\min}(H)\right) \cdot \tfrac{1}{4} (\|v_x\|^2 + \|v_y\|^2),
\end{eqnarray*}
where the last inequality follows from:
\begin{eqnarray*}
\|v_x\|^2 + \|v_x - v_y\|^2 
&= &2\|v_x\|^2 - 2\langle v_x, v_y\rangle + \|v_y\|^2 \\
&\geq& 2\|v_x\|^2 - 2\|v_x\|\|v_y\| + \|v_y\|^2.
\end{eqnarray*}
To show $2\|v_x\|^2 - 2\|v_x\|\|v_y\| + \|v_y\|^2 \geq \frac{1}{4} (\|v_x\|^2 + \|v_y\|^2)$, set $a = \|v_x\|^2$, $b = \|v_y\|^2$, and $c = \|v_x\|\|v_y\|$. Then the equality is written as
\begin{eqnarray*}
2a - 2c + b &\geq &\tfrac{1}{4}(a + b),
\end{eqnarray*}
which further simplifies to:
\begin{eqnarray*}
\tfrac{7}{4}a - 2c + \tfrac{3}{4}b &\geq& 0.
\end{eqnarray*}
Substituting $c = \sqrt{a}\sqrt{b}$ and letting $t = \sqrt{a/b}$:
\begin{eqnarray*}
\tfrac{7}{4}t^2 - 2t + \tfrac{3}{4} & \geq & 0.
\end{eqnarray*}
The discriminant is $(-2)^2 - 4 \cdot \tfrac{7}{4} \cdot \tfrac{3}{4} = 4 - \tfrac{21}{4} = -\tfrac{5}{4} < 0$, and since the quadratic in $t$ has positive leading coefficient, it is always positive. Thus $\nabla^2 E(x,y) \succeq \mu_E I_{2n}$ for $\mu_E = \frac{1}{4} \min\left(\mu, \sigma_{\min}(H)\right) > 0$, proving $E$ is $\mu_E$-strongly convex.

Define function \(\phi(s) = \frac{\sqrt{2}}{\sqrt{\mu_E}} s^{1/2}\) and use $z\in \Real^{2n}$ to denote $(x,y)$. For any \(z \neq \bar{z} =(\bar{x},\bar{x})\), set \(d = E(z) - E(\bar{z}) > 0\) and \(a = \|z - \bar{z}\|\). By strong convexity we have $d \geq \frac{\mu_E}{2}a^2$ and
\[
d \leq \langle \nabla E(z), z - \bar{z} \rangle - \frac{\mu_E}{2}a^2 \leq \|\nabla E(z)\| a - \frac{\mu_E}{2}a^2.
\]
Solving for the gradient norm gives:
\[
\|\nabla E(z)\| \geq \frac{d}{a} + \frac{\mu_E}{2}a.
\]
Substituting the optimal bound \(a = \sqrt{2d/\mu_E}\) (which minimizes the right-hand side) yields:
\[
\|\nabla E(z)\| \geq \sqrt{2\mu_E d}.
\]
Finally, we verify the KL condition:
\[
\varphi'(d) \|\nabla E(x)\| = \left( \frac{1}{\sqrt{2\mu_E}} d^{-1/2} \right) \|\nabla E(x)\| \geq \frac{1}{\sqrt{2\mu_E}} d^{-1/2} \cdot \sqrt{2\mu_E} d^{1/2} = 1,
\]
which holds globally for all $z\ne \bar{z}$.
\end{proof}
When $E(x,y)$ is strongly convex and consequently has an KL exponent of $\frac{1}{2}$, we may establish the local linear convergence of \Alg{QMME}.
\begin{theorem}\label{thm:locallinear}
If $\lambda>0$ and $GKG^\top$ is positive definite, then \Alg{QMME} applied to \Eqn{kernelbeta} generates a sequence $\{x^k\}$ that converges locally linearly. In other words, there exists $c_1>0$, $k_1>0$ and $0<\eta<1$ such that $\|x^k - \bar{x}\|<c_1\eta^k$ for $k>k_1$, where $\bar{x}$ in the unique minimizer of $f$.
\end{theorem}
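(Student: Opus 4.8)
The plan is to combine the three standard ingredients of a Kurdyka--Łojasiewicz convergence analysis---sufficient decrease, a subgradient (relative-error) bound, and the KL inequality---specialized to the exponent $1/2$ furnished by the preceding lemma. First I would note that the hypotheses already yield global convergence: strong convexity of \Eqn{kernelbeta} makes $\bar{x}$ the unique minimizer, so the optimal set is a singleton and \As{bound} holds automatically, while strong convexity makes $f$ a KL function. Hence \Thm{globalconverge} applies and gives $x^k \to \bar{x}$ with $\sum_k \|x^k - x^{k-1}\|_H < \infty$. It remains to upgrade this to a geometric rate.

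Next I would assemble the three quantitative estimates for the potential $E_k = E(x^k,x^{k-1})$, writing $r_k = E_k - \zeta \ge 0$ with $\zeta = f(\bar{x}) = E(\bar{x},\bar{x})$ and $\Delta_k = \|x^k - x^{k-1}\|_H^2$. (i) The descent inequality \Eqn{diffofE}, together with the uniform bound $\bar{\beta} = \sup_k \beta_k \le \tfrac{P-1}{P+1} < 1$ and a sufficiently small $\epsilon$, yields sufficient decrease $r_k - r_{k+1} \ge a\,\Delta_k$ with $a = \tfrac{1}{2}\bigl(1-(1+\epsilon^2)\bar{\beta}^2\bigr) > 0$. (ii) The subgradient bound \Eqn{distancebound} gives $\operatorname{dist}((0,0),\partial E_k) \le C(\|x^k - x^{k-1}\|_H + \|x^{k-1}-x^{k-2}\|_H)$, so $\operatorname{dist}((0,0),\partial E_k)^2 \le 2C^2(\Delta_k + \Delta_{k-1})$. (iii) The preceding lemma shows $E$ is $\mu_E$-strongly convex and satisfies the KL property with $\phi(s) = \tfrac{\sqrt{2}}{\sqrt{\mu_E}} s^{1/2}$; rearranging $\phi'(r_k)\operatorname{dist}((0,0),\partial E_k) \ge 1$ produces the crucial exponent-$1/2$ estimate $r_k \le \tfrac{1}{2\mu_E}\operatorname{dist}((0,0),\partial E_k)^2$. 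Since that lemma's inequality holds globally, no restriction to a neighborhood is required.

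I would then chain (iii), (ii), (i) into a two-step recursion in $r_k$:
\[
r_k \le \frac{C^2}{\mu_E}(\Delta_k + \Delta_{k-1}) \le \frac{C^2}{\mu_E a}\bigl((r_k - r_{k+1}) + (r_{k-1}-r_k)\bigr) = b\,(r_{k-1}-r_{k+1}),
\]
with $b = C^2/(\mu_E a)$. Using monotonicity $r_{k+1}\le r_k$ to replace $r_k$ on the left by $r_{k+1}$ gives $(1+b)\,r_{k+1}\le b\,r_{k-1}$, i.e.\ $r_{k+1}\le \theta\,r_{k-1}$ with $\theta = b/(1+b)\in(0,1)$. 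This two-step contraction forces $r_k$ to decay R-linearly, $r_k \le c_0\,\theta^{k/2}$. Sufficient decrease then gives $\Delta_k \le \tfrac{1}{a}(r_k - r_{k+1}) \le \tfrac{1}{a} r_k$, so $\|x^{k}-x^{k-1}\|_H$ decays geometrically like $\theta^{k/4}$, and since $x^k\to\bar{x}$ the triangle inequality yields $\|x^k-\bar{x}\|_H \le \sum_{j\ge k}\|x^{j+1}-x^j\|_H$, a geometric tail bounded by $c\,\eta^k$ for some $\eta\in(0,1)$; converting to the Euclidean norm via \Lem{normequiv} delivers $\|x^k-\bar{x}\| \le c_1\eta^k$.

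The main obstacle is the careful handling of the two-step structure: the subgradient bound in (ii) mixes $\Delta_k$ and $\Delta_{k-1}$, so the natural recursion skips an index and one must argue R-linear (rather than Q-linear) decay. A secondary point requiring care is keeping the decrease constant $a$ uniformly positive, which hinges on the restart mechanism maintaining $\bar{\beta}<1$ and on choosing $\epsilon$ small enough that $(1+\epsilon^2)\bar{\beta}^2<1$.
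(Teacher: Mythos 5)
Your proof is correct, and it reaches the conclusion by a recognizably different organization of the same three KL ingredients. The paper's proof works directly with the tail sums $S_k = \sum_{i\ge k}\|x^{i+1}-x^i\|_H$: reusing the recursion built inside the proof of \Thm{globalconverge}(ii) together with the KL inequality and \Eqn{distancebound}, it derives the two-step contraction $S_k \le \frac{C_1+1}{C_1+2}\,S_{k-2}$ and reads off $\|x^k-\bar{x}\|_H \le S_k$ immediately. You instead contract the potential gaps $r_k = E(x^k,x^{k-1})-\zeta$, obtaining $r_{k+1}\le\theta\,r_{k-1}$ from the chain KL $\Rightarrow$ subgradient bound $\Rightarrow$ sufficient decrease \Eqn{diffofE}, and then convert back to iterate distances via $\Delta_k\le r_k/a$ and a geometric series before invoking \Lem{normequiv}. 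Your route is more self-contained (it does not lean on the $S_k$ bookkeeping from the global-convergence proof) and makes the R-linear decay of function values explicit, at the cost of an extra square-root/summation step to recover the iterate rate; the paper's route is shorter once the $S_k$ machinery is in place. Two small points worth making explicit in a final write-up: the degenerate case $r_k=0$ for some finite $k$ (where $\phi'(0)$ is undefined) should be dispatched separately by noting the iterates are then eventually constant, exactly as the paper does in the proof of \Thm{globalconverge}(ii); and the identification $\zeta = E(\bar{x},\bar{x})$ needed to apply the KL inequality at the minimizer does require the global convergence $x^k\to\bar{x}$ that you correctly invoke first.
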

\begin{proof}
See \App{locallinear}.
\end{proof}

We now turn to three specific kernel-based learning problems: kernel (smoothed) quantile regression, kernel logistic regression, and kernel multinomial regression. For each, we present the corresponding optimization formulation, derive a quadratic majorant, and provide the normal equation associated with minimizing this surrogate objective. We also highlight how key matrix decompositions can be efficiently reused across iterations, and in some cases, across different values of the regularization parameter $\lambda$.

\subsection{Kernel Smoothed Quantile Regression}
Quantile regression is a powerful and time-tested tool for modeling the entire conditional distribution of a response variable, offering insights beyond the conditional mean captured by traditional least squares regression \citep{koenker1978regression, koenker2005quantile}. Compared to least-squares regression, quantile regression is more robust to outliers and provides a more comprehensive understanding of the relationship between input and output variables.
The core optimization problem in quantile regression involves minimizing the piecewise linear quantile loss function $\rho_\tau(r) = (\tau-\frac{1}{2})r+\frac{1}{2}|r|$, also known as the check loss function, which is non-smooth, non-differentiable, and lacks strong convexity at $r=0$. These properties make the use of standard gradient-based optimization methods inefficient or inapplicable. Recent developments in the quantile regression literature have introduced smooth alternatives based on convolution smoothing \citep{fernandes2021smoothing,he2023smoothed}, where the loss is given by 
\begin{eqnarray*}
\ell_{h,\tau}(u) = (\rho_{\tau}\ *\ K_h)(u)  =  \int_{-\infty}^\infty \rho_\tau(v) K_h(v-u)dv.
\end{eqnarray*}
Here, $*$ denotes the convolution operator, $K_h$ is a smoothing kernel, i.e., a probability density function with unit integral, and $h>0$ is a bandwidth parameter that controls the degree of smoothing. In this paper, we focus on the Gaussian kernel $K_h(u) = \frac{1}{\sqrt{2\pi}h}\exp(-\frac{u^2}{2h^2})$, and thus the smoothed quantile loss is 
\begin{eqnarray*}
\ell_{h,\tau}(u) & = &\frac{h}{\sqrt{2\pi}}\exp(-\frac{u^2}{2h^2})+\frac{u}{2}\left\{1-2\Phi(-\frac{u}{h})\right\}+(\tau-\frac{1}{2})u,
\end{eqnarray*}
where $\Phi(\cdot)$ denotes the CDF of standard normal distribution. Smaller values of $h$ yield tighter approximations to the original non-smooth check loss.

Using the smoothed quantile loss in the data fidelity term, we now minimize the following kernel smoothed quantile regression objective function:
\begin{eqnarray*}\label{eq:quantileobj}
f(x) = \sum_{i=1}^n \ell_{h,\tau}( b_i - (KG^\top x)_i ) + \frac{1}{2}\lambda x^\top G KG^\top x.
\end{eqnarray*}
The gradient and Hessian of $f$ are:
\begin{eqnarray}\label{eq:gradientHessian}
\nonumber \nabla f(x) & = & - GK \psi + \lambda GKG^\top x,\\
\nabla^2 f(x) & = & G K W KG^\top + \lambda GKG^\top,
\end{eqnarray}
where $\psi_i = \ell'_{h,\tau} (r_i), \;i = 1,2,\dots,n$ and $r_i = b_i - (KG^\top x)_i$. $W$ is an $m\times m$ diagonal matrix with diagonal entries $W_{ii} = \ell''_{h,\tau} (r_i)$. Leveraging the fact that $\ell''(u) = K_h(u)\le \frac{1}{\sqrt{2\pi}h}$, we can upper bound the Hessian by
\begin{eqnarray*}\label{eq:quantileH}
G K W KG^\top + \lambda GKG^\top \preceq H =\frac{1}{\sqrt{2\pi}h}GK^2G^\top + \lambda GKG^\top+\delta I,
\end{eqnarray*}
where the damping term $\delta I$ is introduced to improve numerical stability. 

To minimize the quadratic surrogate, we compute $H^{-1}\nabla f(y_k)$ at each iteration. This is performed efficiently using Cholesky decomposition: we factorize the matrix  $H=LL^\top$, where $L$ is a lower-triangular matrix, and solve the linear system via forward and backward substitutions, i.e., $H^{-1}\nabla f(y_k)=(L^\top)^{-1}L^{-1}\nabla f(y_k)$. Crucially, $H$ itself and its Cholesky decomposition are computed only once during initialization and subsequently reused across all iterations, significantly reducing the computational cost of each update.
\subsection{Kernel Logistic Regression}
Kernel logistic regression (KLR) \citep{canu2006kernel,jaakkola1999probabilistic,keerthi2005fast} is a widely used method for binary classification tasks. Compared to another popular kernel-based classification method, support vector machines (SVM), the advantages of KLR include its probabilistic interpretation and the ability to generalize to multi-class classification problems that we will discuss in the next subsection. 

The objective of kernel logistic regression is written as
\begin{eqnarray*}\label{eq:logisticobj}
f(x) = -\sum_{i=1}^n [b_i\log(p_i)+ (1-b_i)\log(1-p_i)]+ \frac{1}{2}\lambda x^\top G KG^\top x,
\end{eqnarray*}
where $p_i = 1/[1+\exp(-(KG^\top x)_i)]$ and $b_i\in\{0,1\}$. The gradient and Hessian in this case take the same format as \Eqn{gradientHessian}, but in this case $\psi_i = b_i - p_i$ while $W_{ii} = p_i(1-p_i)$. Using the fact that $p_i(1-p_i)\le \frac{1}{4}$, we obtain the upper bound
\begin{eqnarray}\label{eq:logisticH}
G K W KG^\top + \lambda GKG^\top \preceq H =\frac{1}{4}GK^2G^\top + \lambda GKG^\top+\delta I.
\end{eqnarray}
As with kernel quantile regression, the Cholesky decomposition of $H$ in \Eqn{logisticH} can be computed once and reused throughout the MM iterations.

\subsection{Kernel Multinomial Regression}
In multi-class classification problems, kernel methods are typically implemented using one-vs-one or one-vs-rest strategies, as seen in kernel logistic regression and kernel support vector machines. \citet{zhu2005kernel} extended kernel methods to directly handle the multi-class setting via a kernel multinomial formulation. However, this approach has received relatively limited attention, likely due to the greater optimization challenges it entails. Indeed, compared with kernel logistic regression and support vector machines, the number of parameters is further multiplied by the number of classes. In this subsection, we present new algorithms that cleverly exploit the special structure of multinomial regression to substantially reduce computational costs.

In a standard multinomial regression model, if $b_i\in\{1,2,\dots,q\}$ indicates the category of sample $i$, using the $q$-th category as the reference category, the probability $p_{ij}$ that $b_i=j$ is computed by
\begin{eqnarray*}
p_{ij}(X) & = & \begin{cases} \frac{e^{\eta_{ij}}}{1+\sum_{k=1}^{q-1} e^{\eta_{ik}}} & 1 \le j < q \\
\frac{1}{1+\sum_{k=1}^{q-1} e^{\eta_{ik}}} & j=q ,
\end{cases}
\end{eqnarray*}
where $\eta_{ij} =(KG^\top X_j)_i$ and $X_j$ is the $j$-th column of the $m\times (q-1)$ coefficient matrix $X$. The kernel multinomial regression objective function is
\begin{eqnarray}\label{eq:multinomialobj}
f(X) = -\sum_{i=1}^n \sum_{j=1}^q I(b_i=j)\log(p_{ij})+ \frac{1}{2}\lambda\; \text{tr}(X^\top G KG^\top X),
\end{eqnarray}
where $\text{tr}(\cdot)$ denotes the trace of a matrix. To simplify notation, we define  $B\in \Real^{n \times (q-1)}$ and $P\in \Real^{n\times (q-1)}$, where each row of $B$ is a binary vector such that $b_{ij}=1$ if $b_i=j$, and each row of $P$ contains the corresponding probabilities $p_{ij}$ for $j=1,\dots,q-1$. By vectorizing $X$ as $\text{vec}(X)$, the gradient and Hessian of $f$ can be expressed by
\begin{eqnarray}\label{eq:gradientHessianmultinomial}
\nonumber \nabla f(\text{vec}(X)) & = & - \text{vec}(GK(B-P))+\lambda\text{vec}(GKG^\top X),\\
\nabla^2 f(\text{vec}(X)) & = & \sum_{i=1}^n (\Lambda_{p_i}-p_ip_i^\top)\Kron (KG^\top)_i(KG^\top)_i^\top+ \lambda I_{q-1}\otimes GKG^\top,
\end{eqnarray}
where $p_i$ is the $i$-th row of $P$, $(KG^\top)_i$ is the $i$-th row of $KG^\top$, and $\Lambda_{p_i}$ is a diagonal matrix with diagonal entries filled by $p_i$. To make sense of the matrix calculus involved in deriving the gradient and Hessian, we direct readers to \cite{bohning1992multinomial}. 

The above model adopts the standard parameterization of multinomial regression, where the coefficient matrix has $q-1$ columns when there are $q$ categories in total. This is achieved by selecting one of the categories as the reference category. Here we have followed the convention of setting the last category as the reference category. However, the choice of reference category can have a non-negligible impact on the performance of the multinomial regression model \citep{fu2023simplex}. An alternative parameterization is to treat all categories as equal, with the coefficient matrix having $q$ columns. We refer to this parameterization as the ``full parameterization". In the fully parameterized multinomial regression model, the probability $p_{ij}$ is computed as
\begin{eqnarray*}
p_{ij}(X) & = & \frac{e^{\eta_{ij}}}{\sum_{k=1}^{q} e^{\eta_{ik}}},\quad 1 \le j \le q. \\
\end{eqnarray*}
The objective, gradient, and Hessian take the same format as \Eqn{multinomialobj} and \Eqn{gradientHessianmultinomial} with the following distinctions. First, the coefficient matrix $X$, the response matrix $B$, and the probability matrix $P$ has $q$ columns instead of $q-1$ columns, with each column representing one category. In the fully parameterized model, each row of $P$ sums to 1, which is not the case for the standard model. Second, we need to replace $I_{q-1}$ with $I_q$ in \Eqn{gradientHessianmultinomial}. Notice that since $p_i$ sums to 1, $\Lambda_{p_i}-p_ip_i^\top$ is in fact a singular matrix, as $1_q$ is in its null space.  Thus, should the penalty parameter $\lambda$ be 0, the Hessian of the objective is in fact singular.  The objective is pathological in the sense that it remains constant if we add multiples of $1_q$ to any row of $X$. However, we find that as long as the quadratic upper bound is positive definite, the QMME framework is still effective in solving the optimization problem.

We now derive upper bounds for the Hessian using both the standard parameterization and the full parameterization. Using an inductive argument, \cite{bohning1988monotonicity} showed that for any probability vector $p\in \Real^q$ that lies on the $q$-dimensional simplex, that is, $p_i$ is nonnegative and $\sum_{i=1}^q p_i=1$, we have
\begin{eqnarray}\label{eq:simplexupperbound}
\Lambda_{p_i}-p_ip_i^\top & \preceq & \frac{1}{2} (I_{q} - \frac{1}{q}1_{q} 1_{q}^\top).
\end{eqnarray}
In the above inequality, both sides are singular since $1_q$ are in their null spaces. It is not hard to see that 
\begin{eqnarray*}
\frac{1}{2} (I_{q} - \frac{1}{q}1_{q} 1_{q}^\top) & \preceq & E_1 = \frac{1}{2} (I_{q} - \frac{1}{q+1}1_{q} 1_{q}^\top). 
\end{eqnarray*}
Thus, for the full parameterization model, we have
\begin{eqnarray*}\label{eq:Efull}
\Lambda_{p_i}-p_ip_i^\top & \preceq & E_1 =\frac{1}{2} (I_{q} - \frac{1}{q+1}1_{q} 1_{q}^\top).
\end{eqnarray*}
Now, let us consider arbitrary vectors of the format $a = (v^\top,0)^\top$ where $v\in \Real^{q-1}$. Due to \Eqn{simplexupperbound}, we have
\begin{eqnarray*}
a^\top (\Lambda_{p_i}-p_ip_i) a &\le& a^\top (I_{q} - \frac{1}{q}1_{q} 1_{q}^\top) a,
\end{eqnarray*}
which reduces to
\begin{eqnarray*}
v^\top (\Lambda_{s_i}-s_is_i) v &\le& v^\top (I_{q-1} - \frac{1}{q}1_{q-1} 1_{q-1}^\top) v,
\end{eqnarray*}
where $s_i\in \Real^{q-1}$ are the first $q-1$ elements of $p_i$. Thus we have
\begin{eqnarray}\label{eq:Ereduced}
(\Lambda_{s_i}-s_is_i)  &\preceq & E_2= (I_{q-1} - \frac{1}{q}1_{q-1} 1_{q-1}^\top) ,
\end{eqnarray}
which is the upper bound given in \cite{bohning1992multinomial}.

Leveraging \Eqn{Ereduced}, in the standarad parameterization, the Hessian of $f$ has upper bound
\begin{eqnarray*}
\nabla^2 f(\text{vec}(X)) & \preceq & H = E_2 \otimes GK^2G^\top + \lambda I_{q-1}\otimes GKG^\top + \lambda \delta I_{(q-1)\times m}.
\end{eqnarray*}
Similarly, in the full parameterization, 
 the Hessian of $f$ has upper bound
\begin{eqnarray*}
\nabla^2 f(\text{vec}(X)) & \preceq & H = E_1 \otimes GK^2G^\top + \lambda I_{q}\otimes GKG^\top + \lambda \delta I_{q\times m}.
\end{eqnarray*}
We subsequently use \( E \) to denote either \( E_1 \) or \( E_2 \). Incidentally, both \( E_1^{-1} \) and \( E_2^{-1} \) take similar forms:
\[
E_1^{-1} = 2(I_q + 1_q 1_q^\top), \quad \text{and} \quad E_2^{-1} = 2(I_{q-1} + 1_{q-1} 1_{q-1}^\top).
\]
We now describe the Sylvester equation technique for minimizing the quadratic majorant using the standard parameterization. The corresponding workflow for the full parameterization is nearly identical. Under the standard parameterization, the normal equation for minimizing the quadratic majorant in \Eqn{overallmajor} is given by:
\begin{eqnarray*}
(E \otimes GK^2G^\top + \lambda I_{q-1}\otimes GKG^\top + \lambda\delta I_{(q-1)\times m}) \text{vec}(\Delta) = \text{vec}(C^k),
\end{eqnarray*}
where $\text{vec}(C^k) = \nabla f(\text{vec}(Y^k))$ and $Y^k\in \Real^{m\times (q-1)}$ is the extrapolated MM anchor at iteration $k$. Using the property of the Kronecker product, $(A\otimes B) \text{vec}(C)= \text{vec} (CBA)$, the linear system can be equivalently rewritten in matrix form
\begin{eqnarray*}
GK^2G^\top \Delta E + \lambda( GKG^\top+\delta I_{m}) \Delta & = & C^k.
\end{eqnarray*}
Left-multiplying the above equation by $(GKG^\top+\delta I_{m})^{-1}$ and right-multiplying it by $E^{-1}$, we obtain
\begin{eqnarray}\label{eq:sylvester}
(GKG^\top+\delta I_{m})^{-1}GK^2G^\top \Delta + \Delta (\lambda E^{-1})& = &( GKG^\top+\delta I_{m})^{-1}C^k E^{-1}.
\end{eqnarray}
This is a classic Sylvester equation of the form $AX+XB=C$ \citep{bartels1972algorithm}. The Sylvester equation admits a unique solution provided that the spectra of $A$ and $-B$ are disjoint (e.g., their singular values do not overlap). Although Sylvester equations have been extensively studied in control theory, their prominent application in statistics is more recent \citep{xu2022proximal,heng2025tactics}. The general approach to solving a Sylvester equation begins with finding orthogonal matrices $U$ and $V$ such that $A=UA'U^\top$ and $B=VB'V^\top$ where $A'$ is lower-triangular and $B'$ is upper-triangular. By left multiplying the Sylvester equation with $U^\top$ and right multiplying with $V$, it transforms into $A'U^\top \Delta V+U^\top\Delta VB' = U^\top C V$. Treating $U^\top \Delta V$ as a new variable $Z$, the equation reduces to $A'Z+ZB'=U^\top C V$, which can be solved analytically using the formula in equation (3) of \cite{bartels1972algorithm}, exploiting the lower-triangular and upper triangular-structure of $A'$ and $B'$. The solution $X$ to the original Sylvester equation is then recovered by $X=UZV^\top$.

\begin{algorithm}[t]
\caption{Solving Sylvester Equation \Eqn{sylvester}}\label{alg:sylvester}
\begin{algorithmic}[1]
\STATE Compute $(GKG^\top+\delta I_{m})^{-1}GK^2G^\top$.
\STATE Compute lower-Schur decomposition $(GKG^\top+\delta I_{m})^{-1}GK^2G^\top = U A' U^\top$.
\STATE Compute spectral decomposition $\lambda E^{-1} = V A' V^\top$.
\STATE Compute Cholesky decomposition $GK^2G^\top +\delta I_{m} = LL^\top$.
\STATE Compute right hand side of \Eqn{sylvester} as $C=(L^\top)^{-1}L^{-1}C^k E^{-1}$.
\STATE Solve $A'Z+ZB'=U^\top C V$ via equation (3) of \cite{bartels1972algorithm}.
\STATE Recover $\Delta$ by $\Delta=UZV^\top$.
\end{algorithmic}
\end{algorithm}

A complete recipe for solving \Eqn{sylvester} is provided in in \Alg{sylvester}. We triangularize $(GKG^\top+\delta I_{m})^{-1}GK^2G^\top$ and $\lambda E^{-1}$ using two different matrix decompositions since the former is generally asymmetric, while the latter is symmetric and positive definite. Line 1,2,3,4 of \Alg{sylvester} only needs to be computed once across all iterations and all values of $\lambda$ (notice that the spectral decomposition of $\lambda E^{-1}$ differs from the the spectral decomposition of $E^{-1}$ only by a constant factor of $\lambda$). We still need to compute lines 5,6,7 at each iteration.

\section{Numerical Experiments}\label{sec:ne}
In this section, we use simulated data to assess the computational efficiency and statistical accuracy of our MM algorithms applied to the three kernel regularized learning problems introduced earlier. We generate an $n\times d$ design matrix $A$ with covariance matrix $\Sigma$ where $\Sigma_{ij} = 0.5^{|i-j|}$ and $d=50$.  For quantile regression, the response $b_i$ is generated as
\begin{eqnarray*}
\eta_i  & = & -4+\text{sin}(a_{i1}) +a_{i2}a_{i3} + a_{i4}^3 - |a_{i5}| + \frac{1}{10}\|(a_{i6},\dots,a_{id})\|^2, \quad   
b_i  = \eta_i + \epsilon_i,
\end{eqnarray*}
where $\epsilon_i$ follows a student's $t$ distribution with 1.5 degrees of freedom. For logistic regression, $\eta_i$ is generated in the same manner, but the response is generated as $b_i \sim \text{Bin}(1,p_i)$, where $p_i = 1/[1+\exp(-\eta_i)]$. For multinomial regression, the data generation protocol is
\begin{eqnarray*}
\eta_{1i}  & = & -4+\text{sin}(a_{i1}) +a_{i2}a_{i3} + a_{i4}^3 - |a_{i5}| + \frac{1}{10}\|(a_{i6},\dots,a_{id})\|^2, \\
\eta_{2i}   & = & 4+\text{cos}(a_{i1}) +a_{i2}a_{i4} + a_{i5}^3 - |a_{i3}| - \frac{1}{10}\|(a_{i6},\dots,a_{id})\|^2, \\
p_{1i} & = & \exp(\eta_{1i})/[\exp(\eta_{1i})+\exp(\eta_{2i})+1], \\ p_{2i} & = &  \exp(\eta_{2i})/[\exp(\eta_{1i})+\exp(\eta_{2i})+1],\\
b_i & \sim & \text{Multinomial}[\{1,2,3\},(p_{1i},p_{2i},1-p_{1i}-p_{2i})],
\end{eqnarray*}
where $b_i  \sim  \text{Multinomial}[\{1,2,3\},(p_{1i},p_{2i},1-p_{1i}-p_{2i})]$ suggests that $b_i$ has probability $p_{1i}$, $p_{2i}$, $1-p_{1i}-p_{2i}$ to take the values 1, 2, 3, respectively.

We employ the Radial Basis Function (RBF) kernel $K(a,a') = \exp \{-\|a-a'\|^2/(2\sigma^2)\}$. The bandwidth parameter $\sigma$ is hand-tuned and will be specified prior to each experiment. All experiments are implemented in Julia 1.9.3, and conducted on a Mac with an M4 chip and 32GB of RAM. For quantile regression, the smoothing parameter is set to $h = 0.25$, and the quantile is set to $\tau = 0.5$. In kernel quantile regression and logistic regression, the damping parameter is set to $\delta = 10^{-9}$. For multinomial regression, the damping matrix is configured as $\lambda\delta I$, with $\delta = 10^{-4}$. The experiments presented in this paper are fully reproducible, with code and vignettes available at the repository \url{https://github.com/qhengncsu/QMME.jl}.

\subsection{Convergence Speed on One Instance}\label{sec:oneinsta}

We begin our experiments by comparing the convergence speed of our quadratic MM algorithms against several popular first- and second-order optimization methods on a single randomly generated instance. The sample size is fixed at $n = 2^{14}$, and the sketching matrix $G$ is set to have $m = 2^{11}$ rows. The penalty parameter is uniformly set to $\lambda = 10^{-4}$ across all methods. Each algorithm is terminated either when the gradient norm satisfies $\|\nabla f(x^k)\| < 10^{-4}$ or after a maximum of 1000 iterations. The kernel bandwidth is set to $\sigma=20$ throughout this subsection.

The competing methods include:
\begin{itemize}
\item The Fast Iterative Shrinkage-Thresholding Algorithm (FISTA) \citep{Beck2009AFI}, with the Lipschitz constant $L$ computed individually for each problem instance. As we are dealing with smooth objectives, the proximal step in FISTA is omitted.
\item Adaptive Gradient Descent (AdaGD), implemented according to Algorithm 1 in \cite{malitsky2020adaptive}, initialized with a step size of $10^{-7}$.
\item Standard Newton’s method, incorporating step size halving to ensure descent of the objective. As in our MM framework, a small ridge term ($10^{-9}I$) is added to the Hessian matrices for numerical stability.
\end{itemize}

We also experimented with the Barzilai–Borwein (BB) method \citep{barzilai1988two}, which has shown strong empirical performance in low-dimensional smoothed quantile regression problems \citep{he2023smoothed}. However, in our kernel regression setting, BB consistently diverged—likely due to the ill-conditioning of the kernel matrix $K$ and the increased complexity of kernel quantile regression. As a result of its erratic and divergent behavior, BB is excluded from the visual comparisons for clarity. For kernel multinomial regression, we restrict our attention to the standard parameterization in this subsection. We note that for the full parameterization, the Hessian matrix often becomes nearly singular when $\lambda$ is small, rendering Newton’s method inapplicable due to numerical instability.

\begin{figure}[t]
  \centering
  \includegraphics[width=\textwidth]{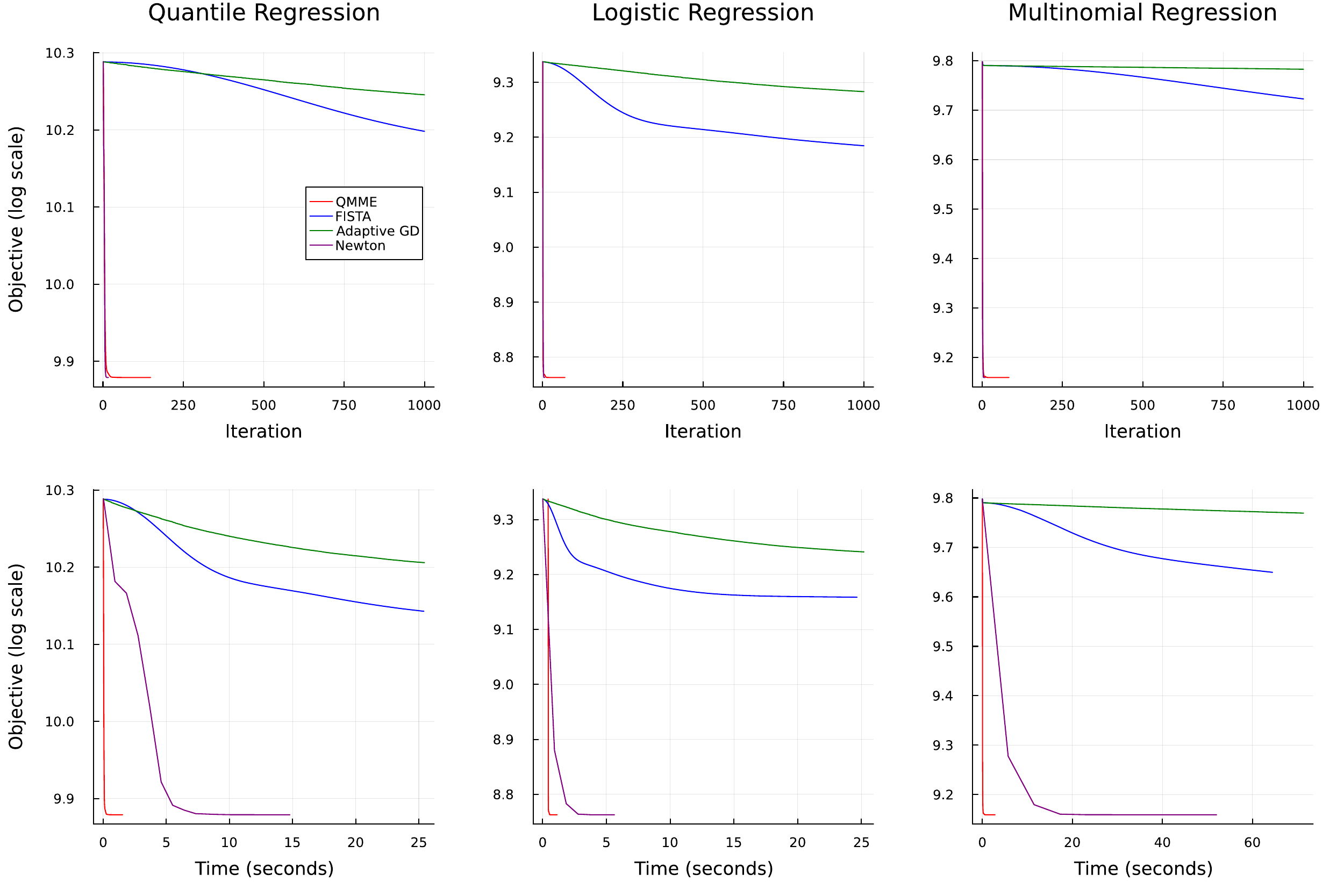}
  \caption{Log scale objective trajectories of different algorithms on one instance of the three problems considered. The first row shows the objective versus the number of iterations, while the second row shows the objective versus the computation time.}
  \label{fig:obj}
\end{figure}

\begin{figure}[t]
  \centering
  \includegraphics[width=\textwidth]{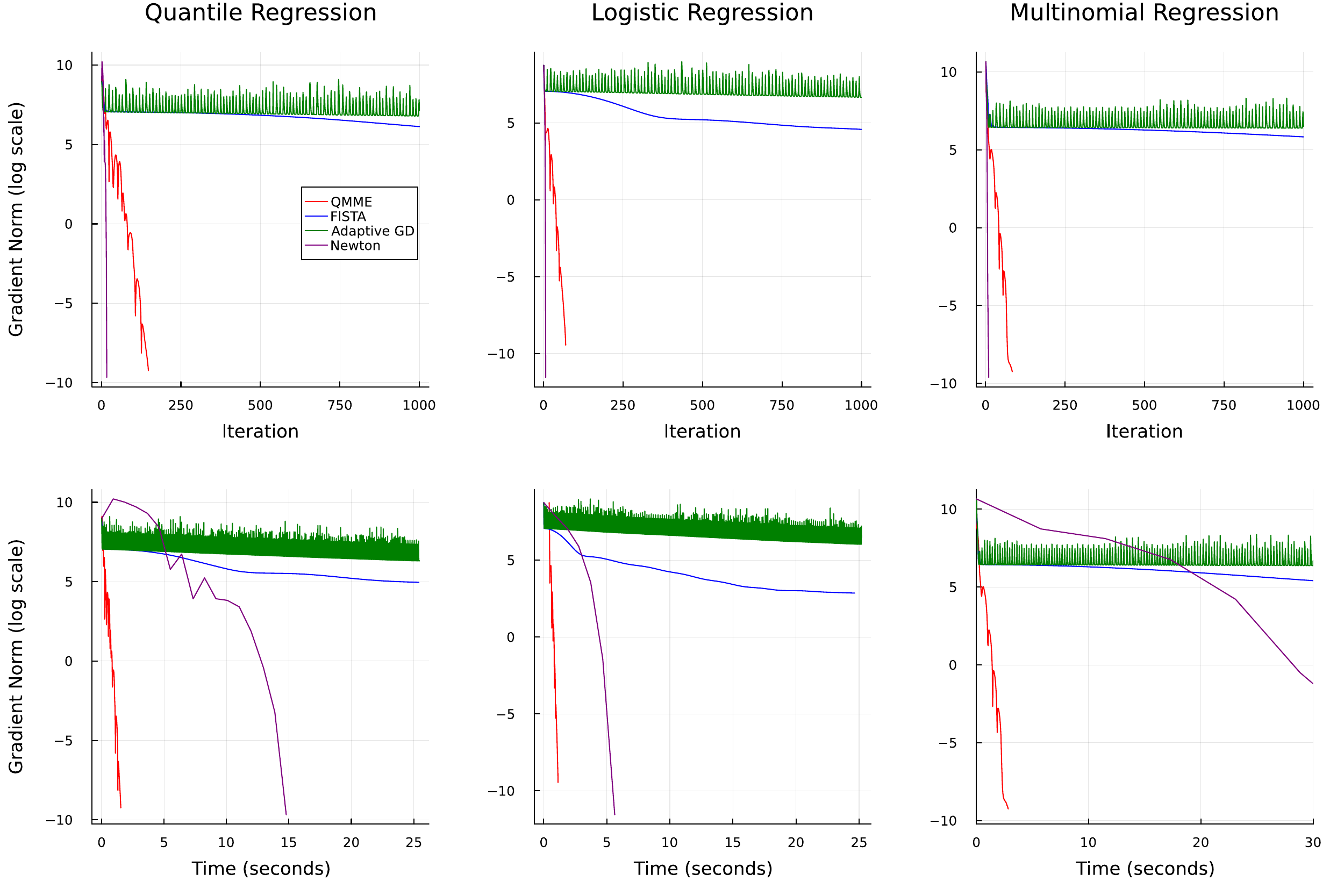}
  \caption{Log scale gradient norm trajectories of different algorithms.}
  \label{fig:grad}
\end{figure}

Our quadratic MM algorithms converge more than an order of magnitude faster than FISTA, both in terms of iteration count and total computation time. Somewhat unexpectedly, the adaptive gradient method of \cite{malitsky2020adaptive} performs significantly worse than FISTA, exhibiting slow convergence and pronounced oscillations in the gradient norm. The poor performance of both AdaGD and BB suggests that these kernel learning problems may pose intrinsic challenges for step-size-adaptive first-order methods. Overall, first-order algorithms fail to achieve the desired accuracy within a reasonable number of iterations. In our experiments, only methods that incorporate curvature information, namely QMME and Newton Raphson, are able to satisfy the convergence criterion $\|\nabla f(\beta_k)\| < 10^{-4}$ within 1000 iterations.

This observation is consistent with the findings of \cite{liu2025kernel}, who reported that a second-order Iteratively Reweighted Least Squares (IRLS) method converged substantially faster than gradient descent for kernel logistic regression. Although the specifics of their IRLS implementation were not disclosed, we believe it closely resembles the Newton method evaluated here. Newton’s method typically converges in fewer than 10 iterations; however, the high per-iteration cost leads to total runtimes that are several times longer than those of our quadratic MM algorithm. These results suggest that, at the appropriate scale, the QMME framework achieves a more favorable trade-off between iteration count and per-iteration computational cost.

\subsection{Computing the Entire Solution Path}
\Sec{oneinsta} focused on visualizing the objective and gradient norm trajectories for a fixed value of $\lambda$. In practice, however, selecting the tuning parameter typically requires computing the entire solution path. This is commonly accomplished by solving the optimization problem over a sequence of decreasing $\lambda$ values, where the solution from the previous step serves as a warm start for the next. While random projection techniques can significantly reduce the computational cost of kernel methods, achieving accurate approximations often necessitates larger projection dimensions, particularly when the number of features $d$ is large. In general, the statistical accuracy of kernel learning improves monotonically with projection dimension. As such, more efficient algorithms for solving \Eqn{kernelbeta} complement existing kernel approximation techniques by enabling scalable computation at larger values of $m$. In this subsection, we present carefully designed experiments to illustrate these points. Given the limitations of first-order methods demonstrated in the previous section, we focus on the QMME and Newton–Raphson algorithms for the remaining experiments.

We adopt the same simulation setup as in \Sec{oneinsta}, but vary the projection dimension by considering $m \in \{2^3, 2^4, \dots, 2^{12}\}$. For kernel multinomial regression, due to the prohibitive computational cost of Newton–Raphson at $m = 2^{12}$, we instead consider $m \in {2^2, 2^3, \dots, 2^{11}}$. An independent validation set of size $n/4 = 4096$ is generated for model selection. For each random replicate and each value of $m$, we solve the optimization problem along a sequence of 30 exponentially decreasing $\lambda$ values, ranging from $10^1$ to $10^{-5}$. We record both the total time required to compute the full solution path and the best validation accuracy achieved across all $\lambda$ values. To ensure a fair comparison, we adopt the same convergence criterion for both QMME and Newton–Raphson: $\|\nabla f(x^k)\| < 10^{-4}$.

Accuracy is evaluated using problem-specific metrics. For quantile regression, we report the mean absolute deviation between $f(a_i)$ and $\hat{f}(a_i)$ on the validation set, where lower values indicate better predictive performance. For logistic and multinomial regression, accuracy is assessed using the validation log-likelihood, with higher values indicating better fit. In the case of kernel multinomial regression, we consider both the standard and full parameterizations, which correspond to two closely related but slightly different optimization objectives. Consequently, minor discrepancies in the resulting validation log-likelihoods are expected.

\begin{figure}[t]
  \centering
  \includegraphics[width=\textwidth]{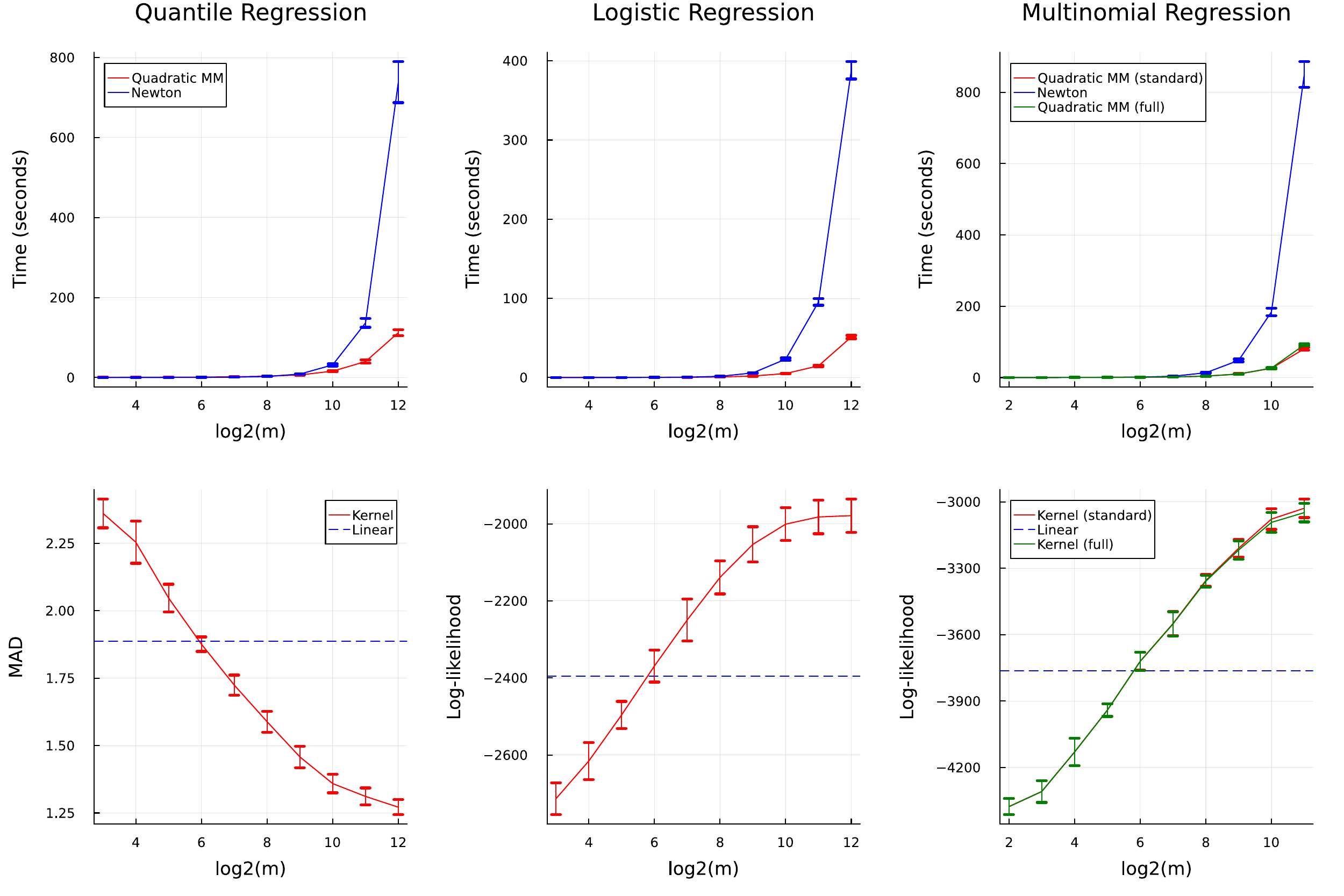}
  \caption{Total time to compute entire solution path and best accuracy on the validation set for different algorithms. The error bar shows one standard error using bandwidth $\sigma=15$. All data points are averaged over 20 random replicates. The blue horizontal line indicates the accuracy of a linear model.}
  \label{fig:path1}
\end{figure}
\begin{figure}[t]
  \centering
  \includegraphics[width=\textwidth]{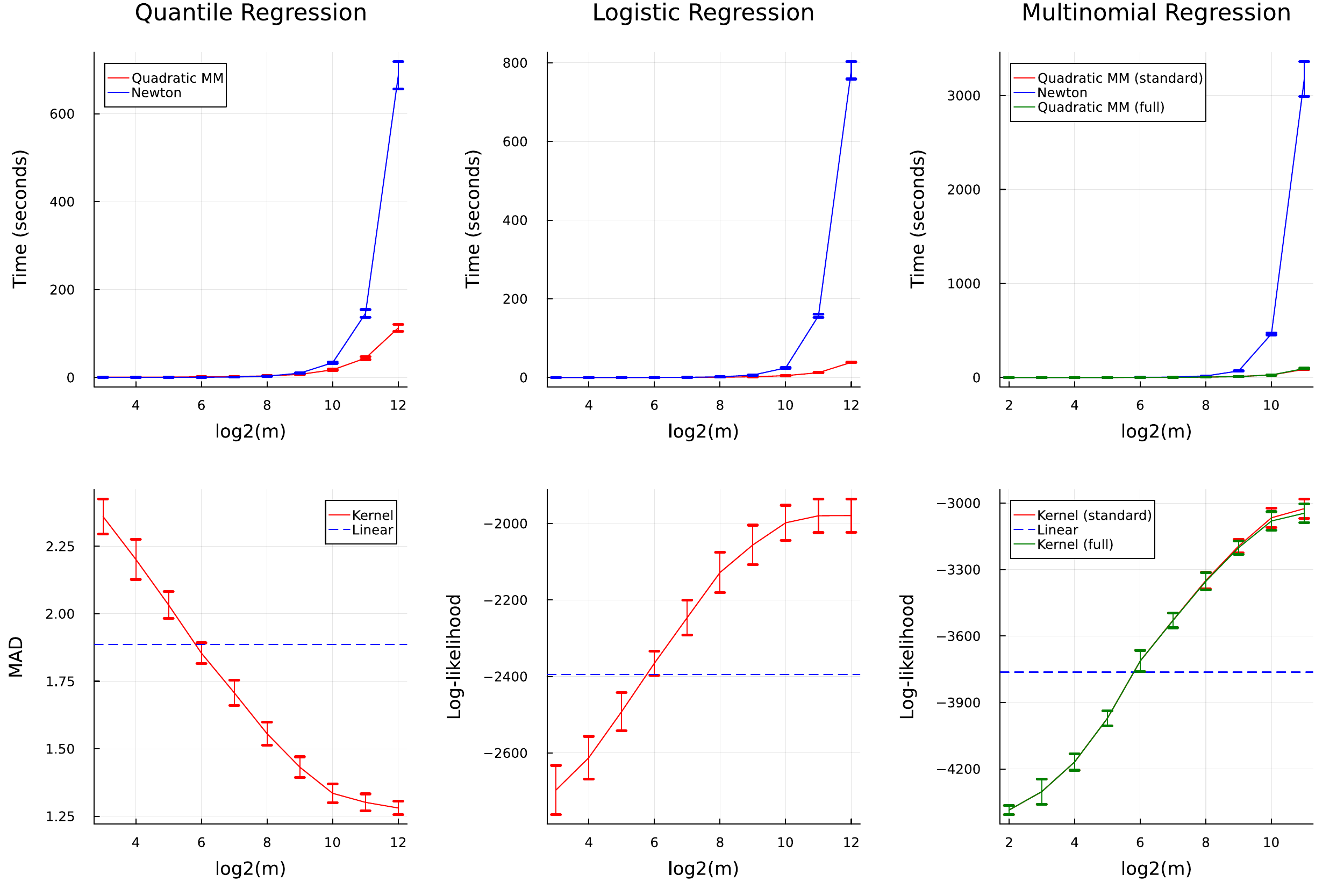}
  \caption{Total time to compute entire solution path and best accuracy on the validation set using bandwidth $\sigma=30$. All data points are averaged over 20 random replicates.}
  \label{fig:path2}
\end{figure}

\Fig{path1} and \Fig{path2} present experimental results for kernel bandwidths $\sigma = 15$ and $\sigma = 30$, respectively. For small values of $m$, the Newton-Raphson method exhibits superior efficiency due to its rapid convergence. However, as $m$ increases, the computational benefits of reusing matrix decompositions in QMME become increasingly pronounced. In general, QMME begins to outperform Newton-Raphson when $m \geq 2^{10}$, with the performance gap widening as $m$ continues to grow. In the context of kernel multinomial regression, QMME significantly outpaces Newton-Raphson for large $m$, thanks to the use of the Sylvester equation technique that exploits the special structure of the Hessian.

The second rows of \Fig{path1} and \Fig{path2} illustrate that the best achievable accuracy of kernel methods tends to improve with increasing projection dimension $m$. While this improvement may plateau earlier for low-dimensional data, it remains substantial for datasets with large ambient dimension $d$. The results for $\sigma = 15$ and $\sigma = 30$ demonstrate similar trends in both accuracy and scalability. Kernel multinomial regression with full parameterization runs slightly slower and yields marginally worse log-likelihoods compared to the standard parameterization in this subsection.

\subsection{Increasing the Number of Categories $q$}\label{sec:increaseq}
We have demonstrated in the previous two sections that the efficiency gap between QMME and Newton-Raphson is especially large in multinomial regression, where Newton’s method requires computing and inverting a large Hessian matrix of the form $\sum_{i=1}^n (\Lambda_{\hat{p}}-\hat{p}\hat{p}^\top)\Kron (KG^\top)_i(KG^\top)_i^\top + \lambda I_{q-1} \Kron GKG^\top$, which is of dimension $m(q-1) \times m(q-1)$. In particular, notice that $\sum_{i=1}^n (\Lambda_{\hat{p}}-\hat{p}\hat{p}^\top)\Kron (KG^\top)_i(KG^\top)_i^\top$ does not equal $(\sum_{i=1}^n (\Lambda_{\hat{p}}-\hat{p}\hat{p}^\top))\Kron (\sum_{i=1}^n  (KG^\top)_i(KG^\top)_i^\top)$ in general \citep{bohning1992multinomial}. Inverting such an $m(q-1) \times m(q-1)$ matrix incurs a computational cost of $O(m^3(q-1)^3)$,  so that the computational burden grows dramatically for larger $q$. In contrast, QMME bypasses explicit Hessian inversion by leveraging the Sylvester equation formulation and reusing matrix decompositions, achieving a significantly lower per-iteration complexity of $O(m^2(q-1))$.
\begin{figure}[t]
  \centering
  \includegraphics[width=\textwidth]{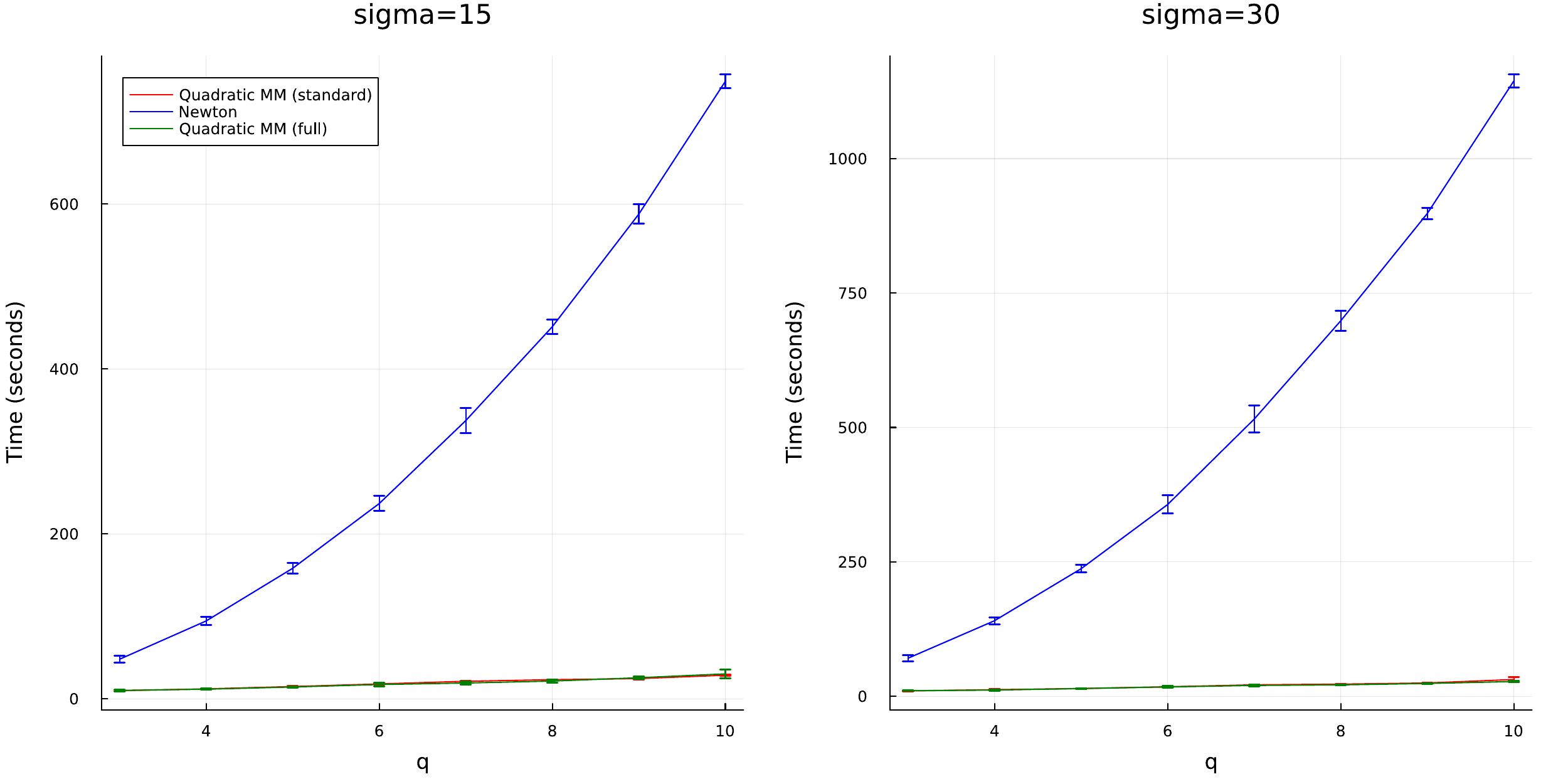}
  \caption{Computation time and best attainable log-likelihood for kernel multinomial regression with increasing $q$. Data points are averaged over 20 random replicates.}
  \label{fig:increaseq}
\end{figure}
In the previous two subsections, we have considered $q=3$, which is in fact conservative in terms of highlighting the computational advantage of QMME over Newton-Raphson. In this section we use a dedicated experiment to examine the relative efficiency of QMME over Newton-Raphson as $q$ increases. Here, $\eta_{1i}$ and $\eta_{2i}$ are still generated in the same way as the beginning of \Sec{ne}. However, here we add in $\eta_{3i}, \eta_{4i},\dots, \eta_{q-1,i}=0$ and compute the probabilities as 
\[
p_{ki} = \frac{\exp(\eta_{ki})}{\sum_{j=1}^{q-1} \exp(\eta_{ji}) + 1}, \quad k = 1,2,\dots,q-1, \qquad p_{qi} = 1 - \sum_{j=1}^{q-1} p_{ji}.
\]
The response $b_i$ is then generated from $\text{Multinomial}[\{1,2,\dots,q\},(p_{1i},p_{2i},\dots, p_{qi})]$. We still use the sample size $n=2^{14}$ but fix $m$ at $m=2^9$. We then vary the number of classes as $q\in\{3,4,5,\dots,10\}$. We again report the total computation time for solving the entire solution path starting from $\lambda=10^1$ to $\lambda=10^{-5}$. As shown in \Fig{increaseq}, the total computation time for Newton-Raphson increases roughly cubically with $q$, whereas QMME shows near-linear scaling. These findings are consistent with our complexity analysis and further demonstrate that QMME becomes increasingly advantageous as the number of classes grows.

\section{Real Data Example}\label{sec:RDA}

To evaluate the performance of our kernel multinomial regression algorithms on real-world data, we use the Codon Usage dataset \citep{hallee2023machine} from the UCI Machine Learning Repository (\url{https://archive.ics.uci.edu/dataset/577/codon+usage}). This dataset contains codon usage frequencies from the coding DNA of a large and diverse set of organisms across multiple taxa, as recorded in the CUTG database. Each instance corresponds to a unique species or genomic entry. The first column denotes the organism’s taxonomic category, such as bct (bacteria), inv (invertebrates), and mam (mammals), for a total of 11 distinct classes. The features, located in columns 6 through 69, represent normalized frequencies of the standard 64 codons, reflecting their relative proportions within an organism’s coding sequences. The dataset includes 13,028 instances and 64 continuous features, making it a compelling test case for high-dimensional, multi-class classification with structured and biologically meaningful predictors. During preprocessing, we impute a small number of missing values in the first two features using column means and then standardize all feature columns.

We adopt the following workflow to fit and evaluate kernel multinomial regression models. The dataset is partitioned into training (70\%), validation (10\%), and test (20\%) subsets.  We set the kernel bandwidth to $\sigma=10$. A solution path of length 30 is computed on the training set, with the regularization parameter $\lambda$ logarithmically spaced from $10^{1}$ to $10^{-5}$. The optimal $\lambda$ is selected based on the log-likelihood evaluated on the validation set. \Fig{illustrate} shows an example of the log-likelihood path evaluated on both the validation and test sets. Notice that the test set has twice as many observations, so the log-likelihoods will naturally be lower than the validation set. Using the combined training and validation data, we then recompute the kernel matrix $K$, resample the sketching matrix $G$, and refit the models using the selected $\lambda$. Model performance is assessed on the test set using classification accuracy and log-likelihood. When generating the sketching matrix $G$, we find that a simple stratified sampling strategy can noticeably improve predictive performance. In particular, we ensure that the sampled rows of $G$ preserve the class proportions in the response variable. This helps prevent underrepresentation of minority classes due to random variation.

\begin{figure}[t]
  \centering
  \includegraphics[width=0.6\textwidth]{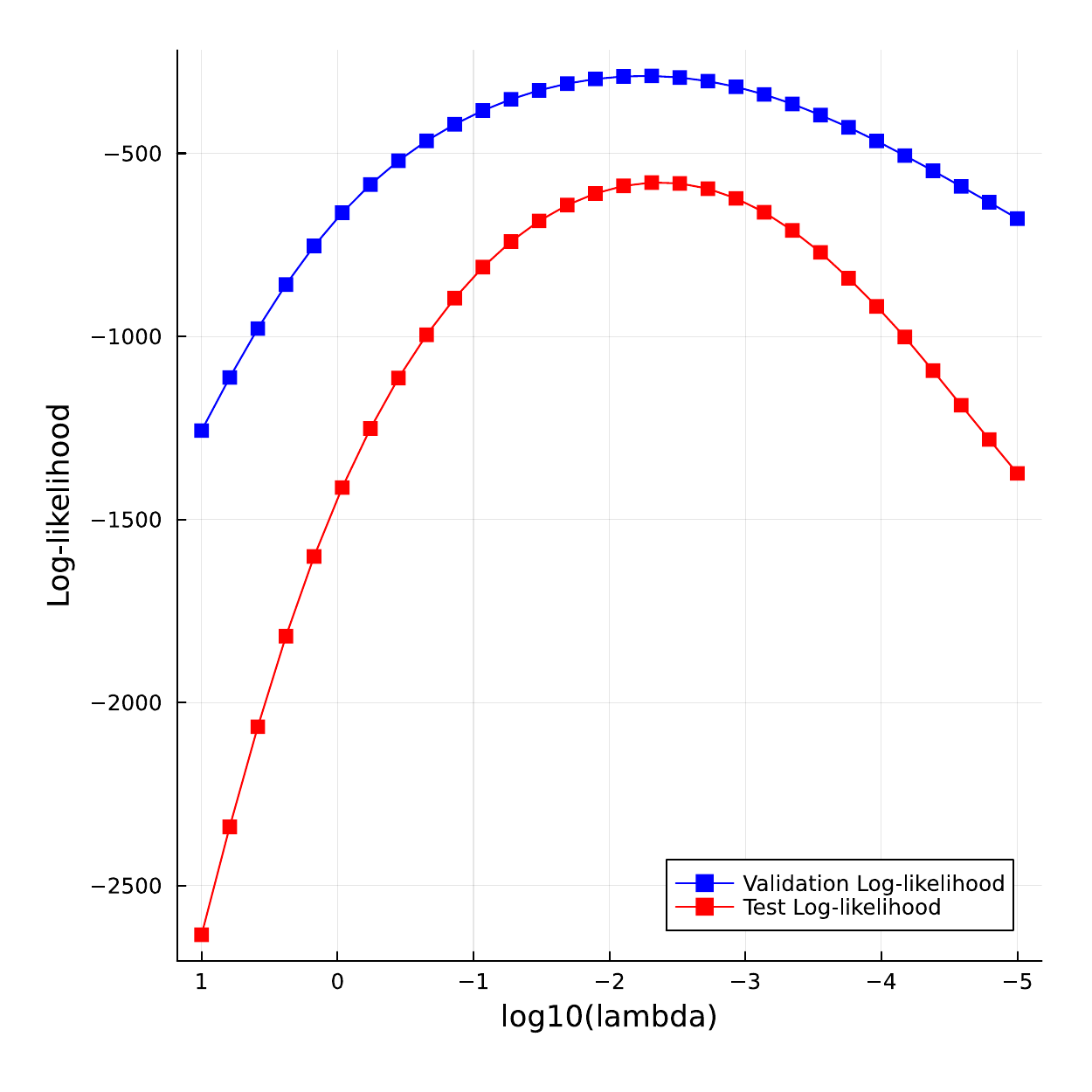}
  \caption{Validation and test log-likelihoods along the regularization path for projection dimension $m = 1024$. }
  \label{fig:illustrate}
\end{figure}

We consider projection dimensions $m \in \{\lfloor n/64 \rfloor, \lfloor n/32 \rfloor, \lfloor n/16 \rfloor, \lfloor n/8 \rfloor, \lfloor n/4 \rfloor\}$ where $n$ is the number of observations in the training set and validation set combined. We compare three methods: QMME with standard parameterization, QMME with full parameterization, and Newton’s method. Due to the prohibitive computational cost, results for Newton’s method at $m = \lfloor n/8 \rfloor$ and $m = \lfloor n/4 \rfloor$ are omitted. The train-validation-test split is repeated 20 times, and all reported metrics are averaged across these random partitions.

From \Tab{method_comparison}, we observe a clear upward trend in both classification accuracy and log-likelihood for all methods as the projection dimension $m$ increases. This suggests that for this relatively high-dimensional dataset ($d=64$), a large projection dimension is crucial for achieving strong performance with kernel methods. The QMME algorithm (with standard parameterization) and Newton’s method yield nearly identical results for the same $m$, which is expected since they optimize the same convex objective function. The kernel multinomial regression model with full parameterization consistently achieves higher log-likelihood values, albeit with a slight sacrifice in classification accuracy. Additionally, a practical advantage of the full parameterization is that it eliminates the need to arbitrarily select a reference category.
\begin{table}[t]
\centering
\caption{Comparison of Kernel Multinomial Regression Methods Across Sketch Sizes}
\label{tab:method_comparison}
\resizebox{\textwidth}{!}{
\begin{tabular}{c l c c c}
\hline
\textbf{Sketch Size $m$} & \textbf{Method}       & \textbf{Time (s)} & \textbf{Log-Likelihood} & \textbf{Accuracy (\%)} \\
\hline
\multirow{3}{*}{$\lfloor n/64 \rfloor$}  & QMME (standard) &   19.49 (1.98)           &   -739.30 (53.00)           &   90.87 (0.82)           \\
                      & QMME (full)     &  18.56 (1.66)             &    -727.93 (52.17)           &   91.03 (0.80)           \\
                      & Newton        &    62.5 (3.82)          &    -739.23 (52.94)            &   90.86 (0.81)           \\
\hline
\multirow{3}{*}{$\lfloor n/32 \rfloor$}  & QMME (standard) &   37.47 (0.97)           &    -644.67 (45.97)            &  92.34 (0.71)            \\
                      & QMME (full)     &   38.20 (1.28)            &  -632.05 (45.54)              &   92.41 (0.74)            \\
                      & Newton        &   245.7 (7.29)           &   -644.69 (45.98)              &     92.34 (0.73)         \\
\hline
\multirow{3}{*}{$\lfloor n/16 \rfloor$}  & QMME (standard) &  88.75 (1.82)            &  -581.02 (44.54)               &  93.56 (0.61)            \\
                      & QMME (full)     &  92.38 (2.18)            &   -568.98 (43.04)             &   93.58 (0.52)           \\
                      & Newton        &    1018.25 (28.97)          &    -580.93 (44.42)            &   93.56 (0.61)            \\
\hline
\multirow{3}{*}{$\lfloor n/8 \rfloor$} & QMME (standard) &     239.33 (7.65)          &  -545.33 (35.98)              & 93.92 (0.38)             \\
                      & QMME (full)     &  251.75 (6.26)            &  -534.56 (35.31)              &  93.89 (0.52)            \\
                      & Newton        & ---          & ---            & ---          \\
\hline
\multirow{3}{*}{$\lfloor n/4 \rfloor$} & QMME (standard) &    804.14 (30.75)          &   -530.32 (37.41)              &  94.14 (0.52)            \\
                      & QMME (full)     &  852.82 (31.10)            &  -521.61 (36.34)              &  94.05 (0.41)           \\
                      & Newton        & ---          & ---            & ---          \\
\hline
\end{tabular}}
\end{table}
In terms of computational efficiency, QMME consistently outperforms Newton’s method, with the performance gap widening as $m$ increases. Notably, when $m$ exceeds $1000$, the computational cost of Newton’s method becomes practically prohibitive. This speed advantage aligns with our earlier discussion in \Sec{increaseq} on computational complexity. To our knowledge, QMME is the first method/framework that is capable of computing large scale kernel multinomial regression problems. 

To provide additional context for the accuracy results reported in \Tab{method_comparison}, we also include the performance of the nearest neighbor (NN) classifier. The NN classifier achieves an average classification accuracy of 93.67\% with a standard deviation of 0.41\%. Despite its simplicity, NN is a surprisingly strong baseline, as also observed in \citet{hallee2023machine}. In that study, using an 80\%/20\% train-test split, NN achieved an accuracy of 96.60\%, while the best-performing model—an ensemble of multiple machine learning methods—achieved 96.95\%. The higher accuracies reported in \citet{hallee2023machine} can be attributed in part to the coarser classification task, which involved only five categories: archaea, bacteria, eukaryote, bacteriophage, and virus. \citet{hallee2023machine} also removed a small proportion of samples in the pre-processing step. Our task involves a more fine-grained classification into 11 distinct categories and includes all of the samples. We are encouraged to observe that kernel multinomial regression, as a single-method approach, is able to outperform NN by nearly 0.5\% in accuracy at $m=\lfloor n/4\rfloor$, which underscores its competitive performance in classification accuracy. Additionally, a crucial advantage of kernel multinomial regression is the ability to provide well-calibrated class probabilities as exemplified by the log-likelihoods.

\section{Conclusion}\label{sec:discuss}

In this work, we develop a Quadratic Majorization Minimization with Extrapolation (QMME) framework that bridges the gap between computationally expensive second-order methods and the slow convergence of first-order approaches. By minimizing a quadratic majorant with fixed curvature at each iteration, QMME achieves a favorable trade-off between per-iteration cost and convergence speed. A key strength of the method lies in its ability to reuse matrix decompositions (e.g., Cholesky, spectral, Schur) across iterations and regularization parameters, significantly enhancing efficiency in large-scale settings.

Our theoretical analysis is carried out with respect to the norm induced by the curvature matrix $H$, leading to several desirable convergence guarantees. Specifically, we establish subsequential convergence as long as the extrapolation step size is bounded away from 1 and global convergence under the Kurdyka-Łojasiewicz (KL) property. Furthermore, by relating QMME to inertial Krasnoselskii-Mann iterations, we show that global convergence can be obtained under relaxed assumptions with additional restrictions on the extrapolation step size.

On the practical side, we demonstrate the efficacy of QMME in kernel-regularized learning. In addition to developing more scalable algorithms for kernel-smoothed quantile regression and kernel logistic regression, we introduce a novel Sylvester equation-based approach for kernel multinomial regression. This technique significantly reduces the dimensionality of the normal equations when the number of classes $q$ is large. Numerical experiments confirm that QMME outperforms established first- and second-order methods (e.g., FISTA, Newton) in speed in the presence of large sketching matrices. The method’s synergy with kernel approximation techniques such as Nyström further advances the scalability of kernel methods. We note that our method may also apply to other kernel based problems or smooth learning problems, such as smooth support vector machines \citep{wang2022density,JMLR:v26:23-1581}.

Our global convergence analysis is primarily based on established frameworks like the KL property. However, we empirically observe that QMME often converges robustly even when the KL condition may not hold, for instance, in multinomial regression models with full parameterization with $\lambda=0$. This motivates the development of new theoretical frameworks imposing weaker assumptions than KL. While the algorithm and analysis presented in this article are applicable to composite optimization, our experiments have focused on smooth problems. It remains an important direction to explore the practical performance of QMME in nonsmooth settings. Promising applications may include $\ell_1$ penalized convolution smoothed quantile regression \citep{JMLR:v24:22-1217,man2024unified}.



\newpage

\appendix

\section{Additional Proofs}\label{sec:proofglobal}
The proof of \Thm{globalconverge} (ii) and \Thm{locallinear} follow arguments in \cite{Wen2018APD}, with the main distinction being that our arguments are built around the induced norm $\|\cdot\|_H$. 
\subsection{Proof of \Thm{globalconverge} (ii)}
\begin{proof}
We note that $E(x^k, x^{k-1})$ is the same as $E_k$ which we defined earlier, so that it converges to $\zeta = \lim_{{k\rightarrow \infty}} f(x^k)$. In the meantime, it is not hard to see that the set of accumulation points sequence $\{(x^k,x^{k-1})\}$ is $\Omega\times\Omega$, where 
$\Omega$ is defined in \Prop{constantcluster}. Consider any $\hat{x}\in \Omega$, we have $E(\hat{x},\hat{x}) = f(\hat{x}) =\zeta$. Thus $E(x,y)\equiv\zeta$ on $\Omega\times \Omega$. 

Next, due to \Thm{subsequenceconverge}, we only need to show that $\{x^k\}$ is a convergent sequence. We first remark on an edge case where $E(x^k,x^{k-1})=\zeta$ for some finite $\bar{k}$. Since $E(x^k,x^{k-1})$ is non-decreasing and convergent to $\zeta$, we conclude that for any $k\ge \bar{k}$, $E(x^k,x^{k-1})=\zeta$, and $x^k = x^{\bar{k}}$ for any $k\ge \bar{k}$.

Next, we consider the case where $E(x^k,x^{k-1})>\zeta$ for all $k$. Under \As{bound}, we know that all level sets of $f$ are bounded, then we have
\begin{eqnarray*}
f(x^k) & \le & f(x^k)+\frac{1}{2}\|x^k-x^{k-1}\|_H^2\le f(x^1)+\frac{1}{2}\|x^1-x^{0}\|_H^2 =  f(x^1),
\end{eqnarray*}
which allows us to conclude that the sequence $\{x^k\}$ is bounded. Then, the set of accumulation points $\Omega$ is compact. Due to \Lem{uniformKL}, there exists $\epsilon,a>0$, and a concave continuous function $\phi$ such that 
\begin{eqnarray*}
\phi'(E(x,y)-\zeta)\text{dist} ((0,0),\partial E(x,y)) & \ge & 1, 
\end{eqnarray*}
for all $(x,y)\in U$, where
\begin{eqnarray*}
U = \{(x,y) \in \Real^n \times \Real^n| \text{dist}((x,y),\Omega\times \Omega)<\epsilon\}\cap \{(x,y) \in \Real^n \times \Real^n|\zeta<E(x,y)<\zeta+a\}.
\end{eqnarray*}
Using the fact that $\{x^k\}$ is bounded and $\Omega\times \Omega$ is the set of accumulation points of $(x^k,x^{k-1})$, we have $\lim_{{k\rightarrow \infty}} \text{dist}((x^k,x^{k-1}),\Omega\times\Omega)=0$. Additionally, $E(x^k,x^{k-1})$ is non-increasing and convergent to $\zeta$. Thus there exists $k_0$ such that for all $k\ge k_0$, we have $(x^k,x^{k-1})\in U$. so that
\begin{eqnarray*}
\forall k\ge k_0, && \phi'(E(x^k,x^{k-1})-\zeta)\text{dist}((0,0),\partial E(x^k,x^{k-1})) \ge 1.
\end{eqnarray*}
Using the concavity of $\phi$, we have for any $k\ge k_0$, 
\begin{align*}
&[(\phi (E(x^k,x^{k-1})-\zeta)-(\phi (E(x^{k+1},x^{k})-\zeta)]\text{dist}((0,0),\partial E(x^k,x^{k-1}))\\
&\ge\phi' (E(x^k,x^{k-1})-\zeta) \text{dist}((0,0),\partial E(x^k,x^{k-1})) [E(x^k,x^{k-1})-E(x^{k+1},x^{k})]\\
& \ge E(x^k,x^{k-1})-E(x^{k+1},x^{k})
\end{align*}
Due to \Eqn{diffofE}, there exists a positive number $D$ such that
\begin{eqnarray*}
E(x^k,x^{k-1})-E(x^{k+1},x^{k}) &\ge& D \|x^k -x^{k-1}\|_H^2, 
\end{eqnarray*}
then we have
\begin{align*}
\|x^k -x^{k-1}\|_H^2 \le \frac{C}{D}(\phi (E(x^k,x^{k-1})-\zeta)-(\phi (E(x^{k+1},x^{k})-\zeta)) (\|x^k - x^{k-1}\|_H+\|x^{k-1}-x^{k-2}\|_H)
\end{align*}
Taking square roots on both sides and using the AM-GM inequality, we have
\begin{eqnarray*}
\|x^k -x^{k-1}\|_H &\le & \sqrt{\frac{2C}{D}(\phi (E(x^k,x^{k-1})-\zeta)-(\phi (E(x^{k+1},x^{k})-\zeta))}\\
&&\cdot\sqrt{\frac{\|x^k - x^{k-1}\|_H+\|x^{k-1}-x^{k-2}\|_H}{2}}\\
& \le &\frac{C}{D}(\phi (E(x^k,x^{k-1})-\zeta)-(\phi (E(x^{k+1},x^{k})-\zeta))\\&&+\frac{\|x^k - x^{k-1}\|_H+\|x^{k-1}-x^{k-2}\|_H}{4},
\end{eqnarray*}
so that
\begin{eqnarray*}
\frac{1}{2} \|x^k -x^{k-1}\|_H  &\le  \frac{C}{D}(\phi (E(x^k,x^{k-1})-\zeta)-(\phi (E(x^{k+1},x^{k})-\zeta))\\
&+ \frac{1}{4} (\|x^{k-1}-x^{k-2}\|_H -\|x^{k}-x^{k-1}\|_H),
\end{eqnarray*}
which implies that 
\begin{eqnarray*}
\sum_{k=k_0}^\infty \| x^k - x^{k-1}\|_H & \le & \frac{2C}{D}\phi (E(x^{k_0},x^{{k_0}-1})-\zeta) + \frac{1}{2}\|x^{{k_0}-1}-x^{{k_0}-2}\|_H< \infty,
\end{eqnarray*}
which implies the convergence of $x^k$. 
\end{proof}
\subsection{Proof of \Thm{locallinear}}\label{sec:locallinear}
\begin{proof}
Following the proof of \Thm{globalconverge} (ii), denote $\sum_{i=k}^\infty \| x^{i+1}- x^{i}\|_H$ as $S_k$,   for any $k\ge k_0$ we have
\begin{eqnarray}\label{eq:Strelation}
S_k \le S_{k-1}& \le & \frac{2C}{D}\phi (E(x^{k},x^{{k}-1})-\zeta) + \frac{1}{2}\|x^{{k}-1}-x^{{k}-2}\|_H \nonumber \\
& =  &\frac{2C}{D}\phi (E(x^{k},x^{{k}-1})-\zeta) +  \frac{1}{2} (S_{k-2}-S_{k-1}) \nonumber\\
& \le & \frac{2C}{D}\phi (E(x^{k},x^{{k}-1})-\zeta) +  \frac{1}{2} (S_{k-2}-S_{k})
\end{eqnarray}
Using the KL inequlaity for $E(x,y)$, we have
\begin{eqnarray*}
 \frac{1}{\sqrt{2\mu_E}} (E(x^{k},x^{{k}-1})-\zeta)^{-1/2}\text{dist}((0,0),\partial E(x^k,x^{k-1})) &\ge & 1.
\end{eqnarray*}
Due to \Eqn{distancebound}, for sufficiently large $k$,
\begin{eqnarray*}
\text{dist}((0,0),\partial E(x^k,x^{k-1}))&\le& C(S_{k-2}-S_{k}),
\end{eqnarray*}
thus, for sufficiently large $k$,
\begin{eqnarray*}
(E(x^{k},x^{{k}-1})-\zeta)^{1/2} & \le &\frac{1}{\sqrt{2\mu_E}} \cdot C \cdot (S_{k-2}-S_{k}).
\end{eqnarray*}
Plugging into \Eqn{Strelation}, recall that the KL function is \(\phi(s) = \frac{\sqrt{2}}{\sqrt{\mu_E}} s^{1/2}\), so that for sufficiently large $k$, 
\begin{eqnarray*}
S_k & \le & \frac{2C^2}{D\mu_E} (S_{k-2}-S_{k}) + S_{k-2}-S_{k} = (\frac{2C^2}{D\mu_E}+1)( S_{k-2}-S_{k}),
\end{eqnarray*}
Let $C_1 = \frac{2C^2}{D\mu_E}$, there exists $k_1>0$ such that for all $k\ge k_1$
\begin{eqnarray*}
S_k & \le & \frac{C_1+1}{C_1+2} S_{k-2},
\end{eqnarray*}
Then for all $k\ge k_1$
\begin{eqnarray*}
\|x^k - \bar{x}\|_H \le \sum_{i=k}^\infty \|x^{k+1} - x^k\|_H = S_{k}&\le &S_{k_1-2}\left(\sqrt{\frac{C_1+1}{C_1+2}}\right)^{k-k_1+1},
\end{eqnarray*}
so that
\begin{eqnarray*}
\|x^k - \bar{x}\| & \le & \frac{S_{k_1-2}}{\sqrt{\sigma_{\min}(H)}}\left(\sqrt{\frac{C_1+1}{C_1+2}}\right)^{k-k_1+1}.
\end{eqnarray*}
 
\end{proof}
\section{Generalization to Composite Optimization}
In this section, we generalize the QMME framework to minimize composite functions of the form $F(x) = f(x)+g(x)$. $f(x)$ satisfies the same assumptions we make for smooth optimization, while $g(x)$ is lower semicontinuous and convex but possibly non-smooth. QMME for minimizing $F(x)$ proceeds via the following iterations:
\begin{eqnarray}\label{eq:MMextrapolate2}
y^k & = &x^k + \beta_k (x^k - x^{k-1}),\nonumber\\
x^{k+1} & = &\underset{x\in \Real^n}{\text{argmin}} \quad \langle \nabla f(y^k) , x\rangle +\frac{1}{2}\| y^k - x \|_H^2 + g(x).
\end{eqnarray}
The update in \Eqn{MMextrapolate2} corresponds to a generalized proximal operator, and the resulting QMME scheme is closely related to proximal Newton methods \citep{lee2014proximal}. Evaluating this generalized proximal operator is generally nontrivial and typically requires an iterative subroutine, such as coordinate descent for $\ell_1$-penalized generalized linear models \citep{friedman2010regularization}. 

Although the algorithm and theory developed for \Eqn{MMextrapolate} generalize to the composite case, the practical advantages of QMME over classical proximal Newton methods warrant further investigation. In particular, the use of a quadratic surrogate with fixed curvature matrix $H$ in \Eqn{MMextrapolate2} may not lead to substantially cheaper iterations compared to directly using the original Hessian. However, the QMME framework may enjoy more robust convergence. We leave the empirical comparison between QMME and Proximal-Newton as future work. In what follows, we sketch the convergence analysis, highlighting only the components that differ from the smooth setting.

Unlike smooth optimization, the optimal set $\mathcal{X}$ changes to $\{x\in \Real^n|0\in \nabla f(x)+\partial g(x)\}$. The statement and proof for \Lem{strongconvexity} need to be adapted as follows:
\begin{lemma}
We have for any $z\in \Real^n$,
\begin{eqnarray}\label{eq:strongconvexity2}
&& \langle \nabla f(y^k),x^{k+1}\rangle + \frac{1}{2} \| y^k - x^{k+1} \|_H^2  + g(x^{k+1}) +\frac{1}{2} \| z - x^{k+1} \|_H^2    \nonumber\\& \le & \langle \nabla f(y^k) , z\rangle +\frac{1}{2}\| y^k - z \|_H^2 + g(z).
\end{eqnarray}
\end{lemma}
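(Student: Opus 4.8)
The plan is to follow the proof of \Lem{strongconvexity} almost verbatim, the only structural change being that the exact gradient optimality condition becomes a subgradient inclusion, which downgrades the equality to the stated inequality. I will present the argument through strong convexity, which keeps the bookkeeping minimal.

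First I would introduce the subproblem objective
\[
\phi(x) := \langle \nabla f(y^k), x\rangle + \tfrac{1}{2}\|y^k - x\|_H^2 + g(x),
\]
whose minimizer is $x^{k+1}$ by definition of the update \Eqn{MMextrapolate2}. The key observation is that $\phi$ is $1$-strongly convex in the induced norm $\|\cdot\|_H$: the quadratic term $\tfrac{1}{2}\|y^k - x\|_H^2$ has Hessian $H$ in $x$ and therefore contributes modulus $1$ relative to $\|\cdot\|_H$, the term $\langle \nabla f(y^k), x\rangle$ is affine, and $g$ is convex by assumption. Invoking the standard lower bound for a minimizer of a strongly convex function then gives, for every $z \in \Real^n$,
\[
\phi(z) \ge \phi(x^{k+1}) + \tfrac{1}{2}\|z - x^{k+1}\|_H^2.
\]
Writing out $\phi(z)$ and $\phi(x^{k+1})$ and transposing the $g$ and inner-product terms reproduces \Eqn{strongconvexity2} exactly.

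An equivalent and more hands-on route stays closer to the original proof. The first-order optimality condition for \Eqn{MMextrapolate2} reads $0 \in \nabla f(y^k) + H(x^{k+1} - y^k) + \partial g(x^{k+1})$, so there is a subgradient $v \in \partial g(x^{k+1})$ with $H(y^k - x^{k+1}) = \nabla f(y^k) + v$. Substituting this relation into the three-point expansion \Eqn{expand} in place of the gradient identity used for \Lem{strongconvexity} leaves exactly one extra term $\langle v, x^{k+1} - z\rangle$; the convexity inequality $g(z) \ge g(x^{k+1}) + \langle v, z - x^{k+1}\rangle$ absorbs this term and yields \Eqn{strongconvexity2}.

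The one place demanding care is this passage from equality to inequality. In the smooth setting the optimality condition determines $\nabla f(y^k)$ exactly, so \Lem{strongconvexity} is an identity; here only an inclusion is available, and correctness hinges on selecting the particular subgradient $v$ produced by that inclusion and then applying the convexity of $g$ in the correct direction, so that the sign of $\langle v, z - x^{k+1}\rangle$ works in our favor. Beyond this, the derivation uses only the same linear-algebra identities already established for \Lem{strongconvexity}, and no new estimates are required.
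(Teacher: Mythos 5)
Your proposal is correct and, in its second ("hands-on") form, is essentially identical to the paper's proof: both extract a subgradient $v\in\partial g(x^{k+1})$ from the optimality condition of \Eqn{MMextrapolate2}, substitute it into the three-point expansion \Eqn{expand}, and absorb the extra term via the convexity of $g$. The strong-convexity packaging you lead with is just a clean restatement of that same computation (the subproblem objective is $1$-strongly convex in $\|\cdot\|_H$), so no new ideas or estimates are involved.
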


\begin{proof}
Due the optimality condition of minimization problem \Eqn{MMextrapolate2}, we have
\begin{eqnarray*}
0 \in \nabla f (y^k) + H(x^{k+1} - y^k) + \partial g(x^{k+1}),
\end{eqnarray*}
in other words there exists $v\in \partial g(x^{k+1})$ such that
\begin{eqnarray*}
\nabla f (y^k) + H(x^{k+1} - y^k) + v = 0.
\end{eqnarray*}
Using the subgradient inequality for $g$ at $x^{k+1}$, we have
\begin{eqnarray*}
g(z) &\ge & g(x^{k+1}) + \langle v, z - x^{k+1}).
\end{eqnarray*}
Sustitute $v = - \nabla  f (y^k) - H(x^{k+1} - y^k)$ into the inequality gives us 
\begin{eqnarray}\label{eq:subgrad}
g(z) \ge g(x^{k+1}) + \langle - \nabla  f (y^k) - H(x^{k+1} - y^k), z - x^{k+1})
\end{eqnarray}
Adding \Eqn{subgrad} and \Eqn{expand} gives us 
\begin{eqnarray}\label{eq:rearrange2}
g(z) + \frac{1}{2}\|y^k -z\|_H^2 & \ge & g(x^{k+1}) + \langle - \nabla  f (y^k), z - x^{k+1}\rangle + \frac{1}{2} \| y^k - x^{k+1} \|_H^2\\&& + \frac{1}{2}\| x^{k+1}-z\|_H^2. \nonumber
\end{eqnarray}
Rearranging terms of \Eqn{rearrange2} gives us exactly \Eqn{strongconvexity2}. 
\end{proof}
The auxillary sequence $E_k$ should now be defined as $E_k = F(x^k)+\frac{1}{2} \|x^k - x^{k-1}\|_H^2$ and we can prove \Lem{aux} simply by replacing $f(x)$ with $F(x)$ in \Eqn{boundnext}, \Eqn{boundf},\Eqn{difff},\Eqn{diffofE}. In the statement of \Thm{subsequenceconverge}, $f$ should be replaced with $F$. The proof of  (ii) is now:
\begin{proof}
Due to the optimality condition of \Eqn{MMextrapolate2}, we have
\begin{eqnarray*}
0 \in \nabla f(y^{k_i}) + H (x^{k_i+1} - y^{k_i}) + \partial g(x^{k_i+1}). 
\end{eqnarray*}
Combined with the definition of $y^{k_i}$, we have
\begin{eqnarray*}
-H (x^{k_i+1} - x^{k_i}-\beta_k (x^{k_i} - x^{k_i-1})) & \in & \nabla f(y^{k_i}) + \partial g(x^{k_i+1})
\end{eqnarray*}
Taking $i$ to the limit, due to the Lipschitz continuity of $\nabla f$ and the closedness of the multifunction $\partial g$ (see page 80, Exercise 8 of \cite{jonathan2006convex}), we have
\begin{eqnarray*}
0 &\in& \nabla f(\bar{x}) + \partial g(\bar{x}). 
\end{eqnarray*}
Thus, any accumulation point of $\{x^k\}$ is a stationary point for $F$.
\end{proof}
Similarly, $f$ should become $F$ in \Prop{constantcluster} and the proof of (ii) becomes slightly more involved.
\begin{proof}
For any $\hat{x}\in \Omega$, there is a subsequence $\{x^{k_i}\}$ such that $x^{k_i}\rightarrow \hat{x}$. Since $x^{k_i+1}$ is the minimizer of \Eqn{MMextrapolate}, one has
\begin{eqnarray}\label{eq:minimizer}
&&g(x^{k_i}) + \langle \nabla f(y^{k_i-1}), x^{k_i}\rangle  + \frac{1}{2}\|x^{k_i} - y^{k_i-1}\|_H^2\\&\le& g(\hat{x}) + \langle \nabla f(y^{k_i-1}), \hat{x}\rangle  + \frac{1}{2}\|\hat{x} - y^{k_i-1}\|_H^2. \nonumber
\end{eqnarray}
We note that due to \Thm{subsequenceconverge}, $x^{k_i} - y^{k_i-1} = x^{k_i}- x^{k_i-1} - \beta^{k_i-1}(x^{k_i-1}- x^{k_i-2}) \rightarrow 0$, and $\hat{x} - y^{k_i-1} = \hat{x}- x^{k_i-1} - \beta^{k_i-1}(x^{k_i-1}- x^{k_i-2}) \rightarrow 0$. Taking $i$ to infinity on both sides of \Eqn{minimizer} gives us
\begin{eqnarray*}\label{eq:minimizer2}
\zeta & = & \underset{i\rightarrow \infty}\lim f(x^{k_i})+ g(x^{k_i})\\
      & = & \underset{i\rightarrow \infty}\lim f(x^{k_i})+ g(x^{k_i}) + \langle \nabla f(y^{k_i-1}), x^{k_i}-\hat{x}\rangle  + \frac{1}{2}\|x^{k_i} - y^{k_i-1}\|_H^2\\
      & \le & \underset{i\rightarrow \infty}\limsup f(x^{k_i}) + g(\hat{x}) +  \frac{1}{2}\|\hat{x} - y^{k_i-1}\|_H^2 = F(\hat{x}) \le\underset{i\rightarrow \infty}\liminf F(x^{k_i}) = \zeta.
\end{eqnarray*}
where the first inequality follows from the \Eqn{minimizer} and the second inequality follows from the lower semicontinuity of $F$. 
\end{proof}
Finally, for \Thm{globalconverge}, $f$ should again be replaced with $F$. The proof sketch is given as follows:
\begin{proof}
To establish the result, we need to change the definition of $E(x,y)$ to $E(x,y) = F(x)+\frac{1}{2}\|x-y\|_H^2+g(x)$. Then $\partial E(x^k, x^{k-1}) = (\nabla f(x^k)+H(x^k-x^{k-1})+\partial g(x^k), - H(x^k - x^{k-1}))$. Using the optimality condition of \Eqn{MMextrapolate2} we have
\begin{eqnarray*}
0 &\in &\nabla f(y^{k-1}) + H (x^{k} - y^{k-1}) + \partial g(x^{k}). 
\end{eqnarray*}
so that
\begin{eqnarray*}
 (\nabla f(x^k) - \nabla f(y^{k-1}) -H(x^{k-1}-y^{k-1}), - H(x^k - x^{k-1}) ) & \in & \partial E(x^k,x^{k-1}).
\end{eqnarray*}
The remaining parts of the proof are nearly identical.
\end{proof}

\bibliography{refs}

\begin{thebibliography}{77}
\providecommand{\natexlab}[1]{#1}
\providecommand{\url}[1]{\texttt{#1}}
\expandafter\ifx\csname urlstyle\endcsname\relax
  \providecommand{\doi}[1]{doi: #1}\else
  \providecommand{\doi}{doi: \begingroup \urlstyle{rm}\Url}\fi

\bibitem[Alaoui and Mahoney(2015)]{alaoui2015fast}
A.~Alaoui and M.~W. Mahoney.
\newblock Fast randomized kernel ridge regression with statistical guarantees.
\newblock \emph{Advances in Neural Information Processing Systems}, 28, 2015.

\bibitem[Alvarez and Attouch(2001)]{Alvarez2001AnIP}
F.~Alvarez and H.~Attouch.
\newblock An inertial proximal method for maximal monotone operators via discretization of a nonlinear oscillator with damping.
\newblock \emph{Set-Valued Analysis}, 9:\penalty0 3--11, 2001.

\bibitem[Attouch and Bolte(2009)]{attouch2009convergence}
H.~Attouch and J.~Bolte.
\newblock On the convergence of the proximal algorithm for nonsmooth functions involving analytic features.
\newblock \emph{Mathematical Programming}, 116:\penalty0 5--16, 2009.

\bibitem[Attouch et~al.(2010)Attouch, Bolte, Redont, and Soubeyran]{attouch2010proximal}
H.~Attouch, J.~Bolte, P.~Redont, and A.~Soubeyran.
\newblock Proximal alternating minimization and projection methods for nonconvex problems: An approach based on the {K}urdyka-{L}ojasiewicz inequality.
\newblock \emph{Mathematics of Operations Research}, 35\penalty0 (2):\penalty0 438--457, 2010.

\bibitem[Attouch et~al.(2013)Attouch, Bolte, and Svaiter]{attouch2013convergence}
H.~Attouch, J.~Bolte, and B.~F. Svaiter.
\newblock Convergence of descent methods for semi-algebraic and tame problems: proximal algorithms, forward--backward splitting, and regularized {G}auss--{S}eidel methods.
\newblock \emph{Mathematical programming}, 137\penalty0 (1):\penalty0 91--129, 2013.

\bibitem[Bartels and Stewart(1972)]{bartels1972algorithm}
R.~H. Bartels and G.~W. Stewart.
\newblock {Algorithm 432 [C2]: solution of the matrix equation AX+ XB= C [F4]}.
\newblock \emph{Communications of the ACM}, 15\penalty0 (9):\penalty0 820--826, 1972.

\bibitem[Barzilai and Borwein(1988)]{barzilai1988two}
J.~Barzilai and J.~M. Borwein.
\newblock Two-point step size gradient methods.
\newblock \emph{IMA Journal of Numerical Analysis}, 8\penalty0 (1):\penalty0 141--148, 1988.

\bibitem[Bauschke and Combettes(2011)]{bauschke2011convex}
H.~H. Bauschke and P.~L. Combettes.
\newblock \emph{Convex Analysis and Monotone Operator Theory in Hilbert Spaces}.
\newblock Springer, 2011.

\bibitem[Beck and Teboulle(2009)]{Beck2009AFI}
A.~Beck and M.~Teboulle.
\newblock A fast iterative shrinkage-thresholding algorithm for linear inverse problems.
\newblock \emph{SIAM Journal on Imaging Sciences}, 2:\penalty0 183--202, 2009.

\bibitem[B{\"o}hning(1992)]{bohning1992multinomial}
D.~B{\"o}hning.
\newblock Multinomial logistic regression algorithm.
\newblock \emph{Annals of the Institute of Statistical Mathematics}, 44\penalty0 (1):\penalty0 197--200, 1992.

\bibitem[B{\"o}hning and Lindsay(1988)]{bohning1988monotonicity}
D.~B{\"o}hning and B.~G. Lindsay.
\newblock Monotonicity of quadratic-approximation algorithms.
\newblock \emph{Annals of the Institute of Statistical Mathematics}, 40\penalty0 (4):\penalty0 641--663, 1988.

\bibitem[Bolte et~al.(2007)Bolte, Daniilidis, and Lewis]{bolte2007lojasiewicz}
J.~Bolte, A.~Daniilidis, and A.~Lewis.
\newblock The {L}ojasiewicz inequality for nonsmooth subanalytic functions with applications to subgradient dynamical systems.
\newblock \emph{SIAM Journal on Optimization}, 17\penalty0 (4):\penalty0 1205--1223, 2007.

\bibitem[Borwein and Lewis(2006)]{jonathan2006convex}
J.~Borwein and A.~Lewis.
\newblock \emph{Convex Analysis and Nonlinear Optimization}.
\newblock Springer, 2006.

\bibitem[Canu and Smola(2006)]{canu2006kernel}
S.~Canu and A.~Smola.
\newblock Kernel methods and the exponential family.
\newblock \emph{Neurocomputing}, 69\penalty0 (7-9):\penalty0 714--720, 2006.

\bibitem[Caponnetto and De~Vito(2007)]{caponnetto2007optimal}
A.~Caponnetto and E.~De~Vito.
\newblock Optimal rates for the regularized least-squares algorithm.
\newblock \emph{Foundations of Computational Mathematics}, 7:\penalty0 331--368, 2007.

\bibitem[Doikov et~al.(2022)Doikov, Mishchenko, and Nesterov]{Doikov2022SuperUniversalRN}
N.~Doikov, K.~Mishchenko, and Y.~Nesterov.
\newblock Super-universal regularized newton method.
\newblock \emph{SIAM Journal on Optimization}, 34:\penalty0 27--56, 2022.

\bibitem[Fernandes et~al.(2021)Fernandes, Guerre, and Horta]{fernandes2021smoothing}
M.~Fernandes, E.~Guerre, and E.~Horta.
\newblock Smoothing quantile regressions.
\newblock \emph{Journal of Business \& Economic Statistics}, 39\penalty0 (1):\penalty0 338--357, 2021.

\bibitem[Friedman et~al.(2010)Friedman, Hastie, and Tibshirani]{friedman2010regularization}
J.~H. Friedman, T.~Hastie, and R.~Tibshirani.
\newblock Regularization paths for generalized linear models via coordinate descent.
\newblock \emph{Journal of Statistical Software}, 33:\penalty0 1--22, 2010.

\bibitem[Fu et~al.(2023)Fu, Chen, Liu, and Ye]{fu2023simplex}
S.~Fu, P.~Chen, Y.~Liu, and Z.~Ye.
\newblock Simplex-based multinomial logistic regression with diverging numbers of categories and covariate.
\newblock \emph{Statistica Sinica}, 33\penalty0 (4):\penalty0 2463--2493, 2023.

\bibitem[Hallee and Khomtchouk(2023)]{hallee2023machine}
L.~Hallee and B.~B. Khomtchouk.
\newblock Machine learning classifiers predict key genomic and evolutionary traits across the kingdoms of life.
\newblock \emph{Scientific Reports}, 13\penalty0 (1):\penalty0 2088, 2023.

\bibitem[Hastie et~al.(2005)Hastie, Tibshirani, Friedman, and Franklin]{hastie2005elements}
T.~Hastie, R.~Tibshirani, J.~Friedman, and J.~Franklin.
\newblock The elements of statistical learning: data mining, inference and prediction.
\newblock \emph{The Mathematical Intelligencer}, 27\penalty0 (2):\penalty0 83--85, 2005.

\bibitem[He et~al.(2023)He, Pan, Tan, and Zhou]{he2023smoothed}
X.~He, X.~Pan, K.~M. Tan, and W.-X. Zhou.
\newblock Smoothed quantile regression with large-scale inference.
\newblock \emph{Journal of Econometrics}, 232\penalty0 (2):\penalty0 367--388, 2023.

\bibitem[Heiser(1987)]{heiser1987correspondence}
W.~J. Heiser.
\newblock Correspondence analysis with least absolute residuals.
\newblock \emph{Computational Statistics \& Data Analysis}, 5\penalty0 (4):\penalty0 337--356, 1987.

\bibitem[Heng et~al.(2025)Heng, Zhou, and Lange]{heng2025tactics}
Q.~Heng, H.~Zhou, and K.~Lange.
\newblock Tactics for improving least squares estimation.
\newblock \emph{arXiv preprint arXiv:2501.02475}, 2025.

\bibitem[Hien et~al.(2023)Hien, Phan, and Gillis]{Hien2020AnIB}
L.~T.~K. Hien, D.~N. Phan, and N.~Gillis.
\newblock An inertial block majorization minimization framework for nonsmooth nonconvex optimization.
\newblock \emph{Journal of Machine Learning Research}, 24:\penalty0 1--41, 2023.

\bibitem[Hofmann et~al.(2008)Hofmann, Sch{\"o}lkopf, and Smola]{hofmann2008kernel}
T.~Hofmann, B.~Sch{\"o}lkopf, and A.~J. Smola.
\newblock Kernel methods in machine learning.
\newblock \emph{The Annals of Statistics}, 36\penalty0 (3):\penalty0 1171--1220, 2008.

\bibitem[Hunter and Lange(2004)]{hunter2004atutorial}
D.~R. Hunter and K.~Lange.
\newblock A tutorial on {MM} algorithms.
\newblock \emph{The American Statistician}, 58:\penalty0 30--37, 2004.

\bibitem[Jaakkola and Haussler(1999)]{jaakkola1999probabilistic}
T.~S. Jaakkola and D.~Haussler.
\newblock Probabilistic kernel regression models.
\newblock In \emph{Seventh International Workshop on Artificial Intelligence and Statistics}. PMLR, 1999.

\bibitem[Karsmakers et~al.(2007)Karsmakers, Pelckmans, and Suykens]{karsmakers2007multi}
P.~Karsmakers, K.~Pelckmans, and J.~A. Suykens.
\newblock Multi-class kernel logistic regression: a fixed-size implementation.
\newblock In \emph{2007 International Joint Conference on Neural Networks}, pages 1756--1761. IEEE, 2007.

\bibitem[Keerthi et~al.(2005)Keerthi, Duan, Shevade, and Poo]{keerthi2005fast}
S.~S. Keerthi, K.~Duan, S.~K. Shevade, and A.~N. Poo.
\newblock A fast dual algorithm for kernel logistic regression.
\newblock \emph{Machine learning}, 61:\penalty0 151--165, 2005.

\bibitem[Khanh~Hien et~al.(2022)Khanh~Hien, Phan, Gillis, Ahookhosh, and Patrinos]{Hien2021BlockBM}
L.~T. Khanh~Hien, D.~N. Phan, N.~Gillis, M.~Ahookhosh, and P.~Patrinos.
\newblock Block bregman majorization minimization with extrapolation.
\newblock \emph{SIAM Journal on Mathematics of Data Science}, 4\penalty0 (1):\penalty0 1--25, 2022.

\bibitem[Kiers(1997)]{kiers1997weighted}
H.~A. Kiers.
\newblock Weighted least squares fitting using ordinary least squares algorithms.
\newblock \emph{Psychometrika}, 62:\penalty0 251--266, 1997.

\bibitem[Koenker(2005)]{koenker2005quantile}
R.~Koenker.
\newblock \emph{Quantile regression}, volume~38.
\newblock Cambridge University Press, 2005.

\bibitem[Koenker and Bassett(1978)]{koenker1978regression}
R.~Koenker and G.~Bassett.
\newblock Regression quantiles.
\newblock \emph{Econometrica}, 46\penalty0 (1):\penalty0 33--50, 1978.
\newblock ISSN 00129682, 14680262.

\bibitem[Lange et~al.(2000)Lange, Hunter, and Yang]{lange2000optimization}
K.~Lange, D.~R. Hunter, and I.~Yang.
\newblock Optimization transfer using surrogate objective functions.
\newblock \emph{Journal of Computational and Graphical Statistics}, 9:\penalty0 1--20, 2000.

\bibitem[Lee et~al.(2014)Lee, Sun, and Saunders]{lee2014proximal}
J.~D. Lee, Y.~Sun, and M.~A. Saunders.
\newblock Proximal newton-type methods for minimizing composite functions.
\newblock \emph{SIAM Journal on Optimization}, 24\penalty0 (3):\penalty0 1420--1443, 2014.

\bibitem[Li and Pong(2018)]{li2018calculus}
G.~Li and T.~K. Pong.
\newblock Calculus of the exponent of {K}urdyka--{L}ojasiewicz inequality and its applications to linear convergence of first-order methods.
\newblock \emph{Foundations of computational mathematics}, 18\penalty0 (5):\penalty0 1199--1232, 2018.

\bibitem[Li et~al.(2007)Li, Liu, and Zhu]{li2007quantile}
Y.~Li, Y.~Liu, and J.~Zhu.
\newblock Quantile regression in reproducing kernel hilbert spaces.
\newblock \emph{Journal of the American Statistical Association}, 102\penalty0 (477):\penalty0 255--268, 2007.

\bibitem[Lian et~al.(2021)Lian, Liu, and Fan]{lian2021distributed}
H.~Lian, J.~Liu, and Z.~Fan.
\newblock Distributed learning for sketched kernel regression.
\newblock \emph{Neural Networks}, 143:\penalty0 368--376, 2021.

\bibitem[Liu et~al.(2025)Liu, Gao, and Lian]{liu2025kernel}
J.~Liu, J.~Gao, and H.~Lian.
\newblock Kernel-based regularized learning with random projections: Beyond least squares.
\newblock \emph{SIAM Journal on Mathematics of Data Science}, 7\penalty0 (1):\penalty0 253--273, 2025.

\bibitem[Lorenz and Pock(2014)]{Lorenz2014AnIF}
D.~A. Lorenz and T.~Pock.
\newblock An inertial forward-backward algorithm for monotone inclusions.
\newblock \emph{Journal of Mathematical Imaging and Vision}, 51:\penalty0 311--325, 2014.

\bibitem[Lyu and Li(2025)]{Lyu2025BlockMW}
H.~Lyu and Y.~Li.
\newblock Block majorization-minimization with diminishing radius for constrained nonsmooth nonconvex optimization.
\newblock \emph{SIAM Journal on Optimization}, 35:\penalty0 842--871, 2025.

\bibitem[Maing{\'e}(2008)]{Maing2008ConvergenceTF}
P.-E. Maing{\'e}.
\newblock Convergence theorems for inertial {KM}-type algorithms.
\newblock \emph{Journal of Computational and Applied Mathematics}, 219:\penalty0 223--236, 2008.

\bibitem[Malitsky and Mishchenko(2020)]{malitsky2020adaptive}
Y.~Malitsky and K.~Mishchenko.
\newblock Adaptive gradient descent without descent.
\newblock In \emph{International Conference on Machine Learning}, pages 6702--6712. PMLR, 2020.

\bibitem[Man et~al.(2024)Man, Pan, Tan, and Zhou]{man2024unified}
R.~Man, X.~Pan, K.~M. Tan, and W.-X. Zhou.
\newblock A unified algorithm for penalized convolution smoothed quantile regression.
\newblock \emph{Journal of Computational and Graphical Statistics}, 33\penalty0 (2):\penalty0 625--637, 2024.

\bibitem[Maul{\'e}n et~al.(2024)Maul{\'e}n, Fierro, and Peypouquet]{maulen2024inertial}
J.~J. Maul{\'e}n, I.~Fierro, and J.~Peypouquet.
\newblock {Inertial {K}rasnoselskii-{M}ann Iterations}.
\newblock \emph{Set-Valued and Variational Analysis}, 32\penalty0 (2):\penalty0 10, 2024.

\bibitem[Mendelson and Neeman(2010)]{mendelson2010regularization}
S.~Mendelson and J.~Neeman.
\newblock Regularization in kernel learning.
\newblock \emph{Annals of Statistics}, 38\penalty0 (1):\penalty0 526--565, 2010.

\bibitem[Micchelli et~al.(2005)Micchelli, Pontil, and Bartlett]{micchelli2005learning}
C.~A. Micchelli, M.~Pontil, and P.~Bartlett.
\newblock Learning the kernel function via regularization.
\newblock \emph{Journal of machine learning research}, 6\penalty0 (7), 2005.

\bibitem[Moudafi and Oliny(2003)]{Moudafi2003ConvergenceOA}
A.~Moudafi and M.~Oliny.
\newblock Convergence of a splitting inertial proximal method for monotone operators.
\newblock \emph{Journal of Computational and Applied Mathematics}, 155:\penalty0 447--454, 2003.

\bibitem[Nesterov(1983)]{Nesterov1983AMF}
Y.~Nesterov.
\newblock A method for solving the convex programming problem with convergence rate $o(1/k^2)$.
\newblock \emph{Proceedings of the USSR Academy of Sciences}, 269:\penalty0 543--547, 1983.

\bibitem[Nesterov(2005)]{Nesterov2005SmoothMO}
Y.~Nesterov.
\newblock Smooth minimization of non-smooth functions.
\newblock \emph{Mathematical Programming}, 103:\penalty0 127--152, 2005.

\bibitem[Nesterov(2014)]{Nesterov2014IntroductoryLO}
Y.~Nesterov.
\newblock Introductory lectures on convex optimization - a basic course.
\newblock In \emph{Applied Optimization}, 2014.

\bibitem[Nesterov and Polyak(2006)]{Nesterov2006CubicRO}
Y.~Nesterov and B.~Polyak.
\newblock Cubic regularization of newton method and its global performance.
\newblock \emph{Mathematical Programming}, 108:\penalty0 177--205, 2006.

\bibitem[O'Donoghue and Cand{\`e}s(2012)]{ODonoghue2012AdaptiveRF}
B.~O'Donoghue and E.~J. Cand{\`e}s.
\newblock Adaptive restart for accelerated gradient schemes.
\newblock \emph{Foundations of Computational Mathematics}, 15:\penalty0 715 -- 732, 2012.

\bibitem[Polyak(1964)]{Polyak1964SomeMO}
B.~Polyak.
\newblock Some methods of speeding up the convergence of iteration methods.
\newblock \emph{Ussr Computational Mathematics and Mathematical Physics}, 4:\penalty0 1--17, 1964.

\bibitem[Rahimi and Recht(2007)]{rahimi2007random}
A.~Rahimi and B.~Recht.
\newblock Random features for large-scale kernel machines.
\newblock \emph{Advances in Neural Information Processing Systems}, 20:\penalty0 1177--1184, 2007.

\bibitem[Robini et~al.(2024)Robini, Wang, and Zhu]{Robini2024TheAO}
M.~C. Robini, L.~Wang, and Y.~M. Zhu.
\newblock The appeals of quadratic majorization-minimization.
\newblock \emph{Journal of Global Optimization}, 89:\penalty0 509--558, 2024.

\bibitem[Rockafellar(1997)]{rockafellar1997convex}
R.~T. Rockafellar.
\newblock \emph{Convex analysis}.
\newblock Princeton University Press, 1997.

\bibitem[Rudi and Rosasco(2017)]{rudi2017generalization}
A.~Rudi and L.~Rosasco.
\newblock Generalization properties of learning with random features.
\newblock \emph{Advances in Neural Information Processing Systems}, 30:\penalty0 3215--3225, 2017.

\bibitem[Rudi et~al.(2015)Rudi, Camoriano, and Rosasco]{rudi2015less}
A.~Rudi, R.~Camoriano, and L.~Rosasco.
\newblock Less is more: Nystr{\"o}m computational regularization.
\newblock \emph{Advances in Neural Information Processing Systems}, 28:\penalty0 1657--1665, 2015.

\bibitem[Rybak et~al.(2025)Rybak, Battey, and Zhou]{JMLR:v26:23-1581}
J.~Rybak, H.~Battey, and W.-X. Zhou.
\newblock On inference for the support vector machine.
\newblock \emph{Journal of Machine Learning Research}, 26\penalty0 (85):\penalty0 1--54, 2025.

\bibitem[Sch{\"o}lkopf and Smola(2002)]{scholkopf2002learning}
B.~Sch{\"o}lkopf and A.~J. Smola.
\newblock \emph{Learning with kernels: support vector machines, regularization, optimization, and beyond}.
\newblock MIT press, 2002.

\bibitem[Steinwart and Christmann(2008)]{steinwart2008support}
I.~Steinwart and A.~Christmann.
\newblock \emph{Support vector machines}.
\newblock Springer Science \& Business Media, 2008.

\bibitem[Takeuchi et~al.(2006)Takeuchi, Le, Sears, and Smola]{takeuchi2006nonparametric}
I.~Takeuchi, Q.~V. Le, T.~D. Sears, and A.~J. Smola.
\newblock Nonparametric quantile estimation.
\newblock \emph{Journal of Machine Learning Research}, 7:\penalty0 1231--1264, 2006.

\bibitem[Vapnik(1999)]{vapnik1999nature}
V.~Vapnik.
\newblock \emph{The nature of statistical learning theory}.
\newblock Springer Science \& Business Media, 1999.

\bibitem[Wahba(1990)]{wahba1990spline}
G.~Wahba.
\newblock \emph{Spline models for observational data}.
\newblock SIAM, 1990.

\bibitem[Wang et~al.(2022)Wang, Zhou, Gu, and Zou]{wang2022density}
B.~Wang, L.~Zhou, Y.~Gu, and H.~Zou.
\newblock Density-convoluted support vector machines for high-dimensional classification.
\newblock \emph{IEEE Transactions on Information Theory}, 69\penalty0 (4):\penalty0 2523--2536, 2022.

\bibitem[Wang and Feng(2024)]{wang2024optimal}
C.~Wang and X.~Feng.
\newblock Optimal kernel quantile learning with random features.
\newblock In \emph{International Conference on Machine Learning}, pages 50419--50452. PMLR, 2024.

\bibitem[Wang et~al.(2024)Wang, Li, Zhang, Feng, and He]{wang2024communication}
C.~Wang, T.~Li, X.~Zhang, X.~Feng, and X.~He.
\newblock Communication-efficient nonparametric quantile regression via random features.
\newblock \emph{Journal of Computational and Graphical Statistics}, 33\penalty0 (4):\penalty0 1175--1184, 2024.

\bibitem[Wen et~al.(2017)Wen, Chen, and Pong]{wen2017linear}
B.~Wen, X.~Chen, and T.~K. Pong.
\newblock Linear convergence of proximal gradient algorithm with extrapolation for a class of nonconvex nonsmooth minimization problems.
\newblock \emph{SIAM Journal on Optimization}, 27\penalty0 (1):\penalty0 124--145, 2017.

\bibitem[Wen et~al.(2018)Wen, Chen, and Pong]{Wen2018APD}
B.~Wen, X.~Chen, and T.~K. Pong.
\newblock A proximal difference-of-convex algorithm with extrapolation.
\newblock \emph{Computational Optimization and Applications}, 69:\penalty0 297 -- 324, 2018.

\bibitem[Williams and Seeger(2000)]{williams2000using}
C.~Williams and M.~Seeger.
\newblock Using the {N}ystr{\"o}m method to speed up kernel machines.
\newblock \emph{Advances in Neural Information Processing Systems}, 13:\penalty0 682--688, 2000.

\bibitem[Xu and Lange(2022)]{xu2022proximal}
J.~Xu and K.~Lange.
\newblock A proximal distance algorithm for likelihood-based sparse covariance estimation.
\newblock \emph{Biometrika}, 109\penalty0 (4):\penalty0 1047--1066, 2022.

\bibitem[Xu and Yin(2013)]{Xu2013ABC}
Y.~Xu and W.~Yin.
\newblock A block coordinate descent method for regularized multiconvex optimization with applications to nonnegative tensor factorization and completion.
\newblock \emph{SIAM Journal on Imaging Sciences.}, 6:\penalty0 1758--1789, 2013.

\bibitem[Yan et~al.(2023)Yan, Wang, and Zhang]{JMLR:v24:22-1217}
Y.~Yan, X.~Wang, and R.~Zhang.
\newblock Confidence intervals and hypothesis testing for high-dimensional quantile regression: Convolution smoothing and debiasing.
\newblock \emph{Journal of Machine Learning Research}, 24\penalty0 (245):\penalty0 1--49, 2023.

\bibitem[Yang et~al.(2017)Yang, Pilanci, and Wainwright]{yang2017randomized}
Y.~Yang, M.~Pilanci, and M.~J. Wainwright.
\newblock Randomized sketches for kernels: Fast and optimal nonparametric regression.
\newblock \emph{The Annals of Statistics}, 45\penalty0 (3):\penalty0 991--1023, 2017.

\bibitem[Zhu and Hastie(2005)]{zhu2005kernel}
J.~Zhu and T.~Hastie.
\newblock Kernel logistic regression and the import vector machine.
\newblock \emph{Journal of Computational and Graphical Statistics}, 14\penalty0 (1):\penalty0 185--205, 2005.

\end{thebibliography}

\end{document}